\renewcommand{\paragraph}[1]{\noindent\textbf{#1}}
\theoremstyle{plain}
\newtheorem{theorem}{Theorem}[section]
\newtheorem{lemma}[theorem]{Lemma}
\theoremstyle{definition}
\newtheorem{definition}[theorem]{Definition}
\theoremstyle{remark}
\DeclareMathOperator*{\argmax}{arg\,max}
\newcommand{\Pb}[1]{{\mathbb{P}}\left(#1\right) }
\newcommand{\EEb}[2]{{\mathbb E}_{#1}\left[#2\right] }
\providecommand{\cc}{\mathbf{c}}
\providecommand{\dd}{\mathbf{d}}
\providecommand{\xx}{\mathbf{x}}
\providecommand{\yy}{\mathbf{y}}
\providecommand{\zz}{\mathbf{z}}
\providecommand{\cC}{\mathcal{C}}
\providecommand{\cG}{\mathcal{G}}
\providecommand{\cI}{\mathcal{I}}
\providecommand{\cO}{\mathcal{O}}
\providecommand{\cS}{\mathcal{S}}
\providecommand{\cX}{\mathcal{X}}
\providecommand{\R}{\mathbb{R}} 
\providecommand{\N}{\mathbb{N}} 
\providecommand{\mA}{\mathbf{A}}
\providecommand{\mB}{\mathbf{B}}
\providecommand{\mC}{\mathbf{C}}
\providecommand{\mD}{\mathbf{D}}
\providecommand{\mE}{\mathbf{E}}
\providecommand{\mL}{\mathbf{L}}
\providecommand{\mP}{\mathbf{P}}
\providecommand{\mV}{\mathbf{V}}
\providecommand{\mX}{\mathbf{X}}
\providecommand{\mZ}{\mathbf{Z}}
\newcommand{\abs}[1]{\left\lvert#1\right\rvert}
\newcommand{\zdd}[1]{\zz_{\text{d},#1}}
\newcommand{\node}[1]{\mathsf{#1}}
\newcommand{\set}[1]{\mathbf{#1}}
\newcommand{\setset}[1]{\mathcal{#1}}
\newcommand{\setsetset}[1]{\mathbb{#1}}
\newcommand{\graph}{\mathcal{G}}
\newcommand{\graphp}{\mathcal{G'}}
\newcommand{\graphpp}{\mathcal{G''}}
\newcommand{\parents}{\text{Pa}_{\graph}}
\newcommand{\parentsp}{\text{Pa}_{\graphp}}
\newcommand{\purechildren}{\text{PCh}_{\graph}}
\newcommand{\purechildrenp}{\text{PCh}_{\graphp}}
\newcommand\DoToC{%
  \startcontents
  \printcontents{}{1}{\hrulefill\vskip0pt}
  \vskip0pt \noindent\hrulefill
  }
\newcommand{\ours}[1]{\textcolor{blue}{ {#1}}}
\title{ Learning Discrete Concepts in Latent Hierarchical Models }
\author[1]{\textbf{Lingjing Kong}}
\author[1,2]{\textbf{Guangyi Chen}}
\author[3]{\textbf{Biwei Huang}}
\author[1,2]{\textbf{Eric P. Xing}}
\author[1]{\textbf{Yuejie Chi}}
\author[1,2]{\textbf{Kun Zhang}}
\affil[1]{Carnegie Mellon University}
\affil[2]{Mohamed bin Zayed University of Artificial Intelligence}
\affil[3]{University of California San Diego}
\begin{document}

\maketitle

\begin{abstract}
    Learning concepts from natural high-dimensional data (e.g., images) holds potential in building human-aligned and interpretable machine learning models.
    Despite its encouraging prospect, formalization and theoretical insights into this crucial task are still lacking.
    In this work, we formalize concepts as discrete latent causal variables that are related via a hierarchical causal model that encodes different abstraction levels of concepts embedded in high-dimensional data (e.g., a dog breed and its eye shapes in natural images).
    We formulate conditions to facilitate the identification of the proposed causal model, which reveals when learning such concepts from unsupervised data is possible.
    Our conditions permit complex causal hierarchical structures beyond latent trees and multi-level directed acyclic graphs in prior work and can handle high-dimensional, continuous observed variables, which is well-suited for unstructured data modalities such as images.
    We substantiate our theoretical claims with synthetic data experiments.
    Further, we discuss our theory's implications for understanding the underlying mechanisms of latent diffusion models and provide corresponding empirical evidence for our theoretical insights.
\end{abstract}

\section{Introduction}

Learning semantic discrete concepts from unstructured high-dimensional data, such as images and text, is crucial to building machine learning models with interpretability, transferability, and compositionality, as empirically demonstrated by extensive existing work~\citep{gal2022image,jahanian2020steerability,härkönen2020ganspace,shen2020interpreting,wu2020stylespace,ruiz2023dreambooth,burgess2019monet,locatello2020objectcentric,du2021unsupervised,du2021unsupervised3d,liu2023unsupervised}.
Despite these empirical successes, limited work is devoted to the theoretical front: the notions of concepts and their relations are often heuristically defined. For example, concept bottleneck models~\cite{koh2020concept,zarlenga2022concept} use human-specified annotations and recent methods~\cite{oikarinen2022clip,moayeri2023text,moayeri2023text2concept} employ pretrained multimodal models like CLIP~\cite{radford2021learning} to explain features with neural language.
This lack of rigorous characterization impedes a deeper understanding of this task and the development of principled learning algorithms. 

In natural images, the degree/extent of certain attributes (e.g., position, lighting) is often presented in a continuous form and main concepts of practical concern are often discrete in nature (e.g., object classes and shapes).
Moreover, these concepts are often statistically dependent, with the dependence potentially resulting from some higher-level concepts.
For example, the correlation between a specific dog's eye features and fur features may arise from a high-level concept for breeds (Figure~\ref{fig:teaser}).
Similarly, even higher-level concepts may exist and induce dependence between high-level concepts, giving rise to a hierarchical model that characterizes all discrete concepts at different abstraction levels underlying high-dimensional data distributions. 
In this work, we focus on concepts that can be defined as discrete latent variables and related via a hierarchical model.
Under this formalization, the query on the recoverability of concepts and their relations from unstructured high-dimensional distribution (e.g., images) amounts to the following causal identification problem:
\vspace{-0.1cm}
\begin{center}
    \textit{Under what conditions is the discrete latent hierarchical causal model identifiable from high-dimensional continuous data distributions?}
\end{center}
\vspace{-0.2cm}


Identification theory for latent hierarchical causal models has been a topic of sustained interest.
Recent work~\citep{xie2022identification,huang2022latent,dong2023versatile} investigates identification conditions of latent hierarchical structures under the assumption that the latent variables are continuous and influence each other through linear functions.
The linearity assumption fails to handle the general nonlinear influences among discrete variables.
Another line of work focuses on discrete latent models.
\citet{Pearl88,choi2011learning} study latent trees with discrete observed variables.
The tree structure can be over-simplified to capture the complex interactions among concepts from distinct abstract levels (e.g., multiple high-level concepts can jointly influence a lower-level one).
\citet{gu2023bayesian} assume that binary latent variables can be exactly grouped into levels and causal edges often appear between adjacent levels, which can also be restrictive.
Moreover, these papers assume observed variables are discrete, falling short of modeling the continuous distribution like images as the observed variables.
Similar to our goal, \citet{kivva2021learning} show the discrete latent variables adjacent to the potentially continuous observed variables can be identified.
However, their theory assumes the absence of higher-level latent variables and thus cannot handle latent hierarchical structures.


In this work, we show identification guarantees for the discrete hierarchical model under mild conditions on the generating function and causal structures. 
Specifically, we first show that when continuous observed variables (i.e., the leaves of the hierarchy) preserve the information of their adjacent discrete latent variables (i.e., direct parents in the graph), we can extract the discrete information from the continuous observations and further identify each discrete variable up to permutation indeterminacy.
Given these ``low-level'' discrete latent variables, we establish graphical conditions to identify the discrete hierarchical model that fully explains the statistical dependence among the identified ``low-level'' discrete latent variables.
Our conditions permit multiple paths within latent variable pairs and flexible locations of latent variables 
, encompassing a large family of graph structures including as special cases non-hierarchical structures~\citep{kivva2021learning}, trees~\citep{Pearl88,choi2011learning,drton2015marginal,zhang2004hierarchical} and multi-level directed acyclic graphs (DAGs)~\citep{gu2023bayesian,anandkumar13learning} (see example graphs in Figure~\ref{fig:comparison_graphs}).
Taken together, our work establishes theoretical results for identifying the discrete latent hierarchical model governing high-dimensional continuous observed variables, which to the best of our knowledge is the first effort in this direction.
We corroborate our theoretical results with synthetic data experiments.


As an implication of our theorems, we discuss a novel interpretation of the state-of-the-art latent diffusion (LD) models~\citep{rombach2021highresolution} through the lens of a hierarchical concept model.
We interpret the denoising objective at different noise levels as estimating latent concept embeddings at corresponding hierarchical levels in the causal model, where a higher noise level corresponds to high-level concepts.
This perspective explains and unifies these seemingly orthogonal threads of empirical insights and gives rise to insights for potential empirical improvements.
We deduce several insights from our theoretical results and verify them empirically.
In summary, our main contributions are as follows.
\begin{itemize}[leftmargin=1em, topsep=0.5pt, partopsep=0pt, itemsep=-0.0em]
    \setlength\itemsep{-0.0em}
    \item We formalize the framework of learning concepts from high-dimensional data as a latent-variable identification problem, capturing concepts at different abstraction levels and their interactions. 
    
    \item We present identification theories for the discrete latent hierarchical model. 
    To the best of our knowledge, our result is the first to address discrete latent hierarchical model beyond trees~\citep{Pearl88,zhang2004hierarchical} and multi-level DAGs~\citep{gu2023bayesian} while capable of handling high-dimensional observed variables.
    \item We provide an interpretation of latent diffusion models as hierarchical concept learners. 
    We supply empirical results to illustrate our interpretation and showcase its potential benefits in practice.
\end{itemize}

\section{Related Work} \label{sec:related_work}

\paragraph{Concept learning.}
In recent years, a significant strand of research has focused on employing labeled data to learn concepts in generative models' latent space for image editing and manipulation~\citep{gal2022image,jahanian2020steerability,härkönen2020ganspace,shen2020interpreting,wu2020stylespace,ruiz2023dreambooth}.
Concurrently, another independent research trajectory has been exploring unsupervised concept discovery and its potential to learn more compositional and transferable models~\citep{burgess2019monet, locatello2020objectcentric,du2021unsupervised,du2021unsupervised3d,liu2023unsupervised}.
Concurrently, a plethora of work has been dedicated to extracting interpretable concepts from high-dimensional data such as images. 
Concept-bottleneck~\citep{koh2020concept} first predicts a set of human-annotated concepts as an intermediate stage and then predicts the task labels from these intermediate concepts. This paradigm has attracted a large amount of follow-up work~\citep{zarlenga2022concept,yuksekgonulpost,kim2023probabilistic,havasi2022addressing,shang2024incremental,chauhan2023interactive}. 
A recent surge of pre-trained multimodal models (e.g., CLIP~\citep{radford2021learning}) can explain the image concepts through text directly~\citep{oikarinen2022clip,moayeri2023text,moayeri2023text2concept}. 

\paragraph{Latent variable identification.}
Complex real-world data distributions often possess a hierarchical structure among their underlying latent variables.
The identification conditions of latent hierarchical structures are investigated under the assumption that the latent variables are continuous and influence each other through linear functions~\citep{xie2022identification,huang2022latent,dong2023versatile} and nonlinear functions~\citep{kong2023identification}.
In addition, prior work~\citep{Pearl88,zhang2004hierarchical,choi2011learning,gu2023bayesian} studies fully discrete cases and thus falls short of modeling the continuous observed variables like images.
To identify latent variables under nonlinear transformations, a line of work~\citep{khemakhem2020variational,khemakhem2020icebeem,hyvarinen2016unsupervised,hyvarinen2019nonlinear} assumes the availability of auxiliary information (e.g., domain/class labels) and that the latent variables' probability density functions have sufficiently different derivatives over domains/classes.
Another line of studies~\citep{brady2023provably,lachapelle2023additive} refrains from the auxiliary information by assuming sparsity and mechanistic independence, disregarding causal structures among the latent variables.

Please refer to Section~\ref{app:related_work} for more extensive related work and discussion.

\section{Discrete Hierarchical Models} \label{sec:formulation}

\paragraph{Data-generating process.}
We formulate the data-generating process as the following latent-variable model. Let $ \xx $ denote the continuous observed variables $ \xx:= [ x_{1}, \cdots, x_{n} ] \in \cX \subset \R^{d_{\xx}} $ which represents the high-dimension data we work with in practice (e.g., images). \footnote{
    We use the unbolded symbol $x_{i}$ to distinguish each observed variable $x_{i}$ from the collection $\xx$. 
    Our theory allows $x_{i}$ to be multi-dimensional.
}
Let $\dd:= [ d_{1}, \cdots, d_{n_{d}} ]$ be discrete latent variables that are direct parents to $\xx$ (as shown in Figure~\ref{fig:teaser}(b)) and take on values from finite sets, i.e., $d_{i} \in \Omega_{i}^{(d)}$ for all $i \in [d_{i}]$ and $ 2 \le \abs{\Omega_{i}^{(d)}} < \infty$. 
We denote the joint domain as $\Omega^{(d)} := \Omega_{1}^{(d)} \times \cdots \times \Omega_{n_{d}}^{(d)}$.
These discrete variables are potentially related to each other causally (e.g., $d_{4}$ and $d_{5}$ in Figure~\ref{fig:teaser}(c)) or via higher-level latent variables (e.g., $d_{1}$ and $d_{2}$ in Figure~\ref{fig:teaser}(c)).
Let $\cc:= [ c_{1}, \cdots, c_{n_{c}} ] \in \cC \subset \R^{d_{c}} $ be continuous latent variables that represent the continuous information conveyed in observed variables $\xx$.
The generating process is defined in Equation~\ref{eq:discrete_generating} and illustrated in Figure~\ref{fig:teaser}(a).
\setlength{\abovedisplayskip}{-0.1cm}
\setlength{\belowdisplayskip}{0.1 cm}
\begin{align} \label{eq:discrete_generating}
    \xx := g( \dd, \cc ),
\end{align}
where we denote the generating function with $g: [\dd, \cc] \mapsto \xx$.
We denote the resultant bipartite graph from $[\dd, \cc]$ to $\xx$ as $\Gamma$.
In this context of image generation, the discrete subspace $\dd$ gives a description of concepts present in the image $\xx$ (e.g., a dog's appearance, background objects), and the continuous subspace $\cc$ controls extents/degrees of specific attributes (e.g., sizes, lighting, and angles).


\begin{wrapfigure}{r}{9cm}
    \centering
    \vspace{-0.4cm}
    \includegraphics[width=9cm]{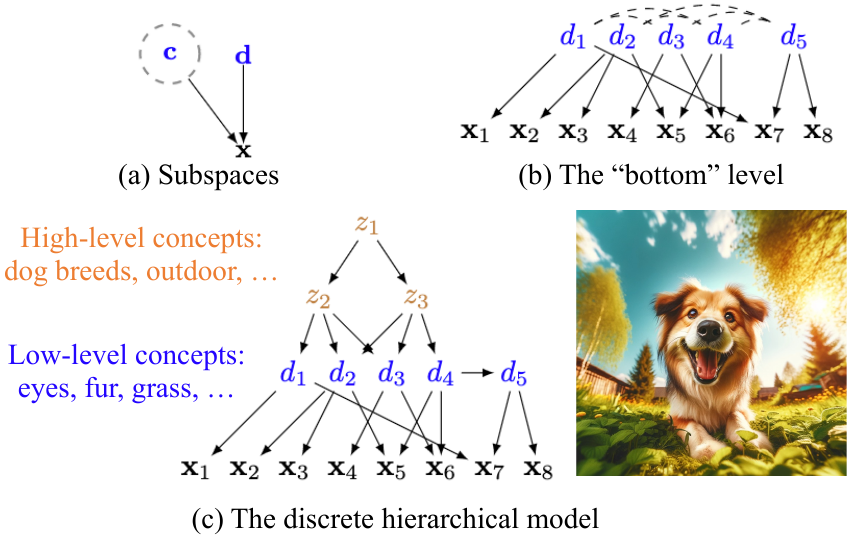}
    \vspace{-0.45cm}
    \caption{\footnotesize
        \textbf{Latent hierarchical graphs.}
        The dashed circle in (a) indicates that the continuous variable $\cc$ can be viewed as an exogenous variable.
        Dashed edges in (b) indicate potential statistical dependence.
    }
    \label{fig:teaser}
    \vspace{-0.75cm}
\end{wrapfigure}
\paragraph{Discrete hierarchical models.}
As discussed above, discrete variables $  d_{1}, \dots, d_{n_{d}} $ represent distinct concepts that may be dependent either causally or purely statistically via higher-level concepts, as visualized in Figure~\ref{fig:teaser}(c).
For instance, the dog's eye features and nose features are dependent, which a higher-level concept ``breeds'' could explain.
We denote such higher-level latent discrete variables as $ \zz:= [ z_{1}, \cdots, z_{n_{z}} ]$, where $z_{i} \in \Omega_{i}^{(z)}$ for all $i \in [z_{i}]$ and $ 2 \le \abs{\Omega_{i}^{(z)}} < \infty$ and $\Omega^{(z)} := \Omega_{1}^{(z)} \times \cdots \times \Omega_{n_{z}}^{(z)}$.
Graphically, these variables $\zz$ are not directly adjacent to observed variables $\xx$ (Figure~\ref{fig:teaser}(c)).
High-level discrete variables $\zz$ may constitute a hierarchical structure until the dependence in the system is fully explained.
Since the discrete variables encode major semantic concepts in the data, this work primarily concerns discrete variables $\dd$ and its underlying causal structure.
The continuous subspace $\cc$ can be viewed as exogenous variables and is often omitted in the causal graph (e.g., Figure~\ref{fig:teaser}(b)).
We leave identifying continuous attributes in $\cc$ as future work.

Given this, we define the discrete hierarchical model as follows.
The discrete hierarchical model (Figure~\ref{fig:teaser}(c)) $\cG:= (\mE, \mV)$ is a DAG that comprises discrete latent variables $ d_{1}, \cdots, d_{n_{d}}, z_{1}, \cdots, z_{n_{z}} $.
We denote that directed edge set with $\mE$ and the collection of all variables with $\mV:= \{\mD, \mZ \}$, where $\mD$ and $\mZ$ are vectors $\dd$ and $\zz$ in a set form and all leaf variables in $\cG$ belong to $\mD$.
We assume the distribution over all variables $\mV$ respects the Markov property with respect to the graph $\cG$.
We denote all parents and children of a variable with $ \text{Ch}(\cdot) $ and $\text{Pa} (\cdot) $ respectively and define the neighbors as $ \text{ne} (\cdot):= \text{Ch}(\cdot) \cup \text{Pa}(\cdot) $. 
We say a variable set $\mA$ are pure children of $\mB$, iff $\parents(\mA) = \cup_{A_i \in \mA} \parents(A_i) = \mB$ and $\mA \cap \mB=\emptyset$.
As shown in Figure~\ref{fig:teaser}(c), $x_{1}$ is a pure child of $d_{1}$.

\paragraph{Objectives.}
Formally, given only the observed distribution $p(\xx)$, we aim to:
\setlength{\abovedisplayskip}{0pt}
\setlength{\belowdisplayskip}{0pt}
\begin{enumerate}[leftmargin=2em,topsep=0.5pt,itemsep=-0.0em]
    \item \label{obj:component} identify discrete variables $\dd$ and the bipartite graph $\Gamma$;
    \item \label{obj:hierarchical} identify the hierarchical causal structure $\cG$.
\end{enumerate}

\section{Identification of Discrete Latent Hierarchical Models} \label{sec:theory}

We present our theoretical results on the identifiability of discrete latent variables $\dd$ and the bipartite graph $\Gamma$ in Section~\ref{subsec:discrete_component} (i.e., Objective~\ref{obj:component}) and the hierarchical model $\cG$ in Section~\ref{subsec:hierarchy_identification} (i.e., Objective~\ref{obj:hierarchical}).

\paragraph{Additional notations.}
We denote the set containing components of $\xx$ with $\mX$, the set of all variables with $\mV^{*}:= \mV \cup \mX$, the entire edge set with $ \mE^{*} := \mE \cup \Gamma $, and the entire causal model with $\cG^{*} := ( \mV^{*}, \mE^{*} )$.
As the true generating process involves $\dd$, $\cc$, $g$, $\Gamma$, and $\cG$ (defined in Section~\ref{sec:formulation}), we define their statistical estimates with $\hat{\dd}$, $\hat{\cc}$, $\hat{g}$, and $\hat{\Gamma}$ through maximum likelihood estimation over the full population $p(\xx)$ while respecting conditions on the true generating process.
We use $\abs{ \text{Supp} (\mL) }$ for the cardinality of a discrete variable set $\mL$'s support (all joint states) and $\mP_{\mA, {\mB}}$ for the joint probability table whose two dimensions are the states of discrete variable sets $\mA$ and $\mB$ respectively.

\subsection{General Conditions for Discrete Latent Models}

It is well known that causal structures cannot be identified without proper assumptions.
For instance, one may merge two adjacent discrete variables $d_{1} \in \Omega^{(d)}_{1} $ and $d_{2} \in \Omega^{(d)}_{2}$ into a single variable $\tilde{d}\in \Omega^{(d)}_{1} \times \Omega^{(d)}_{2}$ while preserving the observed distribution $p(\xx)$.
We introduce the following basic conditions on the discrete latent model to eliminate such ill-posed situations.


\setlength{\abovedisplayskip}{0pt}
\setlength{\belowdisplayskip}{0pt}
\begin{restatable}[General Latent Model Conditions]{condition}{basicconditions} \label{cond:basic_model} {\ }
    \begin{enumerate}[label=\roman*,leftmargin=2em, topsep=0.5pt, partopsep=0pt, itemsep=-0.0em]
    \setlength\itemsep{-0.0em}
    
        \item \label{asmp:nondegeneracy} [Non-degeneracy]: $ \Pb{ \dd = k_{1}, \zz = k_{2}} > 0 $, for all $(k_1, k_{2}) \in \Omega^{(d)} \times \Omega^{(z)}$; for all variable $ v \in \mV^{*} $, $ \Pb{ v | \text{Pa}(v) = k_{1} } \neq \Pb{ v | \text{Pa}(v) = k_{2} }$ if $k_{1}\neq k_{2}$.

        \item \label{asmp:twins} [No-twins]: Distinct latent variables have distinct neighbors $\text{ne}(v_{1}) \neq \text{ne}(v_{2})$, if $v_{1} \neq v_{2} \in \mV$.
        
        \item \label{asmp:maximality} [Maximality]: There is no DAG $ \tilde{\cG}^{*} := ( \tilde{\mV}^{*}, \tilde{\mE}^{*}) $ resulting from splitting a latent variable in $\cG^{*}$ (i.e., turning $z_{i}$ into $\tilde{z}_{i,1}$ and $\tilde{z}_{i,2}$ with identical neighbors and cardinality $ |\Omega^{z}_{i} | = | \tilde{\Omega}^{z}_{i,1} | + | \tilde{\Omega}^{z}_{i,2} | $ ), such that $\Pb{ \tilde{\mV}^{*} }$ is Markov w.r.t. $\tilde{\cG}^{*}$ and $\tilde{\cG}^{*}$ satisfies \ref{asmp:twins}.
    \end{enumerate}
\end{restatable}
\vspace{-0.15cm}
\paragraph{Discussion.}
Condition~\ref{cond:basic_model} is a necessary set of conditions for identifying latent discrete models, which is employed and discussed extensively~\citep{kivva2021learning,kivva2022identifiability}.
Intuitively, Condition~\ref{cond:basic_model}-\ref{asmp:nondegeneracy} excludes dummy discrete states and graph edges that exert no influence on the observed variables $\xx$. 
Condition~\ref{cond:basic_model}-\ref{asmp:twins},\ref{asmp:maximality} constrain the latent model to be the most informative graph without introducing redundant latent variables, thus forbidding arbitrary merging and splitting over latent variables.

\subsection{Discrete Component Identification} \label{subsec:discrete_component}

We show with access to only the observed data $\xx$, we can identify each discrete component $d_{i}$ up to permutation indeterminacy (Definition~\ref{def:permutation}) and a corresponding bipartite graph equivalent to $\Gamma$.

\begin{definition}[Component-wise Identifiability] \label{def:permutation}
    Variables $\dd \in \N^{n_{d}}$ and $\hat{\dd} \in \N^{n_{d}}$ are identified component-wise if there exists a permutation $\pi$, such that $ \hat{d}_{i} = h_{i}(d_{\pi(i)})$ with invertible function $ h_{i} $. 
\end{definition}
\vspace{-0.15cm}
That is, our estimation $\hat{d}_{i}$ captures full information of $d_{\pi(i)}$ and no information from $ d_{j} $ such that $j\neq \pi(i)$. \footnote{
    We use ``components'' to refer to individual discrete variables $d_{i}$ in the vector $\dd$. 
}
The permutation is a fundamental indeterminacy for disentanglement~\citep{hyvarinen2016unsupervised,hyvarinen2019nonlinear,khemakhem2020icebeem,kivva2021learning}.

\paragraph{Remarks on the problem.}
A large body of prior work~\citep{hyvarinen2016unsupervised,khemakhem2020variational,vonkugelgen2021selfsupervised} requires continuous or even differentiable density function over all latent variables and domain/class labels or counterfactual counterparts to generate variation.
Thus, their techniques do not transfer naturally to our latent space with both continuous and discrete parts $[\cc, \dd]$ and no supervision of any form.
With a similar goal, \citet{kivva2021learning} assumes access to an oracle (Definition~\ref{def:oracle}) to the mixture distribution over $p(\xx)$, which is not directly available in the general case here.
\citet{kivva2022identifiability} assumes a specific parametric generating process, whereas we focus on a generic non-parametric generative model (Equation~\ref{eq:discrete_generating}).


\paragraph{High-level description of our proposed approach.}
We decompose the problem into two tractable subproblems: 1) extracting the global discrete state $\dd$ from the mixing with the continuous variable $\cc$; 2) further identifying each discrete component $d_{i}$ from the mixing with other discrete components $d_{j}$ ($i\neq j$) and the causal graph $\Gamma$.
For 1), we show that, perhaps surprisingly, minimal conditions on the generating function $g$ suffice to remove the information of $\cc$ and thus identify the global state of $\dd$.
For 2), we observe that the identification results in 1) can be viewed as a mixture oracle over $p(\xx)$, which enables us to employ techniques from \citet{kivva2021learning} to solve the problem.

We introduce key conditions and formal theoretical statements as follows.

\setlength{\abovedisplayskip}{0pt}
\setlength{\belowdisplayskip}{0pt}
\begin{restatable}[Discrete Components Identification]{condition}{discreteconditions} \label{cond:discrete_component_conditions} {\ } 
    \begin{enumerate}[label=\roman*,leftmargin=2em,itemsep=0em]
        \item \label{asmp:openspace} [Connected Spaces] The continuous support $\cC \subset \R^{n_{c}}$ is closed and connected.
        \item \label{asmp:invertibility} [Invertibility \& Continuity]: 
        The generating function $g$ in equation~\ref{eq:discrete_generating} is invertible, and for any fixed $\dd$, $g(\dd, \cdot)$ and its inverse are continuous.
        \item \label{asmp:subset_influences} [Non-Subset Observed Children]:
        For any pair $ d_{i} $ and $ d_{j}$, one's observed children are not the subset of the other's, $ \text{Ch}_{\Gamma} ( d_{i} ) \not\subset \text{Ch}_{\Gamma} ( d_{j} ) $.
     
    \end{enumerate}
\end{restatable}

\paragraph{Discussion on the conditions.}
Condition~\ref{cond:discrete_component_conditions}-\ref{asmp:openspace} requires the continuous support $\cC$ to be regular in contrast with the discrete variable's support.
Intuitively, the continuous variable $\cc$ often controls the extents/degrees of specific attributes (e.g., sizes, lighting, and angles) and takes values from connected spaces. For instance, ``lightning'' ranges from the lowest to the highest intensity continuously.
Condition~\ref{cond:discrete_component_conditions}-\ref{asmp:invertibility} ensures the generating process preserves latent variables' information~\citep{hyvarinen2016unsupervised,khemakhem2020variational,vonkugelgen2021selfsupervised,kong2022partial}.
Thanks to the high dimensionality, images often have adequate capacity to meet this condition. 
For instance, the image of a dog contains a detailed description of the dog’s breed, shape, color, lighting intensity, and angles, all of which are decodable from the image.
Condition~\ref{cond:discrete_component_conditions}-\ref{asmp:subset_influences} ensures that each latent component should exhibit sufficiently distinguishable influences on the observed variable $\xx$.
Practically, this condition indicates that the lowest-level concepts influence diverse parts of the image. 
These concepts are often atomic, such as a dog’s ear, eyes, or even finer, which often don’t overlap.
This condition is adopted in prior work~\citep{kivva2021learning,kivva2022identifiability} and related to the notation of sparsity.
Along this line, prior work~\citep{arora2012learning,arora2012practical,moran2021identifiable} assumes pure observed children for each discrete variable, which is strictly stronger. 
Recent work~\citep{zheng2022identifiability} assumes each latent variable is connected to a unique set of observed variables. 
This condition implies Condition~\ref{cond:discrete_component_conditions}-\ref{asmp:subset_influences} because if $z_{0}$’s children form a subset of $z_{1}$’s children, then one cannot find a subset of observed variables whose parent is $z_{0}$ alone.

\begin{restatable}[Discrete Component Identification]{theorem}{discretetheorem}  \label{thm:discrete_identification}
    Under the generating process in Equation~\ref{eq:discrete_generating} and Condition~\ref{cond:discrete_component_conditions}-\ref{asmp:invertibility}, the estimated discrete variable $\hat{\dd}$ and the true discrete variable $ \dd $ are equivalent up to an invertible function, i.e., $\hat{\dd} = h(\dd)$ with $h(\cdot)$ invertible.
    Moreover, if Condition~\ref{cond:basic_model} and Condition~\ref{cond:discrete_component_conditions}-\ref{asmp:subset_influences} further hold, we attain component-wise identifiability (Definition~\ref{def:permutation}) and the bipartite graph $\Gamma$ up to permutation of component indices.
\end{restatable}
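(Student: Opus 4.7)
The plan is to follow the two-step decomposition the authors outline: first identify the global discrete label $\dd$ from $p(\xx)$ alone, then disentangle the individual components $d_i$ and recover $\Gamma$ by reducing to the mixture-oracle setup of \citet{kivva2021learning}.

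For the first step I would work topologically. By Condition~\ref{cond:discrete_component_conditions}-\ref{asmp:invertibility}, each slice map $g(k,\cdot) : \cC \to \cX_k := g(\{k\}\times \cC)$ is a homeomorphism, and by global invertibility of $g$ the slices $\{\cX_k\}_{k\in\Omega^{(d)}}$ are pairwise disjoint and exhaust $\cX$. Since $\cC$ is connected (Condition~\ref{cond:discrete_component_conditions}-\ref{asmp:openspace}) and $g(k,\cdot)$ is continuous, each $\cX_k$ is connected. The key topological claim is that each $\cX_k$ is clopen in $\cX$, so that $\{\cX_k\}_{k\in\Omega^{(d)}}$ is exactly the set of connected components of $\cX$. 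Granting this, the map sending $\xx$ to the label of its connected component is a measurable statistic whose value is a bijective function of $\dd$; any estimate $(\hat g, \hat\dd, \hat\cc)$ that satisfies the same conditions must induce the same partition of $\cX$, which forces $\hat{\dd} = h(\dd)$ for some invertible $h$.

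For the second step I would read off the induced decomposition
\begin{align*}
    p(\xx) \;=\; \sum_{k \in \Omega^{(d)}} \Pb{\dd = k}\, p(\xx \mid \dd = k),
\end{align*}
whose conditionals have pairwise disjoint supports $\cX_k$, so both the mixing weights and the conditional densities can be recovered purely from the connected-component partition of $\cX$. This is exactly the mixture oracle required by \citet{kivva2021learning}. Condition~\ref{cond:basic_model} supplies their non-degeneracy, no-twins, and maximality hypotheses, while Condition~\ref{cond:discrete_component_conditions}-\ref{asmp:subset_influences} plays the role of their graphical sparsity assumption on $\Gamma$ (no latent's observed children are contained in another's). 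Invoking their identification theorem then yields each $d_i$ up to permutation and componentwise invertible relabeling, together with the bipartite graph $\Gamma$ up to permutation of the latent indices, which is the content of Definition~\ref{def:permutation}.

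The main obstacle is the topological separation claim in the first step: Condition~\ref{cond:discrete_component_conditions}-\ref{asmp:invertibility} only guarantees fibre-wise continuity of $g$ and of each fibre's inverse, not joint continuity of $g^{-1}$, so excluding that two slices $\cX_k$ and $\cX_{k'}$ share a limit point in $\cX$ demands a careful argument. My plan is to argue by contradiction: if $g(k, c_n) \to g(k', c')$ with $k \neq k'$, then continuity of $g(k,\cdot)^{-1}$ on $\cX_k$ together with closedness of $\cC$ forces the sequence $c_n$ to have a limit $c^* \in \cC$ (the unbounded case being ruled out by a separate properness argument for the homeomorphism onto its image), giving $g(k, c^*) = g(k', c')$ and contradicting global injectivity of $g$. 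Once this separation is established, the mixture-oracle reduction makes both conclusions of the theorem follow cleanly from the cited identification result.
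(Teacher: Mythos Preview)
Your proposal follows essentially the same approach as the paper: argue topologically that the slices $\cX_k = g(\{k\}\times\cC)$ are the connected components of $\cX$ (the paper frames this as two symmetric contradictions on $\hat g^{-1}\circ g$---one true state cannot split across two estimated states by connectedness of $\cC$, and two true states cannot merge by separatedness of the slices---rather than as an intrinsic-invariant argument, but the topological content is identical), then read off the mixture oracle from the resulting partition and invoke \citet{kivva2021learning} under Condition~\ref{cond:basic_model} and Condition~\ref{cond:discrete_component_conditions}-\ref{asmp:subset_influences}. The separation claim you flag as the main obstacle is exactly where the paper leans on closedness of $\cC$ together with the fibre-wise homeomorphism, just as you propose.
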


\paragraph{Proof sketch.}
Intuitively, each state of the discrete subspace $\dd$ indexes a manifold $ g(\dd, \cdot): \cc \mapsto \xx $ that maps the continuous subspace $\cc$ to the observed variable $ \xx $.
These manifolds do not intersect in the observed variable space $\cX$ regardless of however close they may be to each other, thanks to the invertibility of the generating function $g$ (Condition~\ref{cond:discrete_component_conditions}-\ref{asmp:invertibility}).
This leaves a sufficient footprint in $\xx$ for us to uniquely identify the manifold it resides in, giving rise to the identifiability of $\dd$.
This reveals the discrete state of each realization of $\xx$ and equivalently the joint distribution $p(\tilde{d}, \xx)$ where we merge all components in $\dd$ into a discrete variable $\tilde{d}$.
Identifying this joint distribution enables the application of tensor decomposition techniques~\citep{kivva2021learning} to disentangle the global state $\hat{d}$ into individual discrete components $d_{i}$ and the causal graph $\Gamma$, under Condition~\ref{cond:basic_model} and Condition~\ref{cond:discrete_component_conditions}-\ref{asmp:subset_influences}.


\subsection{Hierarchical Model Identification} \label{subsec:hierarchy_identification}

We show that we can identify the underlying hierarchical causal structure $\cG$ that explains the dependence among low-level discrete components $d_{i}$ that we identify in Theorem~\ref{thm:discrete_identification}.

\begin{wrapfigure}{r}{10cm}
    \centering
    \setlength{\abovecaptionskip}{2pt}
    \setlength{\belowcaptionskip}{-4pt}
    \vspace{-0.55cm}
    \begin{subfigure}{0.2\textwidth}
        \centering
       \begin{tikzpicture}[scale=.38, line width=0.6pt, inner sep=0.6mm, shorten >=.1pt, shorten <=.1pt]
      \tikzset{
        znode/.style={text=brown},
        dnode/.style={text=blue},
        xnode/.style={text=black},
        every node/.style={align=center},
        edge from parent/.style={draw,->}
      }
      
      \node[znode] (z1) at (0,1) {$z_1$};
      \node[znode] (z2) at (-1.5,-1) {$z_2$};
      \node[znode] (z3) at (1.5,-1) {$z_3$};
      
      \node[znode] (d1) at (-3,-3) {$z_4$};
      \node[znode] (d2) at (-1,-3) {$z_5$};
      \node[znode] (d3) at (1,-3) {$z_6$};
      \node[znode] (d4) at (3,-3) {$z_7$}; 
    
      \node[dnode] (x1) at (-3.5,-5) {$d_1$};
      \node[dnode] (x2) at (-2.5,-5) {$d_2$};
      \node[dnode] (x3) at (-1.5,-5) {$d_3$};
      \node[dnode] (x4) at (-0.5,-5) {$d_4$};
      \node[dnode] (x5) at (0.5,-5) {$d_5$};
      \node[dnode] (x6) at (1.5,-5) {$d_6$};
      \node[dnode] (x7) at (2.5,-5) {$d_7$};
      \node[dnode] (x8) at (3.5,-5) {$d_8$};
      
        \draw[-latex] (z1) -- (z2);
      \draw[-latex] (z1) -- (z3);
      
      \draw[-latex] (z2) -- (d1);
      \draw[-latex] (z2) -- (d2);

    \draw[-latex] (z3) -- (d3);
      \draw[-latex] (z3) -- (d4);

      \draw[-latex] (d1) -- (x1);
      \draw[-latex] (d1) -- (x2);
      
      \draw[-latex] (d2) -- (x3);
      \draw[-latex] (d2) -- (x4);
    
      \draw[-latex] (d3) -- (x5);
      \draw[-latex] (d3) -- (x6);
    
      \draw[-latex] (d4) -- (x7);
      \draw[-latex] (d4) -- (x8);

    \end{tikzpicture}
        \caption{\small Trees.}
        \label{subfig:trees}
    \end{subfigure}
    \hfill
    \begin{subfigure}[b]{0.2\textwidth}
    \centering
    \begin{tikzpicture}[scale=.38, line width=0.6pt, inner sep=0.6mm, shorten >=.1pt, shorten <=.1pt]
      \tikzset{
        znode/.style={text=brown},
        dnode/.style={text=blue},
        xnode/.style={text=black},
        every node/.style={align=center},
        edge from parent/.style={draw,->}
      }
      
      \node[znode] (z1) at (0,1) {$z_1$};
      \node[znode] (z2) at (-1.5,-1) {$z_2$};
      \node[znode] (z3) at (1.5,-1) {$z_3$};
      
      \node[znode] (d1) at (-3,-3) {$z_4$};
      \node[znode] (d2) at (-1,-3) {$z_5$};
      \node[znode] (d3) at (1,-3) {$z_6$};
      \node[znode] (d4) at (3,-3) {$z_7$}; 
    
      \node[dnode] (x1) at (-3.5,-5) {$d_1$};
      \node[dnode] (x2) at (-2.5,-5) {$d_2$};
      \node[dnode] (x3) at (-1.5,-5) {$d_3$};
      \node[dnode] (x4) at (-0.5,-5) {$d_4$};
      \node[dnode] (x5) at (0.5,-5) {$d_5$};
      \node[dnode] (x6) at (1.5,-5) {$d_6$};
      \node[dnode] (x7) at (2.5,-5) {$d_7$};
      \node[dnode] (x8) at (3.5,-5) {$d_8$};
      
        \draw[-latex] (z1) -- (z2);
      \draw[-latex] (z1) -- (z3);
      
      \draw[-latex] (z2) -- (d1);
      \draw[-latex] (z2) -- (d2);
      \draw[-latex] (z2) -- (d3);

    \draw[-latex] (z3) -- (d3);
      \draw[-latex] (z3) -- (d4);
      \draw[-latex] (z3) -- (d2);

      \draw[-latex] (d1) -- (x1);
      
      \draw[-latex] (d2) -- (x2);
      
      \draw[-latex] (d2) -- (x3);
    
      \draw[-latex] (d3) -- (x4);
    
      \draw[-latex] (d2) -- (x5);
      \draw[-latex] (d4) -- (x5);
    
      \draw[-latex] (d3) -- (x6);
      \draw[-latex] (d4) -- (x6);
    
      \draw[-latex] (d1) -- (x7);
      \draw[-latex] (d4) -- (x7);

      \draw[-latex] (d4) -- (x8);
    
    \end{tikzpicture}
    \caption{Multi-level DAGs.}
    \label{subfig:multileveldags}
    \end{subfigure}
    \hfill
    \begin{subfigure}[b]{0.2\textwidth}
    \centering
    \begin{tikzpicture}[scale=.38, line width=0.6pt, inner sep=0.6mm, shorten >=.1pt, shorten <=.1pt]
      \tikzset{
        znode/.style={text=brown},
        dnode/.style={text=blue},
        xnode/.style={text=black},
        every node/.style={align=center},
        edge from parent/.style={draw,->}
      }
      
      \node[znode] (z1) at (0,1) {$z_1$};
      \node[znode] (z2) at (-1.5,-1) {$z_2$};
      \node[dnode] (z3) at (1.5,-1) {$d_1$};
      
      \node[dnode] (d1) at (-3,-3) {$d_2$};
      \node[znode] (d2) at (-1,-3) {$z_3$};
      \node[znode] (d3) at (1,-3) {$z_4$};
      \node[dnode] (d4) at (3,-3) {$d_3$}; 
    
      \node[dnode] (x1) at (-3.5,-5) {$d_4$};
      \node[dnode] (x2) at (-2.5,-5) {$d_5$};
      \node[dnode] (x3) at (-1.5,-5) {$d_6$};
      \node[dnode] (x4) at (-0.5,-5) {$d_7$};
      \node[dnode] (x5) at (0.5,-5) {$d_8$};
      \node[dnode] (x6) at (1.5,-5) {$d_9$};
      
        \draw[-latex] (z1) -- (z2);
      \draw[-latex] (z1) -- (z3);
      
      \draw[-latex] (z2) -- (d1);
      \draw[-latex] (z2) -- (d2);
      \draw[-latex] (z2) -- (d3);

    \draw[-latex] (z3) -- (d3);
      \draw[-latex] (z3) -- (d4);
      \draw[-latex] (z3) -- (d2);

      \draw[-latex] (d1) -- (x1);
      \draw[-latex] (d1) -- (x2);

      \draw[-latex] (d2) -- (x2);
      
      \draw[-latex] (d2) -- (x3);

      \draw[-latex] (d1) -- (x4);
      \draw[-latex] (z3) -- (x4);

      \draw[-latex] (d3) -- (x5);
    
      \draw[-latex] (d2) -- (x5);
    
      \draw[-latex] (d3) -- (x6);


    
    \end{tikzpicture}
    \caption{Ours.}
    \label{subfig:ours}
    \end{subfigure}
    \vspace{0.15cm}
    \caption{\footnotesize
        \textbf{Graphical comparison.}
        Tree Structures permit one undirected path between any two variables. Multi-level DAGs require partitioning variables into levels with edges only between adjacent levels. Our conditions allow multiple paths between variables across levels and include non-leaf observed variables.
    }
    \label{fig:comparison_graphs}
    \vspace{-0.2cm}
\end{wrapfigure}
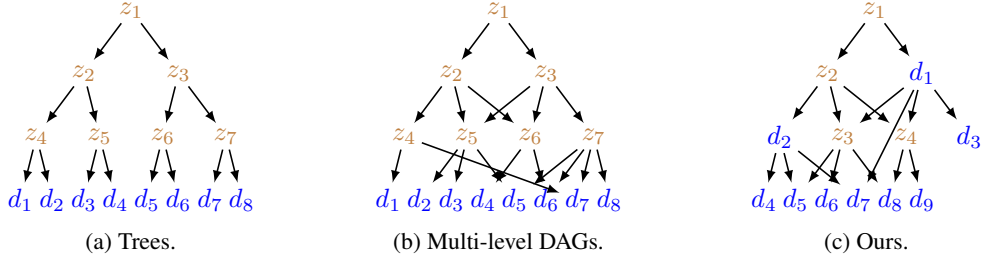
\paragraph{Remarks on the problem.}
Benefiting from the identified discrete components in Theorem~\ref{thm:discrete_identification}, we employ $\dd$ as observed variables to identify the discrete latent hierarchical model $\cG$. 
Although discrete latent hierarchical models have been under investigation for an extensive period, existing results mostly assume relatively strong graphical conditions -- the causal structures are either trees~\citep{zhang2004hierarchical,Pearl88,choi2011learning} or multi-level DAGs~\citep{gu2023bayesian,gu2023blessing}, which can be restrictive in capturing the complex interactions among latent variables among different hierarchical levels.
Separately, recent work~\citep{huang2022latent,dong2023versatile} has exhibited more flexible graphical conditions for linear, continuous latent hierarchical models.
For instance, prior work~\citep{dong2023versatile} allows for multiple directed paths of disparate edge numbers within 
a variable pair and potential non-leaf observed variables.  
Unfortunately, their techniques hinge on linearity and cannot directly apply to discrete models of high nonlinearity.

\paragraph{High-level description of our approach.}
The central machinery in prior work~\citep{huang2022latent,dong2023versatile} is Theorem~\ref{thm:continuous_rank_graph}~\citep{Sullivant_2010}, which builds a connection between easily computable statistical quantities (i.e., sub-covariance matrix ranks) and local latent graph information.
\citet{dong2023versatile} utilize a graph search algorithm to piece together these local latent graph structures to identify the entire hierarchical model. 
Ideally, if we can access these local latent structures in the discrete model, we can apply the same graph search procedure and theorems to identify the discrete model.
Nevertheless, Theorem~\ref{thm:continuous_rank_graph} relies on linearity (i.e., each causal edge represents a linear function), which doesn't hold in the discrete case.
We show that interestingly, Theorem~\ref{thm:continuous_rank_graph} can find a counterpart in the discrete case (Theorem~\ref{thm:rank_graph}), despite the absence of linearity.
Since given the graphical information from Theorem~\ref{thm:continuous_rank_graph}, the theory in \citet{dong2023versatile} is independent of statistical properties, we can utilize flexible conditions and algorithm therein by obtaining the same graphical information with Theorem~\ref{thm:rank_graph}.

To present Theorem~\ref{thm:rank_graph}, we introduce non-negative rank $\text{rank}_{+} (\cdot)$~\citep{cohen1993nonnegative} (Definition~\ref{def:nonnegtive_rank}), and t-separation~\citep{spirtes2001causation} (Definition~\ref{def:t_sep}) as follows.
\begin{restatable}[Non-negative Rank]{definition}{nonnegativeranks} \label{def:nonnegtive_rank}
    The non-negative rank of non-negative $A \in \R_{+}^{m\times n} $ is equal to the smallest $p$ for which there exist $\mB \in \R^{m\times p}_{+}$ and $\mC \in \R_{+}^{p\times n}$ such that $\mA = \mB \mC$. 
\end{restatable}
 
\begin{restatable}[Treks]{definition}{treks} \label{def:treks}
    A trek $T_{i,j}$ in a DAG from vertex $i$ to $j$ consists of a directed path $P_{ki}$ from $k$ to $i$ and a direct path $P_{kj}$ from $k$ to $j$, where we refer to $ P_{ki} $ as the $i$ side and $P_{kj}$ as the $ j $ side.
\end{restatable}

\begin{restatable}[T-separation]{definition}{tsep} \label{def:t_sep}
    Let $\mA$, $\mB$, $\mC_{\mA}$, and $ \mC_{\mB} $ be subsets (not necessarily disjoint) of vertices in a DAG. Then $ (\mC_{\mA}, \mC_{\mB}) $ t-separates $\mA$ and $\mB$ if every trek from $\mA$ to $\mB$ passes through either a vertex in $ \mC_{\mA} $ on the $\mA$ side of the trek or a vertex $ \mC_{\mB} $ on the $\mB$ side of the trek.
\end{restatable}

Intuitively, a trek is a path containing at most one fork structure and no collider structures.
It is known that one can formulate d-separation as a special form of t-separation (see Theorem~\ref{thm:d_t_equivalence}).
Thus, t-separation is at least as informative as d-separation.
As detailed in \citet{dong2023versatile}, t-separation can provide more information when latent variables are involved, benefiting from Theorem~\ref{thm:continuous_rank_graph}~\citep{Sullivant_2010}.

\begin{restatable}[Implication of Rank Information on Latent Discrete Graphs]{theorem}{rankgraph} \label{thm:rank_graph}
    Given two sets of variables $\set{A}$ and $\set{B}$ from a non-degenerate, faithful (Condition~\ref{cond:basic_model}-\ref{asmp:nondegeneracy}, Condition~\ref{cond:hierarchy}-\ref{asmp:faithfulness}) discrete model $\graph$, it follows that $\text{rank}_{+}(\mP_{\set{A}, {\set{B}}}) = \min \{ \abs{ \text{Supp} (\mL) }: \text{a partition }(\mL_{1}, \mL_{2}) \text{ t-separates}~\set{A}~\text{and}~\set{B}~\text{in}~\graph\}$.
\end{restatable}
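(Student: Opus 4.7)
The plan is to prove the theorem by matching upper and lower bounds, transposing the Sullivant--Talaska--Draisma argument behind Theorem~\ref{thm:continuous_rank_graph} from the linear Gaussian regime to the non-negative-rank regime. For the upper bound, I fix a partition $(\mL_{1}, \mL_{2})$ that t-separates $\set{A}$ from $\set{B}$ and set $k = |\text{Supp}(\mL)|$ with $\mL = \mL_{1} \cup \mL_{2}$. The key step is to produce a latent variable $\tilde{L}$ of cardinality $k$ such that, under the observed distribution, $\set{A} \perp \set{B} \mid \tilde{L}$. A natural route is to build an augmented graph $\graph'$ in which each $v \in \mL$ is split into two copies $v^{A}, v^{B}$ reflecting its role on the $\set{A}$-side versus the $\set{B}$-side of any trek, with incoming and outgoing edges rewired accordingly. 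The t-separation condition then translates (via Theorem~\ref{thm:d_t_equivalence}) into d-separation of $\set{A}$ from $\set{B}$ given the joint state of all copies, which has cardinality exactly $|\text{Supp}(\mL)|$. This yields $P(\set{A}, \set{B}) = \sum_{l} P(\set{A} \mid l)\, P(l)\, P(\set{B} \mid l)$ and hence a non-negative factorization $\mP_{\set{A}, \set{B}} = \mM \mN$ of rank at most $k$.

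For the lower bound, let $k^{*} = \min \{ |\text{Supp}(\mL)| : (\mL_{1}, \mL_{2}) \text{ t-separates } \set{A} \text{ from } \set{B} \text{ in } \graph \}$ and assume toward contradiction that $\text{rank}_{+}(\mP_{\set{A}, \set{B}}) < k^{*}$. Then the non-negative factorization yields an auxiliary discrete variable $\tilde{L}$ of cardinality $< k^{*}$ with $\set{A} \perp \set{B} \mid \tilde{L}$ reproducing the marginal $P(\set{A}, \set{B})$. Invoking faithfulness (Condition~\ref{cond:hierarchy}-\ref{asmp:faithfulness}) together with non-degeneracy (Condition~\ref{cond:basic_model}-\ref{asmp:nondegeneracy}) to rule out accidental rank drops caused by zero probabilities, I would argue that any such $\tilde{L}$ must correspond to a t-separator in $\graph$ whose joint support is strictly smaller than $k^{*}$, contradicting the minimality of the partition achieving $k^{*}$.

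The main obstacle is the non-negativity constraint: $\text{rank}_{+}$ can strictly exceed ordinary rank and resists the clean linear-algebraic manipulations available in Theorem~\ref{thm:continuous_rank_graph}. The upper-bound construction must stay within probabilistic factorizations throughout so that all entries remain non-negative, which forces the duplicate-and-condition augmentation rather than a direct Wright-style trek-counting decomposition. On the lower-bound side, the subtle step is converting an algebraic non-negative factorization back into a graphical statement about t-separators in $\graph$; this requires faithfulness to be strong enough to lift numerical conditional independence to graphical separation, and non-degeneracy to eliminate pathological cases in which zero probabilities or collapsed conditional distributions hide graphical information inside the joint table.
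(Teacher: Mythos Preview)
Your overall strategy matches the paper's: both directions rest on the Cohen--Rothblum characterization of non-negative rank (Theorem~\ref{thm:nonnegative_rank}), faithfulness (Condition~\ref{cond:hierarchy}-\ref{asmp:faithfulness}) to pass between conditional independence and d-separation, and the d/t-separation equivalence (Theorem~\ref{thm:d_t_equivalence}). The paper, however, does not split the argument into separate upper and lower bounds with an augmented-graph construction. It simply writes $\mP_{\mA,\mB}$ in the mixture form $\sum_{r} \mP(\mA\mid\mL=r)\mP(\mB\mid\mL=r)\mP(\mL=r)$ with the smallest possible $R$ (noting this is always achievable by taking $\mL=\mA$ or $\mL=\mB$), invokes faithfulness and non-degeneracy to turn the resulting conditional independence into d-separation of $\mA\setminus\mL$ and $\mB\setminus\mL$ by $\mL$, applies Theorem~\ref{thm:d_t_equivalence} to convert this into a t-separation statement by a partition of $\mL$, and finally identifies the minimal $R$ with $\text{rank}_{+}(\mP_{\mA,\mB})$ via Theorem~\ref{thm:nonnegative_rank}. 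In other words, the paper chains three equivalences rather than building anything new.

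Your node-splitting construction for the upper bound is therefore more machinery than needed, and the cardinality bookkeeping is shaky as written: if each $v\in\mL$ is genuinely split into two copies $v^{A},v^{B}$, the joint state of all copies has cardinality $\prod_{v}|\text{Supp}(v)|^{2}$, not $|\text{Supp}(\mL)|$, unless you further impose $v^{A}=v^{B}$ deterministically---at which point you are back to conditioning on $\mL$ itself. The paper avoids this entirely by going straight through Theorem~\ref{thm:d_t_equivalence}. On the lower-bound side, your concern about lifting an abstract non-negative factorization back to a t-separator in $\graph$ is legitimate; the paper sidesteps it by keeping $\mL$ a subset of graph variables throughout (so faithfulness applies directly) and then appealing to Theorem~\ref{thm:nonnegative_rank} to equate the minimal such $R$ with $\text{rank}_{+}$.
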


\paragraph{Example.}
Suppose every variable in Figure~\ref{fig:comparison_graphs}(a) is binary, then for $\mA = \{ d_{1}, d_{2}, d_{3} \}$, $\mB = \{ d_{3}, \dots, d_{8} \}$, $ \text{rank}_{+} (\mP_{\mA, \mB}) = 4 $ since $\mA$ and $\mB$ are t-separated by $\{ d_{3}, z_{4}\}$ with $4$ states.

\paragraph{Discussion.}
Parallel to Theorem~\ref{thm:continuous_rank_graph}~\citep{Sullivant_2010} for linear models, Theorem~\ref{thm:rank_graph} acts as an oracle to reveal the minimal t-separation set's cardinality between any two variable sets in \textit{discrete} models beyond linearity.
This enables us to infer the latent graph structure from only the observed variables' statistical information.
To the best of our knowledge, Theorem~\ref{thm:rank_graph} is the first to establish this connection and can be of independent interest for learning latent discrete models in future work.
Although the computation of non-negative ranks can be expensive~\citep{cohen1993nonnegative}, existing work~\citep{anandkumar2012learning,mazaheri2023causal} demonstrates that regular rank tests are decent substitutes, we observe in our synthetic data experiments (Section~\ref{sec:synthetic}).

We present the identification conditions for discrete models as follows (Condition~\ref{cond:hierarchy}).

\begin{definition} [Atomic Covers] \label{def:discrete_cover}
  Let $\set{A} \subset \mV $ be a set of variables in $\graph$ with $|\text{Supp}(\set{A})| = k$, where $t$ of the $k$ states belong to observed variables $d_{i}$, and the remaining $k-t$ are from latent variables $z_{j}$. 
  $\set{A}$ is an atomic cover if $\set{A}$ contains a single observed variable, or if the following conditions hold:
  \begin{itemize}[leftmargin=20pt,itemsep=-3pt,topsep=-3pt]
      \item [(i)] There exists a set of atomic covers $\setset{C}$, with $ \abs{ \text{Supp}(\setset{C}) } \geq k+1-t$, such that $\cup_{\set{C} \in \setset{C}} \set{C}\subseteq \purechildren(\set{A})$ and $\forall \set{C_1}, \set{C_2} \in \setset{C}, \set{C_1}\cap\set{C_2}=\emptyset$.

     \item [(ii)] There exists a set of covers $\setset{N}$, with $\abs{ \text{Supp}(\setset{N}) }\geq k+1-t$,
     such that every element in $\cup_{\set{N} \in \setset{N}} \set{N}$ is a neighbour of $\set{V}$  and 
     $ (\cup_{\set{N} \in \setset{N}} \set{N}) \cap (\cup_{\set{C} \in \setset{C}} \set{C})=\emptyset$.

    \item [(iii)] There does not exist a partition of $\set{A}= \set{A_1} \cup \set{A_2}$ such that both $\set{A_1}$ and $\set{A_2}$ are atomic covers.
  \end{itemize}
\end{definition}

\paragraph{Example.}
In Figure~\ref{fig:comparison_graphs} (c), $\{ z_{2} \}$ is an atomic cover if its pure child $\{ d_{2} \}$ and its neighbors $ \{ z_{1}, z_{3}, z_{4} \} $ possess more than $ \text{Supp} (z_{2}) + 1 $ states separately.
Otherwise, $ \{ z_{2}, d_{1}\}$ can be an atomic cover if (some of) pure children $ \{ z_{3}, z_{4} \} $ and neighbors $\{ z_{1}, d_{2}, d_{3}\}$ possess $ \text{Supp} (z_{2}) + 1 $ states separately.

\begin{restatable}[Discrete Hierarchical Model Conditions]{condition}{hierarchicalconditions} \label{cond:hierarchy} {\ } 
    \begin{enumerate}[label=\roman*,leftmargin=2em]
        \item \label{asmp:faithfulness} [Faithfulness] All the conditional independence relations are entailed by the DAG.
        
        \item \label{asmp:basic_graph} [Basic Graphical Conditions] Each latent variable $z \in \mZ$ corresponds to a unique atomic cover in $\graph$ and no $z$ is involved in any triangle structure (i.e., three mutually adjacent variables).

        \item \label{asmp:vstructure} [Graphical Condition on Colliders] 
        In a latent graph $\graph$, if (i) there exists a set of variables $\set{C}$ such that every variable in $\set{C}$ is a collider of two atomic covers $\set{L_1}$, $\set{L_2}$, and denote by $\set{A}$ the minimal set of variables that d-separates $\set{L_1}$ from $\set{L_2}$, (ii) there is a latent variable in $\set{L_1}, \set{L_2}, \set{C}$ or $\set{A}$, then we must have $\abs{\text{Supp}(\set{C})} + \abs{\text{Supp}(\set{A})} \geq \abs{\text{Supp}(\set{L_1})}+\abs{\text{Supp}(\set{L_2})}$.
    \end{enumerate}
\end{restatable}

\paragraph{Discussion on the conditions.}
Condition~\ref{cond:hierarchy}-\ref{asmp:faithfulness} is known as the faithfulness condition widely adopted for causal discovery~\citep{spirtes2001causation,kivva2021learning,xie2020generalized,xie2022identification}, which attributes statistical independence to graph structures rather than unlikely coincidence~\citep{lemeire2013replacing,spirtes2001causation}.
In linear models, \citet{dong2023versatile} introduce atomic covers (Definition~\ref{def:discrete_cover}) to represent a group of indistinguishable variables.
In the discrete case, an atomic cover consists of indistinguishable latent states, which we merge into a single latent discrete variable (Condition~\ref{cond:basic_model}-\ref{asmp:twins}).
Intuitively, we treat each state as a separate variable and merge those belonging to the same atomic cover at the end of the identification procedure.
This handles discrete variables of arbitrary state numbers, in contrast with the binary or identical support assumptions~\citep{choi2011learning,gu2023bayesian}, which we use as an alternative condition in Theorem~\ref{thm:identical_support}.
Condition~\ref{cond:hierarchy}-\ref{asmp:basic_graph} requires each atomic cover to possess sufficiently many children and neighbors to preserve its influence while avoiding problematic triangle structures to ensure the uniqueness of its influence. 
In contrast, existing work~\citep{gu2023bayesian} assumes at least three pure children for each latent variable, amounting to six times more states.
Condition~\ref{cond:hierarchy}-\ref{asmp:vstructure} ensures adequate side information (large $|\mA|$) to discover latent colliders $ \mC $, admitting graphs more general than tree structures~\citep{Pearl88,zhang2004hierarchical,choi2011learning} (i.e., no colliders).
Overall, our model encompasses a rich class of latent structures more complex than tree structures and multi-level DAGs~\citep{gu2023bayesian} (Figure~\ref{fig:comparison_graphs}).


Following \citet{dong2023versatile}, we introduce the minimal-graph operator $\cO_{\min}$ (Definition~\ref{def:minimal_operator} and Figure~\ref{fig:minimal_operator}), which merges certain redundancy structures that rank information cannot distinguish.

\begin{restatable}[Minimal-graph Operator~\citep{huang2022latent,dong2023versatile}]{definition}{minimalgraph} \label{def:minimal_operator}
We can merge atomic covers $\mL$ into $\mP$ in $\cG$ if (i) $\mL$ is a pure child of $\mP$, (ii) all elements of $L$ and $P$ are latent and $|\text{Supp}(\mL)| = |\text{Supp}(\mP)|$, and (iii) the pure children of $\mL$ form a single atomic cover, or the siblings of $\mL$ form a single atomic cover. We denote such an operator as the minimal-graph operator $\cO_{\min}(\cG)$. 
\end{restatable}

\begin{restatable}[Discrete Hierarchical Identification]{theorem}{hierarchyidentification} \label{thm:hierarchical_model_identification}
    Suppose the causal model $\cG$ satisfies Condition~\ref{cond:basic_model} and Condition~\ref{cond:hierarchy}
    We can identify $\cG$ up to the Markov equivalence class of $\cO_{\min}(\cG)$.
\end{restatable}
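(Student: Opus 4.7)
The plan is to mirror the proof strategy for continuous linear hierarchical models in \citet{dong2023versatile}, replacing their linear-Gaussian rank-of-covariance oracle (Theorem~\ref{thm:continuous_rank_graph}) with the discrete non-negative rank oracle provided by Theorem~\ref{thm:rank_graph}. Since by Theorem~\ref{thm:discrete_identification} we already have the leaf discrete variables $\dd$ component-wise, we can form the joint probability table $\mP_{\set{A},\set{B}}$ for any subsets $\set{A},\set{B} \subseteq \dd$ and read off, via Theorem~\ref{thm:rank_graph}, the minimum cardinality $|\text{Supp}(\mL)|$ over partitioned t-separators $(\mL_1,\mL_2)$ of $\set{A}$ and $\set{B}$ in $\cG$. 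This is precisely the local graphical information that the bottom-up search of \citet{dong2023versatile} consumes.

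Before invoking the search, I would set up the atomic-cover bookkeeping in the discrete setting. Because latent variables can have arbitrary support cardinality, I treat each atomic cover $\set{L}$ as being ``sized'' by $|\text{Supp}(\set{L})|$ rather than by the number of vertices, so that Theorem~\ref{thm:rank_graph} gives additive arithmetic across covers. Condition~\ref{cond:basic_model}\ref{asmp:twins} (no-twins) and \ref{asmp:maximality} ensure that states that would share identical neighbor sets are always merged into a single discrete variable, making the atomic-cover decomposition unique; Condition~\ref{cond:hierarchy}\ref{asmp:basic_graph} forbids triangle structures so distinct covers produce distinguishable rank footprints.

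Next, I would execute the iterative identification in levels. Initialize the active set as $\dd$. At each iteration: (i) identify minimal subsets of the active set whose $\text{rank}_+(\mP_{\cdot,\cdot})$ pattern indicates an unobserved common ancestor of a given support size, using Condition~\ref{cond:hierarchy}\ref{asmp:basic_graph} to guarantee the existence of the pure children and neighbors needed for the rank drop to be detectable; (ii) instantiate the corresponding latent atomic cover with support size matching the observed rank; (iii) replace the identified pure-child group by the new parent cover in the active set and recurse. Colliders are handled using Condition~\ref{cond:hierarchy}\ref{asmp:vstructure}: the required support-cardinality inequality $|\text{Supp}(\set{C})|+|\text{Supp}(\set{A})| \geq |\text{Supp}(\set{L_1})|+|\text{Supp}(\set{L_2})|$ is exactly what makes a v-structure recognizable through the non-negative rank oracle, just as the analogous integer-rank inequality does in the linear case. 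Faithfulness (Condition~\ref{cond:hierarchy}\ref{asmp:faithfulness}) ensures rank equalities reflect genuine graphical separations rather than algebraic coincidences.

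Finally, I would argue that the recovered skeleton together with v-structures agrees with $\cO_{\min}(\cG)$, and that the residual orientation freedom is exactly the Markov equivalence class. Two sources of ambiguity remain: non-collider edge orientations (which produce Markov-equivalent DAGs), and redundant parent-child relations between latent covers of equal support whose merging/unmerging leaves every non-negative rank unchanged (which is precisely what $\cO_{\min}$ collapses, cf.\ Definition~\ref{def:minimal_operator}). Neither can be distinguished from observable joint-probability rank information, giving identification up to the Markov equivalence class of $\cO_{\min}(\cG)$. The main obstacle I anticipate is faithfully porting the inductive graph-search arguments of \citet{dong2023versatile} from linear continuous to discrete models, because their proofs rely on rank-of-covariance reasoning that collapses dimensions of image spaces, whereas here every step must be re-justified through $\text{rank}_+(\mP)$ and the combinatorics of joint probability tables. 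In particular, carefully tracking support-cardinality arithmetic (instead of counting variables) throughout the collider-detection and cover-merging steps is where the bulk of the technical work will lie.
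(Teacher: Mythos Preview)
Your high-level strategy is the same as the paper's: replace the linear covariance-rank oracle of \citet{dong2023versatile} with the discrete non-negative rank oracle of Theorem~\ref{thm:rank_graph}, then run their graph search. However, the paper exploits a simplification that you miss and that eliminates precisely the ``bulk of the technical work'' you anticipate. Rather than re-deriving the search with support-cardinality arithmetic substituted for variable-counting throughout, the paper runs the \emph{unmodified} algorithm of \citet{dong2023versatile} with only the oracle swapped. Because Theorem~\ref{thm:rank_graph} returns a number of \emph{states} where Theorem~\ref{thm:continuous_rank_graph} would return a number of \emph{variables}, the output is a graph $\tilde{\cG}$ in which each true latent $z$ has been exploded into $|\text{Supp}(z)|$ dummy nodes, and these dummy nodes form exactly one atomic cover. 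Condition~\ref{cond:hierarchy}\ref{asmp:basic_graph} (each latent variable corresponds to a \emph{unique} atomic cover) then lets you recover $\cG$ by merging the latent nodes in each atomic cover into a single discrete variable. No inductive re-justification of the search is needed; the correctness of the linear-case algorithm is inherited wholesale once Theorem~\ref{thm:rank_graph} is in place.

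Your proposal also does not explain why the skeleton operator $\cO_s$ disappears from the statement (the linear result of \citet{dong2023versatile} identifies only up to $\cO_s(\cO_{\min}(\cG))$). The paper's argument is that because all dummy nodes in an atomic cover are states of a \emph{single} discrete variable, they are automatically fully connected to every neighbor of that variable, so $\cO_s$ acts trivially. This is a short but necessary observation to match the claimed indeterminacy.
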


\paragraph{Proof sketch.}
As discussed above, Theorem~\ref{thm:rank_graph} gives a graph structure oracle equivalent to Theorem~\ref{thm:continuous_rank_graph}, which we leverage to prove Theorem~\ref{thm:hierarchical_model_identification}.
Besides the rank test, the major distinction between Theorem~\ref{thm:continuous_rank_graph} and Theorem~\ref{thm:rank_graph} is that the former returns the number of variables in the minimal t-separation set whereas the latter returns the number of states.
Applying the search algorithm from \citet{dong2023versatile} alongside our rank test from Theorem~\ref{thm:rank_graph} to a discrete model $\cG$ results in a graph $\tilde{\cG}$.
In $\tilde{\cG}$, each latent variable $z$ in $\cG$ is split into a set of variables $\tilde{z}^{(1)}, \dots, \tilde{z}^{(|\text{Supp}(z)|)}$ as an atomic cover, with the set size equal to the state number of $z$. 
We can then reconstruct the original graph $\cG$ from $\tilde{\cG}$ by merging these atomic covers into discrete variables.
We present our algorithm in Algorithm~\ref{alg:all} and highlight the differences from that in \citet{dong2023versatile}. 

Our techniques can also utilize the identical support condition (e.g., binary latent variables)~\citep{gu2023bayesian,choi2011learning} for identification under slightly different conditions. We present the results in Theorem~\ref{thm:identical_support}.

\section{Synthetic Data Experiments} \label{sec:synthetic}

\begin{table}[t]
\centering 
\caption{
    \footnotesize
\textbf{F1 scores for our method and the baseline \citet{kivva2021learning} }. Figure~\ref{fig:synthetic_with_x} exhibits the graphs. 
}
\vspace{0.1cm}
\resizebox{\columnwidth}{!}{%
\begin{tabular}{|c|c|c|c|c|c|c|c|c|c|}
\hline
         & Graph 1 & 
           Graph 2 & 
           Graph 3 & 
           Graph 4 & 
           Graph 5 & 
           Graph 6 & 
           Graph 7 & 
           Graph 8 & 
           Graph 9 \\ \hline
Baseline  & 0.67 $\pm$ 0.0 & 0.69 $\pm$ 0.1 & 0.67 $\pm$ 0.0 & 0.67 $\pm$ 0.2 & 0.63 $\pm$ 0.0 & 0.65 $\pm$ 0.0 & 0.67 $\pm$ 0.0 & 0.65 $\pm$ 0.0 & 0.63 $\pm$ 0.0 \\ \hline
Ours       & 0.94 $\pm$ 0.1   & 0.98 $\pm$ 0.1 & 0.94 $\pm$ 0.0 
& 0.98 $\pm$ 0.2 & 0.94 $\pm$ 0.1 & 0.93 $\pm$ 0.0 & 0.93 $\pm$ 0.1 &
0.96 $\pm$ 0.0
 & 0.93 $\pm$ 0.1\\ \hline
\end{tabular}
}
\vspace{-0.5cm}
\label{tab:synthetic_with_x}
\end{table}

\begin{table}[t]
\centering
\caption{
    \footnotesize
    \textbf{F1 scores for our method and the baseline~\citet{dong2023versatile}}. Figure~\ref{fig:synthetic_without_x} exhibits the graphs.}
\vspace{0.1cm}
\resizebox{0.8\columnwidth}{!}{%
\begin{tabular}{|c|c|c|c|c|c|c|c|}
\hline
           & Graph 1 & 
           Graph 2 & 
           Graph 3 & 
           Graph 4 & 
           Graph 5 & 
           Graph 6 & 
           Graph 7     \\ \hline
Baseline & 0.24 $\pm$ 0.3 & 0.48 $\pm$ 0.0 & 0.33 $\pm$ 0.2 & 0.63 $\pm$ 0.1 & 0.0 $\pm$ 0.0 & 0.55 $\pm$ 0.1 & 0.0 $\pm$ 0.0 \\ \hline
Ours       & 1.0 $\pm$ 0.0   & 1.0 $\pm$ 0.0   & 0.73 $\pm$ 0.0  & 0.73 $\pm$ 0.0 & 0.75 $\pm$ 0.0 & 0.95 $\pm$ 0.0 & 1.0 $\pm$ 0.0 \\ \hline
\end{tabular}
}
\label{tab:synthetic_without_x}
\vspace{-0.5cm}
\end{table}

\textbf{Experimental setup.} We generate the hierarchical model $\cG$ with randomly sampled parameters, and follow \citep{kivva2021learning} to build the generating process from $\dd$ to the observed variables $\mathbf{x}$ (i.e., graph $\Gamma$) by a Gaussian mixture model. The graphs are exhibited in Figure~\ref{fig:synthetic_with_x} and Figure~\ref{fig:synthetic_without_x} in Appendix~\ref{app:synthetic}. We follow \citet{dong2023versatile} to use F1 score for evaluation. More details can be found in Appendix~\ref{app:synthetic}.

\textbf{Results and discussion.}
We choose \citet{kivva2021learning} as our baseline because it is the only method we know designed to learn a non-parametric, discrete latent model from continuous observations.
We evaluate both methods on graphs in Figure~\ref{fig:synthetic_with_x}.
As shown in Table~\ref{tab:synthetic_with_x} and Table~\ref{tab:synthetic_without_x}, our method consistently achieves near-perfect scores, while the baseline, despite correctly identifying $\Gamma$ and directing edges among $\dd$ components, cannot handle higher-level latent variables.

To verify Theorem~\ref{thm:rank_graph}, we evaluate Algorithm~\ref{alg:all} and a baseline~\citep{dong2023versatile} on graphs satisfying the conditions on $\cG$ (i.e., purely discrete models in Figure~\ref{fig:synthetic_without_x}). Our method performs well on graphs that meet conditions of Theorem~\ref{thm:identical_support} and achieves decent scores on graphs that do not (Figure~\ref{fig:synthetic_without_x} (c) and (e)). The significant margins over the baseline validate Theorem~\ref{thm:rank_graph} and Theorem~\ref{thm:identical_support}.

\section{Interpretations of Latent Diffusion}  \label{sec:connection}


In this section, we present a novel interpretation of latent diffusion (LD)~\citep{rombach2021highresolution} from the perspective of our hierarchical concept learning framework.
Concretely, the diffusion training objective can be viewed as performing denoising autoencoding at different noise levels~\citep{vincent2011connection,song2019generative}. 
Denoising autoencoders~\citep{vincent2008extracting,vincent2010stacked} and variants~\citep{pathak2016context,he2021masked} have shown the capability of extracting high-level, semantic representations as their encoder output. 
In the following, we adopt this perspective to interpret the diffusion model’s representation (i.e., the UNet encoder output) through our hierarchical model, which connects the noise level and the hierarchical level of the latent representation in our causal model.
For brevity, we refer to the diffusion model encoder's output as diffusion representation.

\paragraph{Discrete variables and representation embeddings.}
In practice, discrete variables are often modeled as embedding vectors from a finite dictionary (e.g., wording embeddings).
Therefore, although diffusion representation is not discrete, we can interpret it as an ensemble of embeddings of involved discrete variables.
\citet{park2023understanding} empirically demonstrates that one can indeed decompose the diffusion representation into a finite set of basis vectors that carry distinct semantic information, which can be viewed as the concept embedding vectors.


\textbf{Vector-quantization.}
Given an image $\xx$, LD first discretizes it with a vector-quantization generative adversarial network (VQ-GAN)~\citep{esser2021taming}:$\dd = f_{ \text{VQ} } (\xx).$
Through the lens of our framework, VQ-GAN represents the image with a rich but finite set of embeddings of bottom-level concepts $\dd$ and discards nuances in the continuous representation $\cc$, inverting the generation process in Equation~\ref{eq:discrete_generating}.

\begin{wrapfigure}{r}{7cm}
    \centering
    \vspace{-0.7cm}
    \includegraphics[width=7cm]{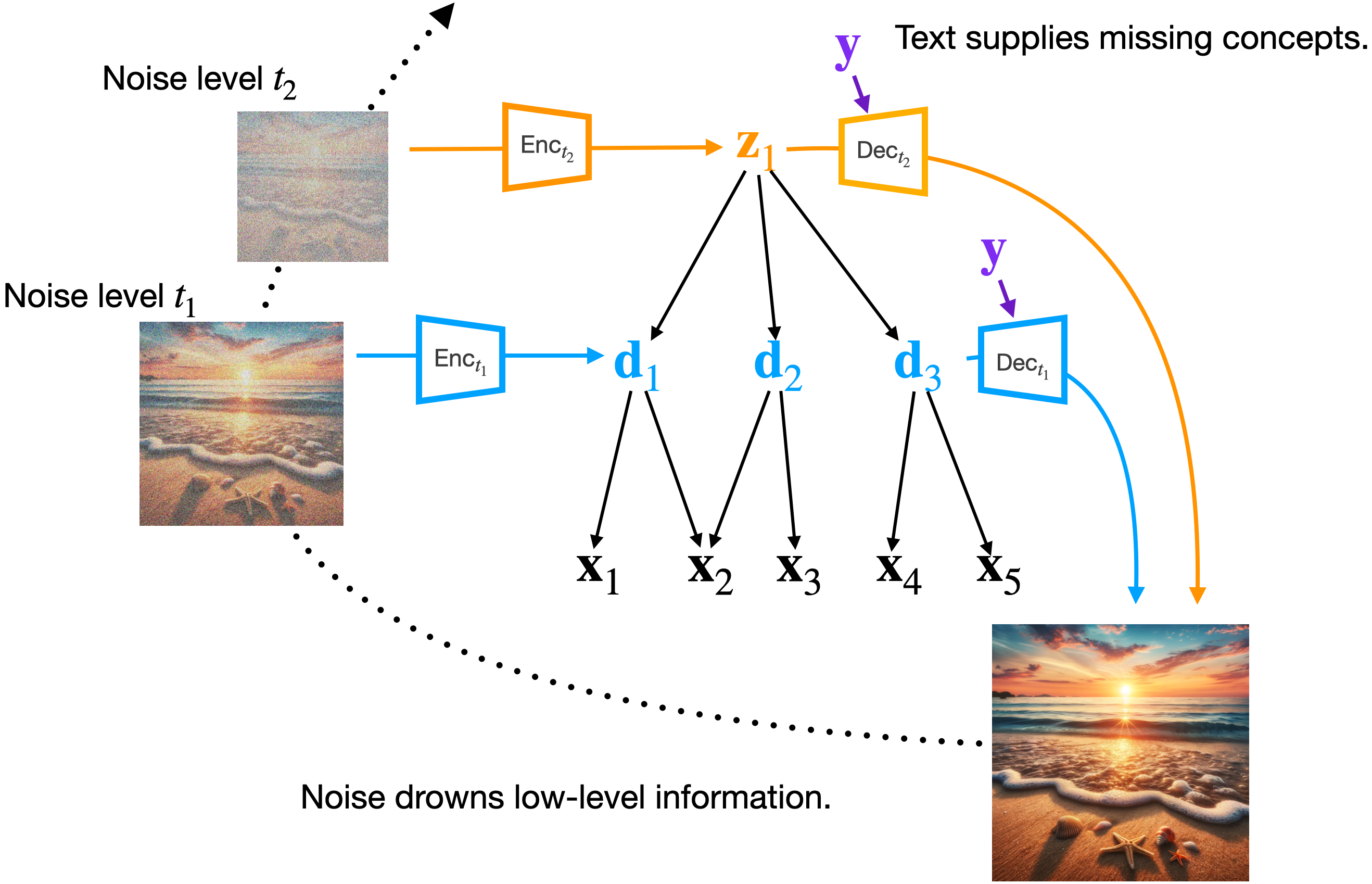}
    \vspace{-0.5cm}
    \caption{
        \small
        \textbf{Diffusion models estimate the latent hierarchical model.}
        Different noise levels correspond to different concept levels.
        To avoid cluttering, we leave out vector quantization.
    }
    \vspace{-0.3cm}
    \label{fig:thesis_figure}
\end{wrapfigure}
\paragraph{Denoising objectives.}
As discussed, diffusion training can be viewed as denoising the corrupted embedding $\tilde{\dd}$ to restore noiseless $\dd$~\citep{song2019generative,vincent2008extracting,vincent2010stacked,gu2022vector} for a designated denoising model $ f_{t} $ at noise level $t$:
\begin{align}
    \argmax_{f_{t}} \EEb{ \dd, \tilde{\dd}_{t} \sim \mathbb{Q}_{t}( \tilde{\dd}_{t} | \dd ) }{ \mathbb{P}_{ f_{t} }( \dd | \tilde{\dd}_{t}, \yy ) },
\end{align}
where $\yy$ denotes the text prompt.
Under this objective, the model is supposed to compress the noisy view $\tilde{\dd}_{t}$ to extract a clean, high-level representation, together with additional information from the text $\yy$, to reconstruct the original embedding $\dd$.
Formally, the denoising model $f_{t}:= f_{\text{dec}, t} \circ f_{\text{enc}, t} $ performs auto-encoding $ \zz_{\cS(t)} = f_{\text{enc}, t} ( \tilde{\dd}_{t} ) $ and $ \hat{\dd} = f_{ \text{dec}, t} ( \zz_{\cS(t)}, \yy ) $, where
we use $\cS(t)$ to indicate the dependence on the noise level $t$.
We can view the compressed representation $\zz_{\cS(t)}$ as a set of high-level latent variables in the hierarchical model: \textit{the encoder $ f_{\text{enc}, t} $ maps the noisy view $\tilde{\dd}_{t}$ to high-level latent variables $ \zz_{\cS_{t}} $ and the decoder $ f_{ \text{dec}, t } $ assimilates the text information $\yy$ and reconstructs the original view $\dd $}.
In practice, $f_{t}$ is implemented as a single model (e.g., UNets) paired with time embeddings.
We visualize this process in Figure~\ref{fig:thesis_figure}.


\paragraph{Noise levels and hierarchical levels.}
Intuitively, the noise level controls the amount of semantic information remaining in $\tilde{\dd}_{t}$.
For instance, a high noise level $t$ drowns the bulk of the low-level concepts in $\dd$, leaving only sparse high-level concepts in $\tilde{\dd}_{t}$.
In this case, the diffusion representation $ \zz_{\cS(t)} $ estimates a high concept level in the hierarchical model.
In Figure~\ref{fig:thesis_figure}, a high noise level may destroy low-level concepts, such as the sand texture and the waveforms, while preserving high-level concepts, such as the beach and the sunrise.
In Section~\ref{subsec:hierarchical_ordering}, we follow \citet{park2023understanding} to demonstrate diffusion representation's semantic levels under different noise levels.

\paragraph{Theory and practice.}
We connect LD training and estimating latent variables in the hierarchical model in an intuitive sense.
Our theory focuses on the fundamental conditions of the data-generating process and does not directly translate to guarantees for LD.
That said, our conditions naturally have implications on the algorithm design.
For instance, a sparsity constraint on the decoding model may facilitate the identification condition that variables influence each other sparsely (e.g., pure children in Condition~\ref{cond:hierarchy}).
In Section~\ref{subsec:concept_slider}, we show such constraints are beneficial for concept extraction.
We hope that our new perspective can provide more novel insights into advancing practical algorithms.

\section{Real-world Experiments} \label{sec:exp}

\subsection{Discovering Hierarchical Concept Structures from Diffusion Models} \label{subsec:hierarchical_ordering}

\begin{figure*}[t]
    \centering
    \includegraphics[width=.8\textwidth]{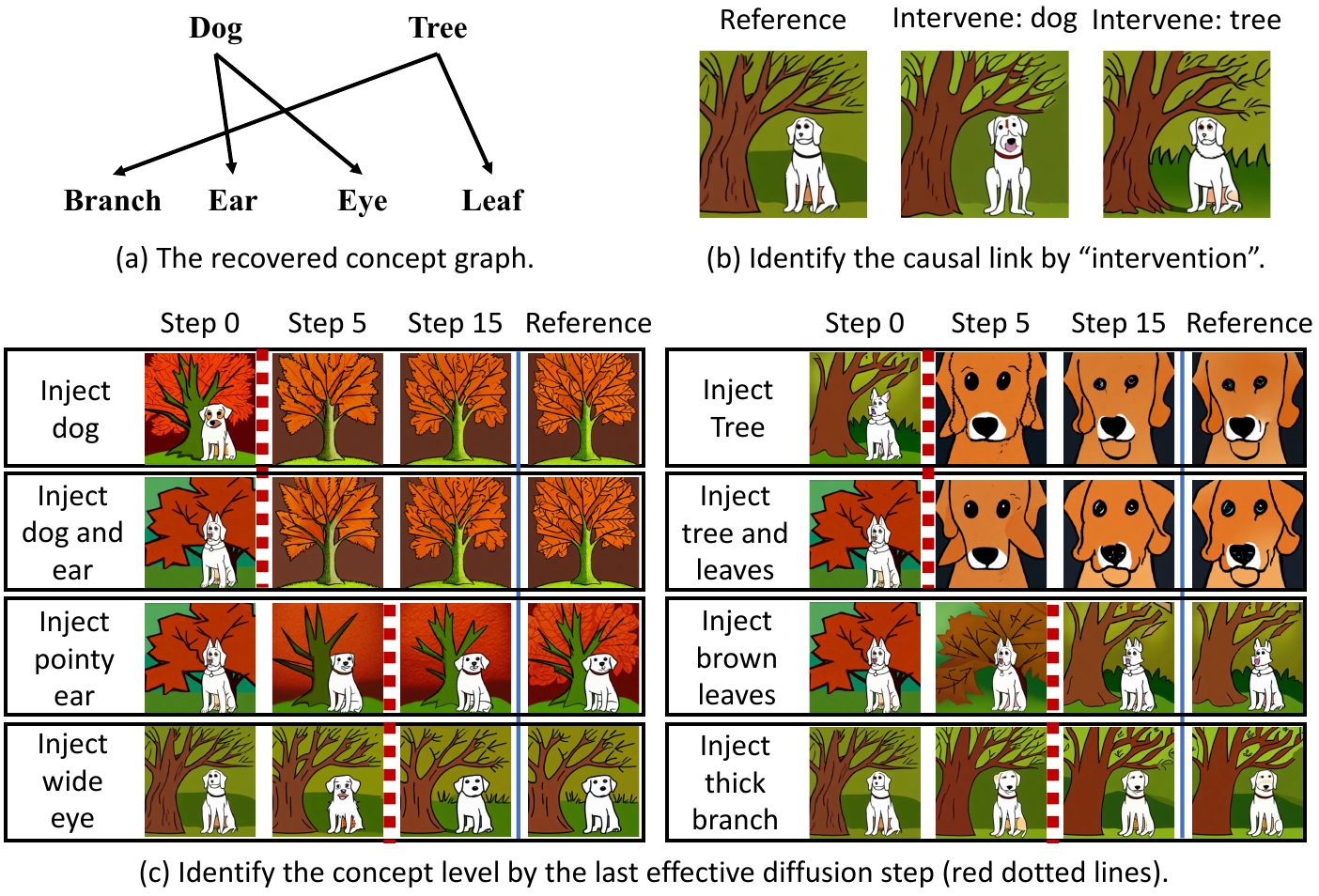}
    \caption{
    \small
    \textbf{Recovering concepts and their relationships from LD.} 
    \textbf{(a)} The final recovered concept graph among concepts ``dog'', ``tree'', ``eyes'', ``ears'', ``branch'', and ``leaf''.
    \textbf{(b)} Identifying causal links through ``interventions''. For example, we compare two prompts that vary in ``dog'': ``a dog with wide eyes and a wilting tree with short branches, in a cartoon style'' and ``a big dog with wide eyes and a wilting tree with short branches, in a cartoon style''. We observe significant changes in ``eyes'' but not in ``branch'', indicating a causal link between ``dog'' and ``eyes'' but not between ``dog'' and ``branch''.
    \textbf{(c)} Identifying concept levels by the last effective diffusion step. For example, we use the base prompt ``a tree with long branches, in a cartoon style'' and prepend ``dog'' at steps 0, 5, and 15. Only injecting ``dog'' at step 0 works. Similarly, injecting ``wide eyes'' works at both steps 0 and 5, indicating that ``dog'' is a higher-level concept than ``eyes''.
}
    \label{fig:recovered_graph}
    \vspace{-0.6cm}
\end{figure*}

In Figure~\ref{fig:recovered_graph}, we extract concepts and their relationships from LD through our hierarchical model interpretation.
Our recovery involves two stages: determining the concept level and identifying causal links. 
We add a textual concept, like ``dog'', into the prompt and identify the latest diffusion step that would render this concept properly. 
If ``dog'' appears in the image only when added at step 0 and ``eye'' appears when added from step 5, it indicates that ``dog'' is a higher-level concept than ``eyes''. 
After determining the levels of concepts, we intervene on a high-level concept and observe changes in low-level ones. No significant changes indicate no direct causal relationship.
We explore the relationships among the concepts ``dog'', ``tree'', ``eyes'', ``ears'', ``branch'', and ``leaf''. 
Figure~\ref{fig:recovered_graph} presents the final recovered graph and intermediate results. 
See Section~\ref{app:causal_order_exps} for more investigation.

\subsection{Diffusion Representation as Concept Embeddings}

We support our interpretations in Section~\ref{sec:connection} that diffusion representation can be viewed as concept embeddings, and it corresponds to high-level concepts for high noise levels.
Following \citet{park2023understanding}, we modify the diffusion representation along certain directions found unsupervisedly.
We can observe that this manipulation gives rise to semantic concept changes rather than entangled corruption Figure~\ref{fig:semantic_unet}.
Editing the latent representation at early steps corresponds to shifting global concepts.
\begin{wrapfigure}{r}{5cm}
    \centering
    \vspace{-0.5cm}
    \includegraphics[width=5cm]{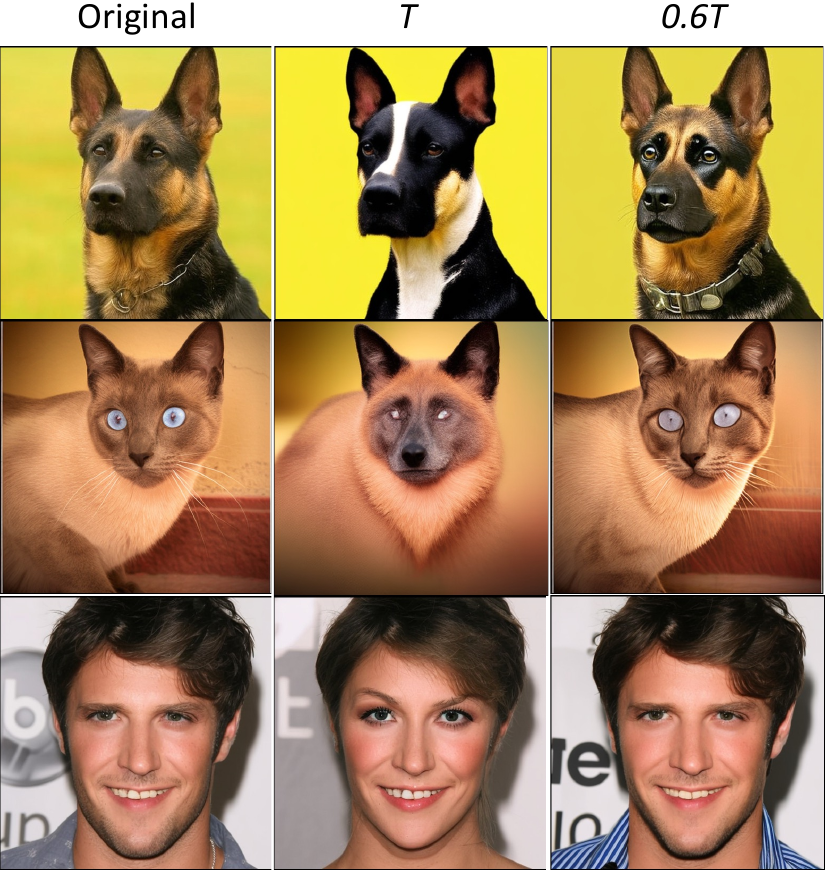}
    \caption{
        \small 
        \textbf{Semantic latent space.}
        We modify the diffusion model's representation (UNet encoder's output) along principal directions at steps $T$ and $0.6T$.
        Structure changes indicate the semantics of the representation and manipulation at the early time $T$ induces global shifts. 
        See more examples in Figure~\ref{fig:semantic_unet_more}.
    }
    \label{fig:semantic_unet}
    \vspace{-1.1cm}
\end{wrapfigure}
In Figure~\ref{fig:semantic_unet}, the latent representation in earlier steps (step $T$) determines breeds (the top row), species (the middle row), and gender (the bottom row).
In contrast, the latent representation in later steps (step $0.6T$) correlates with the dog collar, cat eyes, and shirt patterns. Implementation details and additional results are provided in Appendix~\ref{app:implementation}.

\subsection{Causal Sparsity for Concept Extraction} 
\label{subsec:concept_slider}

Recent work~\citep{gandikota2023sliders} shows that concepts can be extracted as low-rank parameter subspaces of LD models via LoRA~\citep{hu2021lora}. 
This low-rankness limits the complexity of text-induced changes, resembling sparse influences from latent concepts to their descendants. 
Our theory suggests that different levels of concepts may require varying sparsity levels to capture.
We present empirical evidence in Section~\ref{app:sparsity_results}.
Motivated by this, we design an adaptive sparsity selection mechanism for capturing concepts at different levels.
Inspired by \citet{ding2023sparse}, we implement a sparsity constraint on the LoRA dimensionality for the model to select the LoRA rank at each module automatically, benefiting concept extraction (see Appendix~\ref{app:sparsity_results}).

\section{Conclusion} \label{sec:conclusion}
In this work, we cast the task of learning concepts as the identification problem of a discrete latent hierarchical model. 
Our theory provides conditions to guarantee the recoverability of discrete concepts. 
\textbf{Limitations:} Although our theoretical framework provides a lens for interpretation, our conditions do not directly guarantee diffusion's success, which would require nontrivial assumptions.
Also, Algorithm~\ref{alg:all} can be expensive for large graphs due to the dependency on the state count. 
We leave giving guarantees to diffusion models and efficient graph learning algorithms as future work.

\clearpage

\paragraph{Acknowledgments.}
We thank the anonymous reviewers for their valuable insights and recommendations, which have greatly improved our work.
The work of L. Kong and Y. Chi is supported in part by NSF DMS-2134080. 
This material is based upon work supported by NSF Award No. 2229881, AI Institute for Societal Decision Making (AI-SDM), the National Institutes of Health (NIH) under Contract R01HL159805, and grants from Salesforce, Apple Inc., Quris AI, and Florin Court Capital.
This research has been graciously funded by the National Science Foundation (NSF) CNS2414087, NSF BCS2040381, NSF IIS2123952, NSF IIS1955532, NSF IIS2123952; NSF IIS2311990; the National Institutes of Health (NIH) R01GM140467; the National Geospatial Intelligence Agency (NGA) HM04762010002; the Semiconductor Research Corporation (SRC) AIHW award 2024AH3210; the National Institute of General Medical Sciences (NIGMS) R01GM140467; and the Defense Advanced Research Projects Agency (DARPA) ECOLE HR00112390063. 
Any opinions, findings, and conclusions or recommendations expressed in this publication are those of the author(s) and do not necessarily reflect the views of the National Science Foundation, the National Institutes of Health, the National Geospatial Intelligence Agency, the Semiconductor Research Corporation, the National Institute of General Medical Sciences, and the Defense Advanced Research Projects Agency.


\bibliography{references}
\bibliographystyle{unsrtnat}

\newpage
\appendix
\onecolumn

\textit{\large Appendix for}\\ \ \\
      {\large \bf ``Learning Discrete Concepts in Latent Hierarchical
Causal Models''}\

\newcommand{\beginsupplement}{%
	\setcounter{table}{0}
	\renewcommand{\thetable}{A\arabic{table}}%
	
	\setcounter{figure}{0}
	\renewcommand{\thefigure}{A\arabic{figure}}%

	\setcounter{section}{0}
	\renewcommand{\thesection}{A\arabic{section}}%

    \setcounter{theorem}{0}
    \renewcommand{\thetheorem}{A\arabic{theorem}}%

    \setcounter{corollary}{0}
    \renewcommand{\thecorollary}{A\arabic{corollary}}%

        \setcounter{lemma}{0}
    \renewcommand{\thelemma}{A\arabic{lemma}}%

}

\vspace{.5cm}
\beginsupplement

{\large Table of Contents}

\DoToC 

\clearpage

\section{Related Work} \label{app:related_work}

\paragraph{Concept learning.}
In recent years, a significant strand of research has focused on employing labeled data to learn concepts in generative models' latent space for image editing and manipulation~\citep{gal2022image,jahanian2020steerability,härkönen2020ganspace,shen2020interpreting,wu2020stylespace,ruiz2023dreambooth}.
Concurrently, another independent research trajectory has been exploring unsupervised concept discovery and its potential to learn more compositional and transferable models, as shown in~\citet{burgess2019monet, locatello2020objectcentric,du2021unsupervised,du2021unsupervised3d,liu2023unsupervised}.
These prior works focus on the empirical methodological development of concept learning by proposing novel neural network architectures and training objectives, with limited discussion on the theoretical aspect. In contrast, our work investigates the theoretical foundation of concept learning. Specifically, we formulate concept learning as an identification problem for a discrete latent hierarchical model and provide conditions under which extracting concepts is possible. Thus, the existing work and our work can be viewed as two complementary lines of research for concept learning.
Concurrently, a plethora of work has been dedicated to extracting interpretable concepts from high-dimensional data such as images. 
Concept-bottleneck~\citep{koh2020concept} first predicts a set of human-annotated concepts as an intermediate stage and then predicts the task labels from these intermediate concepts. This paradigm has attracted a large amount of follow-up work~\citep{zarlenga2022concept,yuksekgonulpost,kim2023probabilistic,havasi2022addressing,shang2024incremental,chauhan2023interactive}. A recent surge of pre-trained multimodal models (e.g., CLIP~\citep{radford2021learning}) can explain the image concepts through text directly~\citep{oikarinen2022clip,moayeri2023text,moayeri2023text2concept}. 
In contrast with these successes, our work focuses on the formulation of concept learning and theoretical guarantees.

\paragraph{Latent hierarchical models.}
Complex real-world data distributions often possess a hierarchical structure among their underlying latent variables.
On the theoretical front, \citet{xie2022identification,huang2022latent,dong2023versatile} investigate identification conditions of latent hierarchical structures under the assumption that the latent variables are continuous and influence each other through linear functions.
\citet{kong2023identification} extends the functional class to the nonlinear case over continuous variables.
\citet{Pearl88,zhang2004hierarchical,choi2011learning,gu2023bayesian} study fully discrete cases and thus fall short of modeling the continuous observed variables like images.
Specifically, \citet{Pearl88,zhang2004hierarchical,choi2011learning} focus on the latent trees in which every pair of variables is connected through exactly one undirected path.
\citet{gu2023bayesian} assume a multi-level DAG~\citep{anandkumar2013learning} in which variables can be partitioned into disjoint groups (i.e., levels), such that all edges are between adjacent levels, with the observed variables as the bottom level (i.e., leaf nodes).
In contrast, we show that we can not only extract discrete components from continuous observed variables but also uncover higher-level concepts and their interactions.
Our graphical conditions admit multiple paths within each pair of latent variables, flexible hierarchical structures that are not necessarily multi-level, and flat structures in which all latent variables are adjacent to observed variables~\citep{kivva2021learning}.
On the empirical side, prior work~\citep{sonderby2016ladder} improves the inference model of vanilla VAEs by combining bottom-up data-dependent likelihood terms with prior generative distribution parameters. 
\citet{zhao2017learning} assign more expressive (deeper) neural modules to higher-level variables to learn a more disentangled generative model. 
\citet{li2019learning} present a VAE/clustering approach to empirically estimating latent tree structures.
\citet{leeb2024structure} propose to feed latent variable partitions into different decoder neural network layers and remove the prior regularization term to enable high-quality generation. Like our work, \citet{ross2022benchmarks} consider discrete latent variables. However, their focus is on empirical evaluation benchmarks and metrics, without touching on the theoretical formulation of this task. Unlike these efforts, our work concentrates on the formalization of the data-generating process and the theoretical understanding. Thus, these two lines complement each other.

\paragraph{Latent variable identification.}
Identifying latent variables under nonlinear transformations is central to representation learning on complex unstructured data. 
\citet{khemakhem2020variational,khemakhem2020icebeem,hyvarinen2016unsupervised,hyvarinen2019nonlinear} assume the availability of auxiliary information (e.g., domain/class labels) and that the latent variables' probability density functions have sufficiently different derivatives over domains/classes.
However, many important concepts (e.g., object classes) are inherently discrete. 
Since latent variables are not equipped with differentiable density functions, identifying these concepts necessitates novel techniques.
Our theory requires neither domain/class labels nor differentiable density functions and can accommodate discrete variables readily.
Another line of studies~\citep{brady2023provably,lachapelle2023additive} refrains from the auxiliary information by making sparsity and mechanistic independence assumptions over latent variables, disregarding causal structures among the latent variables.
Moreover, images may comprise abstract concepts and convey sophisticated interplay among concepts at various levels of abstraction.
In this work, we address these limitations by formulating the concept space as a discrete hierarchical causal model, capturing concepts at distinct levels and their causal relations.

\paragraph{Latent diffusion understanding.}
Diffusion probabilistic models~\citep{sohldickstein2015deep,ho2020denoising,rombach2021highresolution,song2022denoising,dhariwal2021diffusion,nichol2021improved} have recently become the workhorse for state-of-the-art image generation.
Diffusion models' empirical success sparked a plethora of efforts to probe into their empirical properties.
\citet{kwon2023diffusion,park2023understanding} discover that the UNet bottleneck representation exhibits highly structured semantic properties, traversing over which manipulates the generated image in a meaningful manner.
\citet{choi2022perception,daras2022multiresolution,wu2022uncovering,sclocchi2024phase} realize that early/late diffusion steps at the inference correlate with coarse/fine features in the output. 
Recently, \citet{gandikota2023sliders} showcase that concepts are encoded by low-rank influences in latent diffusion models.     
The theoretical insights in our work consolidate these apparently separate strands of empirical observations and also lead to new understandings that could enhance empirical methodologies.

\section{Proof for Theorem~\ref{thm:discrete_identification}} \label{app:discrete_proof}

\discreteconditions*
\discretetheorem*

\begin{proof}[Proof of Theorem~\ref{thm:discrete_identification} Part 1]

    The estimate $ \hat{\dd} $ and the true variable $ \dd $ are related through the map $ [\hat{\dd}, \hat{\cc}] = \hat{g}^{-1} \circ g (\dd, \cc) $.
    In the following, we show that the induced relation between $\dd$ and $\hat{\dd}$ is invertible under Condition~\ref{cond:discrete_component_conditions}-\ref{asmp:invertibility}.
    The estimated generating process respects the conditions on the true generating process.

    We denote that support of the estimate $\hat{\dd}$ as $ \hat{\Omega}^{(d)} $.
    First, we show by contradiction that for each state $k \in \Omega^{(d)}$, $ k $ corresponds to at most one state $ \hat{k} \in \hat{\Omega}^{(d)} $ of the estimate $\hat{\dd}$.
    
    Suppose that $ k $ corresponds to two distinct states $ \hat{k}_{1} $ and $ \hat{k}_{2} $.
    That is, there exist $ \cc_{1}, \cc_{2} \in \cC $ and $ \hat{\cc}_{1}, \hat{\cc}_{2} \in \hat{\cC}$, such that $ \hat{g}^{-1} \circ g ( k, \cc_{1} ) =  [\hat{k}_{1}, \hat{\cc}_{1}] $ and $ \hat{g}^{-1} \circ g( k, \cc_{2} ) = [\hat{k}_{2}, \hat{\cc}_{2}] $.
    On one hand, As $ g( k, \cdot ) $ is a continuous function and $ \cC $ is connected, the image $ \cI ( k ):= g( k, \cC ) $ is a connected set.
    On the other hand, $ \hat{\cI} ( k_{1} ):= \hat{g}( \hat{k}_{1}, \hat{\cC}) $ and $ \hat{\cI} ( \hat{k}_{2} ):= \hat{g}( \hat{k}_{2}, \hat{\cC}) $ are two separate sets due to the invertibility and continuity of $ \hat{g} $ and the closed-ness of $ \hat{\cC}$. 
    To see this, invertibility implies that $ \hat{\cI} ( k_{1} )$ and $ \hat{\cI} ( k_{2} )$ are disjoint. 
    The fact that $\hat{g}$ is continuous over $\hat{\cc}$ and has a continuous inverse over $\hat{\cc}$ implies that $ \hat{\cI} ( k_{1} )$ and $ \hat{\cI} ( k_{2} )$ preserve the closed-ness of $\cC$.
    The space formed by two disjoint closed subspaces is disconnected.
    Since $ k $ corresponds to $ \hat{k}_{1} $ and $ \hat{k}_{2} $, it follows that $ \cI ( k ) = \hat{\cI}_{1} \cup \hat{\cI}_{2} $ where $ \hat{\cI}_{1} $ and $ \hat{\cI}_{2} $ are nonempty subsets of $  \hat{\cI}_{ \hat{k}_{1} } $ and $ \hat{\cI}_{ \hat{k}_{2} } $ respectively and inherit their separability.
    As $ \cI (k ) $ is a union of two nonempty separated sets, it is disconnected.
    This contradicts the connectedness of $ \cI (k) $.
    Therefore, for each state $k \in \Omega^{(d)}$, $ k $ corresponds to at most one state $ \hat{k} \in \hat{\Omega}^{(d)}$.

    Having established that each state of $\dd$ corresponds to at most one state of $\hat{\dd}$, we now show that states $\hat{k}_{1}$, $\hat{k}_{2}$ of $\hat{\dd}$ corresponding to distinct states $k_{1} \neq k_{2}$ of $\dd$ must also be distinct, i.e., $ \hat{k}_{1} \neq \hat{k}_{2} $ if $ k_{1} \neq k_{2} $.
    Suppose that $ \exists k_{1} \neq k_{2} $, such that the corresponding states $ \hat{k}_{1} = \hat{k}_{2} $.
    We denote $ \hat{k} := \hat{k}_{1} = \hat{k}_{2}$ and two arbitrary points $ \xx_{1}: = g( k_{1}, \cc_{1} ) $ and $ \xx_{2}:= g( k_{2}, \cc_{2} ) $ from modes $ k_{1} $ and $ k_{2} $ respectively.
    As the two estimated discrete states collapse at $\hat{k}$, it follows that
    \begin{align}
        \xx_{1} = g ( k_{1}, \cc_{1} ) &= \hat{g}(\hat{k}, \hat{\cc}_{1}) \\
        \xx_{2} = g ( k_{2}, \cc_{2} ) &= \hat{g}(\hat{k}, \hat{\cc}_{2}).
    \end{align}
    Since $\hat{g}( \hat{k}, \cdot )$ is continuous and $\hat{\cC} $ is a connected set, the image $ \hat{g}( \hat{k}, \hat{\cC} ) $ is path-connected.
    Thus, we can find a path $ f: [0, 1] \to \cX $ such that $ f(0) = \xx_{1} $ and $ f(1) = \xx_{2} $.
    Also, each point on the path $f$ has a positive probability density due to positive $ \hat{p}( \hat{\cc} ) $ and $ \Pb{ \hat{\dd} = \hat{k} } $.
    However, the two images $g ( k_{1}, \cC )$ and $ g ( k_{2}, \cC ) $ are disconnected due to the invertibility of $g$ and $ k_{1} \neq k_{2} $.
    On any path from $\xx_{1}$ to $ \xx_{2} $, there exists points $ \xx_{0} $ such that the density is strictly zero due to the discrete structure of $ \dd $.
    Thus, we have arrived at a contradiction.
    We have shown that if $ k_{1} \neq k_{2} $, the corresponding estimated states are distinct $ \hat{k}_{1} \neq \hat{k}_{2}$.

    Since for for each $k\in \Omega^{(d)}$, $k$ corresponds to at most one state $\hat{k}$ and distinct states $k_{1}$, $k_{2}$ give rise to distinct states $\hat{k}_{1}$, $\hat{k}_{2}$, we have proven that for each $k\in\Omega^{(d)}$, $ k $ corresponds to exactly one estimated state $ \hat{k} \in \hat{\Omega}^{(d)} $.
    
\end{proof}

\begin{definition}[Mixture Oracles] \label{def:oracle}
    Let $\xx$ be a set of observed variables and $\dd \in \Omega^{(d)}$ be a discrete latent variable. The mixture model is defined as $ \Pb{\xx} = \sum_{k\in \Omega^{(d)}} \Pb{\dd=k} \Pb{\xx | \dd=k} $.
    A mixture oracle $\text{MixOracle}(\xx)$ takes $\Pb{\xx}$ as input and returns the number of components $ \abs{\Omega^{(d)}} $, the weights $ \Pb{\dd = k} $ and the component $ \Pb{\xx | \dd=k} $ for $ k \in \Omega^{(d)}$. \footnote{
        We abuse the notation $\Pb{\cdot}$ to denote probability density functions for continuous variables and mass functions for discrete variables.
    }
\end{definition}

\begin{theorem}[\citet{kivva2021learning}] \label{thm:mixture_oracle_identification}
    Under Condition~\ref{cond:basic_model} and Condition~\ref{cond:discrete_component_conditions}-\ref{asmp:subset_influences}, on can reconstruct the bipartite graph $\Gamma$ between $\dd$ and $\xx$, and the joint distribution $\Pb{d_{1}=k_{1}, \dots, d_{n_{d}} = k_{n_{d}}}$ from $ \Pb{\xx} $ and $ \text{MixOracle}(\xx) $.
\end{theorem}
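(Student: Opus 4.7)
The plan is to leverage the mixture oracle to effectively observe $\Pb{\xx|\dd=k}$ for every joint state $k \in \Omega^{(d)}$, and then combinatorially disentangle this joint state into individual components $d_i$ by analyzing how each observed variable's conditional varies across $k$. This reduces the problem to a structural identification task for the bipartite discrete latent model, which can be solved using partition-refinement / tensor-decomposition arguments.

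First, I would marginalize the component distributions returned by the oracle: for each observed variable $x_j$, compute $\phi^{(j)}_k(\cdot) := \Pb{x_j | \dd = k}$. By the Markov property with respect to $\Gamma$, $\phi^{(j)}_k$ depends only on the sub-configuration $k_{\text{Pa}_\Gamma(x_j)}$. Condition~\ref{cond:basic_model}-\ref{asmp:nondegeneracy} guarantees that distinct parent configurations yield distinct conditionals, so the equivalence relation $k \sim_j k'$ iff $\phi^{(j)}_k = \phi^{(j)}_{k'}$ partitions $\Omega^{(d)}$ exactly into the level sets of $k \mapsto k_{\text{Pa}_\Gamma(x_j)}$. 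Each $\sim_j$ is therefore an observable surrogate for the unknown parent projection of $x_j$.

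Next, I would piece together the collection of partitions $\{\sim_j\}_j$ to recover both the bipartite graph $\Gamma$ and the factorization $k \leftrightarrow (k_1,\ldots, k_{n_d})$. A latent variable $d_i$ is a parent of $x_j$ precisely when $\sim_j$ refines along the $d_i$-axis; intersecting and comparing partitions over different subsets of observed variables reveals shared parents and separates distinct latent variables. Condition~\ref{cond:discrete_component_conditions}-\ref{asmp:subset_influences} ensures every $d_i$ has a ``witness'' child that is not shared with any $d_{i'}$, enabling their separation, while Condition~\ref{cond:basic_model}-\ref{asmp:twins},\ref{asmp:maximality} rule out spurious reinterpretations in which latent variables are silently merged or split. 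Once $\Gamma$ and the factorization are determined, the joint distribution $\Pb{d_1=k_1,\ldots,d_{n_d}=k_{n_d}}$ is immediately read off as the oracle weight $\Pb{\dd = k}$ under the recovered labeling.

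I expect the main obstacle to be rigorously proving uniqueness: one must show that no alternative bipartite structure over a different set of discrete latent variables, together with a different joint distribution, produces exactly the same family $\{\sim_j\}_j$ of partitions and the same component densities $\phi_k(\xx)$. This is a combinatorial argument that couples Condition~\ref{cond:discrete_component_conditions}-\ref{asmp:subset_influences} with the no-twins and maximality requirements --- the non-subset property supplies, for each latent variable, an observed child witnessing its distinctive influence, while no-twins/maximality exclude degenerate alternatives such as collapsing two variables with identical neighbor sets. An unavoidable subtlety is the permutation indeterminacy aligned with Definition~\ref{def:permutation}: the reconstruction is unique only up to a relabeling of the latent indices $1,\ldots,n_d$ and of the internal states of each $d_i$, which is why the theorem's conclusion is stated at the level of $\Gamma$ and the joint table rather than a literal identification of each $d_i$.
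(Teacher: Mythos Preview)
The paper does not actually prove this statement: Theorem~\ref{thm:mixture_oracle_identification} is quoted verbatim as a result of \citet{kivva2021learning} and is invoked as a black box inside the proof of Theorem~\ref{thm:discrete_identification} Part~2. There is therefore no ``paper's own proof'' to compare against; what you have written is an attempt to re-derive a cited external theorem.

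As an outline of such a derivation your proposal is in the right spirit but leaves the hard step vague. Marginalizing the oracle output to obtain $\phi^{(j)}_k$ and reading off the equivalence relations $\sim_j$ is sound under non-degeneracy. Two points need tightening. First, your reading of Condition~\ref{cond:discrete_component_conditions}-\ref{asmp:subset_influences} is slightly off: it says only that for each \emph{ordered pair} $(i,i')$ there exists a child in $\text{Ch}_\Gamma(d_i)\setminus\text{Ch}_\Gamma(d_{i'})$; it does not promise a single child witnessing $d_i$ against all other latents simultaneously, so the separation argument must be carried out pairwise. Second, the phrase ``intersecting and comparing partitions \ldots\ reveals shared parents and separates distinct latent variables'' is exactly where the real content lies --- one has to show that the family $\{\sim_j\}$ admits a \emph{unique} product decomposition $\Omega^{(d)}\cong \Omega^{(d)}_1\times\cdots\times\Omega^{(d)}_{n_d}$ compatible with every $\sim_j$, and this is the combinatorial core that \citet{kivva2021learning} establish (typically by querying the oracle on subsets of the observed variables and using the resulting component counts together with no-twins/maximality). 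Your sketch acknowledges this obstacle but does not indicate how to overcome it.
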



\begin{proof}[Proof of Theorem~\ref{thm:discrete_identification} Part 2]


\paragraph{Step 1}: Given the first result in Theorem~\ref{thm:discrete_identification}, we can identify the discrete state index $k$ for each realization of $\mathbf{x}$ (up to permutations). Since we can do this to all realizations of $\xx$ and we are given $ \Pb{\xx}  $, we can compute the cardinality of the discrete subspace $ | \Omega^{(d)} |  $, the marginal distribution of each latent state $ \Pb{ \dd = k } $, and the conditional distribution $ \Pb{ \xx | \dd = k } $ for $k \in \Omega^{(d)}$.

\paragraph{Step 2}: Step 1 shows the availability of the mixture oracle MixOracle (i.e., $ | \Omega |  $, $ \Pb{ \dd = k } $, and $\Pb{ \xx | \dd = k } $ ) as defined in Definition~\ref{def:oracle}.
Now, all conditions employed in Theorem~\ref{thm:mixture_oracle_identification} are ready, namely Condition~\ref{cond:basic_model}, Condition~\ref{cond:discrete_component_conditions}~\ref{asmp:subset_influences}, and MixOracle (the consequence of step 1).
The derivation in \citet{kivva2021learning} entails identifying a map from the discrete subspace state index $\dd=k$ where $ k \in \Omega^{(d)} $ to all discrete components’ state indices $ [d_{1}, \dots, d_{n_{d}} ] = [ k_{1}, \dots, k_{n_{d}} ]  $ where $ k_{i} \in \Omega^{(d)}_{i}  $ is the state index of the $i$-th component $ d_{i} $. Thus, we can utilize this map to identify the state index for each individual discrete variable $d_{i}$ from the global index $k$.


\paragraph{Step 3}: As stated in Step 2, all conditions in Theorem~\ref{thm:mixture_oracle_identification} hold in our problem. Since Theorem Theorem~\ref{thm:mixture_oracle_identification} additionally identifies the bipartite graph $\Gamma$ between $ \{ x_{1}, x_{2}, x_{3}, \dots \} $ and$ \{ d_{1}, d_{2}, d_{3}, \dots \}$, the same follows in our case.

\end{proof}

\section{Proof for Theorem~\ref{thm:hierarchical_model_identification}} \label{app:hierarchical_proof}

In this section, we present a proof for Theorem~\ref{thm:hierarchical_model_identification}.
Since all variables are discrete for this proof, for a set of variables $\mA$, we adopt the notation $ \mA = i $ to indicate the joint state of all variables in $\mA$.

As outlined in Section~\ref{sec:theory}, we will derive Theorem~\ref{thm:rank_graph} which serves as the bridge between the distributional information and the graphical information, equivalent to the role of Theorem~\ref{thm:continuous_rank_graph}~\citet{Sullivant_2010} in \citet{dong2023versatile,huang2022latent}.

To familiarize the reader with the context, we introduce Theorem~\ref{thm:continuous_rank_graph} and the involved graphical definitions treks~\ref{def:treks}, t-separation~\ref{def:t_sep}, and its connection between d-separation~\citep{pearl2009causality}.

\treks*

Intuitively, a trek is a path containing at most one fork structure and no collider structures.
Given this definition, a notion of t-separation is introduced~\citep{spirtes2001causation}, reminiscent of the classic d-separation. 


\tsep*

\begin{theorem}[Equivalence between d-separation and t-separation~\citep{di2009t}] \label{thm:d_t_equivalence}
    Suppose we have disjoint vertex sets $ \mA $, $ \mB $, and $\mC$ in a DAG.
    Set $\mC$ d-separates set $ \mA $ and set $\mB$ if and only if there exists a partition $ \mC := \mC_{\mA} \cup \mC_{\mB} $ such that $ (\mC_{\mA}, \mC_{\mB}) $ t-separates $ \mA \cup \mC $ and $ \mB \cup \mC $.
\end{theorem}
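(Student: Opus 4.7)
The plan is to prove both implications of this purely graph-theoretic equivalence, leveraging the trek decomposition of paths in a DAG.

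For the ($\Leftarrow$) direction, I would suppose $(\mC_{\mA}, \mC_{\mB})$ t-separates $\mA \cup \mC$ and $\mB \cup \mC$, and assume toward a contradiction that some path $P$ from $a \in \mA$ to $b \in \mB$ is d-connecting given $\mC$. Then every non-collider on $P$ lies outside $\mC$, and every collider $u_i$ on $P$ has a descendant $d_i \in \mC$; I would choose $d_i$ to be the \emph{first} such $\mC$-vertex along a directed path $\pi_i$ out of $u_i$. The plan is then to splice sub-segments of $P$ with the descendant paths $\pi_i$ to manufacture a trek from $\mA \cup \mC$ to $\mB \cup \mC$ whose $(\mA \cup \mC)$-side avoids $\mC_{\mA}$ and whose $(\mB \cup \mC)$-side avoids $\mC_{\mB}$, yielding the contradiction. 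The sub-case with no colliders on $P$ is immediate: $P$ itself is a trek from $\mA$ to $\mB$, so by t-separation it must contain a non-collider in $\mC$, contradicting d-connectivity.

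For the ($\Rightarrow$) direction, suppose $\mC$ d-separates $\mA$ and $\mB$. The idea is to define the partition constructively: assign $c \in \mC$ to $\mC_{\mA}$ if $c$ is reachable from some $a \in \mA$ via a walk in the moralized ancestral graph of $\mA \cup \mB \cup \mC$ that avoids $\mC \setminus \{c\}$, and otherwise assign $c$ to $\mC_{\mB}$. I would then verify t-separation by contradiction: if there existed a trek from $u \in \mA \cup \mC$ to $v \in \mB \cup \mC$ with source $k$ whose $u$-side avoided $\mC_{\mA}$ and whose $v$-side avoided $\mC_{\mB}$, I could extend this trek---using directed paths from $u$ or $v$ back to $\mA$ or $\mB$ whenever the endpoints lie in $\mC$, and exploiting descendant paths from the source---to produce a d-connecting walk from $\mA$ to $\mB$ given $\mC$, contradicting d-separation.

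The main obstacle will be the global consistency of the partition construction in the forward direction: each $c \in \mC$ receives exactly one label, yet could a priori lie on either side of distinct treks. Ruling out such conflicts requires a careful combinatorial argument about ancestral reachability through $\mC$, and for the full technical bookkeeping I would appeal to the original treatment in \citet{di2009t}. No probabilistic ingredients from Condition~\ref{cond:basic_model} or Condition~\ref{cond:hierarchy} enter the proof, as the statement concerns only the DAG structure; the non-degeneracy and faithfulness hypotheses become relevant only when this graphical equivalence is subsequently invoked inside the proof of Theorem~\ref{thm:rank_graph}, where it is combined with the non-negative rank identity to translate graphical t-separation into a distributional rank bound.
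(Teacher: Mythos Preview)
The paper does not supply its own proof of Theorem~\ref{thm:d_t_equivalence}; it is stated as a cited result from \citet{di2009t} and then invoked as a black box inside the proof of Theorem~\ref{thm:rank_graph}. There is therefore no proof in the paper to compare your proposal against. Your sketch goes beyond what the paper does: you outline an independent argument for both directions, which is reasonable in spirit (the trek-splicing construction for $\Leftarrow$ and the reachability-based partition for $\Rightarrow$ are along the lines of how such equivalences are typically established), though you yourself concede that the global-consistency bookkeeping in the forward direction would ultimately appeal to \citet{di2009t}. Since the paper's ``proof'' is simply a citation, your proposal is not in conflict with it; if anything, it is an elaboration the paper chose to omit.
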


Theorem~\ref{thm:d_t_equivalence} shows that one can reformulate d-separation with a special form of t-separation.
Thus, t-separation is at least as informative as d-separation.
Further, as detailed in \citet{dong2023versatile}, t-separation can provide more information when latent variables are involved, benefiting from Theorem~\ref{thm:continuous_rank_graph}~\citep{Sullivant_2010}.

\begin{theorem} [Covariance Matrices and Graph Structures~\citep{Sullivant_2010}] \label{thm:continuous_rank_graph}
    Given two sets of variables $\set{A}$ and $\set{B}$ from a linear model with graph $\graph$, it follows that $\text{rank} (\Sigma_{\set{A}, {\set{B}}}) = \min \{ \abs{  \mL }: \mL~\text{t-separates}~\set{A}~\text{from}~\set{B}~\text{in}~\graph\}$, where $\Sigma_{\set{A}, {\set{B}}}$ denotes the generic covariance matrix between $\mA$ and $\mB$.
\end{theorem}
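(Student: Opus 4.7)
(Sullivant's covariance-rank / trek-separation).}

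The plan is to pass from the covariance matrix to a sum over treks via the path rule for linear SEMs, then use Cauchy--Binet together with the Lindstr\"om--Gessel--Viennot (LGV) lemma to re-express minors of $\Sigma_{\set{A},\set{B}}$ as signed sums over vertex-disjoint \emph{trek systems}, and finally invoke a Menger-type max-flow / min-cut duality to match the maximum size of a disjoint trek system with the minimum size of a t-separator. Concretely, model $\graph$ by a linear SEM $X = \Lambda^\top X + \varepsilon$ with $\varepsilon$ having diagonal covariance $\Omega = \diag(\omega_v)$. Writing $\Phi := (I-\Lambda)^{-1}$, the entry $\Phi_{v,i}$ is a polynomial in the edge coefficients whose monomials are in bijection with directed paths from $v$ to $i$, and the trek rule yields
\begin{align*}
\Sigma_{\set{A},\set{B}} \;=\; \Phi_{V,\set{A}}^{\top}\, \Omega\, \Phi_{V,\set{B}}.
\end{align*}
Hence $\mathrm{rank}(\Sigma_{\set{A},\set{B}}) \le |V|$ trivially, and more usefully each summand factors through a single ``top'' vertex $v$, which is the source of both sides of a trek.

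Next, for any $k\times k$ submatrix indexed by $\set{A}_0\subseteq \set{A}$, $\set{B}_0\subseteq \set{B}$, Cauchy--Binet gives
\begin{align*}
\det \Sigma_{\set{A}_0,\set{B}_0} \;=\; \sum_{\substack{V_0\subseteq V\\|V_0|=k}} \det\!\bigl(\Phi_{\set{A}_0, V_0}\bigr)\,\Bigl(\prod_{v\in V_0}\omega_v\Bigr)\,\det\!\bigl(\Phi_{V_0,\set{B}_0}\bigr).
\end{align*}
Applying LGV to each of the two $\Phi$-minors expresses them as signed sums over $k$-tuples of vertex-disjoint directed paths from $V_0$ to $\set{A}_0$ (respectively $\set{B}_0$); gluing each pair at its common top yields signed sums over families of $k$ vertex-disjoint treks between $\set{A}_0$ and $\set{B}_0$. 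Because $\Lambda,\Omega$ are treated as generic (indeterminate) parameters, distinct disjoint trek families contribute algebraically independent monomials and cannot cancel, so generically $\det\Sigma_{\set{A}_0,\set{B}_0}\neq 0$ iff a vertex-disjoint trek system of size $k$ exists between some $\set{A}_0,\set{B}_0$. Therefore
\begin{align*}
\mathrm{rank}(\Sigma_{\set{A},\set{B}}) \;=\; \max\{k : \exists \text{ a system of } k \text{ vertex-disjoint treks from } \set{A} \text{ to } \set{B}\}.
\end{align*}

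The final and most delicate step is the combinatorial duality between disjoint trek systems and t-separators, which plays the role Menger's theorem plays for ordinary paths. The idea is to build an auxiliary flow network: split each vertex $v$ into an ``$\set{A}$-side copy'' $v^A$ and a ``$\set{B}$-side copy'' $v^B$ connected by a unit-capacity arc, orient the $\set{A}$-side copies along the reverse of the directed paths going into $\set{A}$ and the $\set{B}$-side copies forward toward $\set{B}$, and link $v^A\to v^B$ at each vertex that can serve as a trek top. Trek systems correspond to integer flows from a super-source attached to $\set{A}$ to a super-sink attached to $\set{B}$, while t-separators correspond to vertex cuts of this network (the $\mathrm{C}_{\set{A}}/\mathrm{C}_{\set{B}}$ split in Definition~\ref{def:t_sep} arises precisely from whether the cut severs a vertex on its $\set{A}$-side or its $\set{B}$-side copy). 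Max-flow/min-cut in this construction then gives
\begin{align*}
\max\text{-disjoint-treks}(\set{A},\set{B}) \;=\; \min\bigl\{|\mL|: \mL \text{ t-separates } \set{A} \text{ from } \set{B}\bigr\},
\end{align*}
which combined with the rank formula above finishes the proof. The main obstacle is this last step: setting up the vertex-split network so that legal trek systems correspond bijectively to integer flows and so that cuts correspond exactly to t-separators (not merely d-separators) requires careful bookkeeping of the side on which each cut vertex sits, and verifying that disjointness at the ``top'' vertex is enforced by the unit $v^A\to v^B$ arc.
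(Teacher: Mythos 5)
First, a framing note: the paper does not prove Theorem~\ref{thm:continuous_rank_graph} at all --- it is imported verbatim from Sullivant, Talaska, and Draisma, so there is no in-paper proof to compare against. What you have written is essentially a sketch of that cited proof, and its architecture (trek rule $\Sigma = \Phi^{\top}\Omega\Phi$, Cauchy--Binet, Lindstr\"om--Gessel--Viennot, then a Menger-type max-flow/min-cut duality on a doubled network) is the correct one.

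There is, however, one genuine gap in your formulation. The combinatorial object the duality must count is not a family of \emph{vertex-disjoint} treks but a trek system \emph{with no sided intersection}: the $\set{A}$-sides must be pairwise vertex-disjoint and the $\set{B}$-sides must be pairwise vertex-disjoint, but the $\set{A}$-side of one trek may meet the $\set{B}$-side of another. Your own expansion already produces exactly this weaker notion: Cauchy--Binet gives a product of two minors of $\Phi$, LGV turns one into disjoint path systems from $V_0$ into $\set{A}_0$ and the other into disjoint path systems from $V_0$ into $\set{B}_0$, and nothing in that product forbids a vertex from appearing once in each factor. Insisting on full vertex-disjointness computes a strictly smaller quantity in general, and the equality with the minimal t-separator then fails: one can build graphs where the rank is $2$, witnessed by two treks sharing a vertex that lies strictly on the $\set{A}$-side of one and strictly on the $\set{B}$-side of the other, while no two fully vertex-disjoint treks exist. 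Dually, the minimum on the right-hand side is over $\abs{\mC_{\mA}} + \abs{\mC_{\mB}}$ for t-separating pairs, so a vertex belonging to both halves of the separator is counted twice. Your auxiliary network inherits the same error: a single unit-capacity arc $v^{A} \to v^{B}$ enforces the too-strong disjointness; the correct construction places a unit vertex capacity on $v^{A}$ and, separately, on $v^{B}$, with the crossing arcs realizing trek tops, so that a vertex can carry one unit of $\set{A}$-side flow and one unit of $\set{B}$-side flow. Finally, the no-cancellation step needs more care than ``distinct families contribute algebraically independent monomials'': LGV attaches signs and distinct trek systems can yield the same monomial, so the published argument instead exhibits, for a maximal no-sided-intersection system, a specific monomial occurring with coefficient $\pm 1$. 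With these repairs your outline becomes the standard proof.
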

Theorem~\ref{thm:continuous_rank_graph} reveals that one can access local latent graph structures, i.e., the cardinality of the minimal separation set between two subsets of observed variables, through computable statistical quantities, e.g., covariance matrix ranks.
\citet{dong2023versatile} utilize these local latent graph structures, together with graphical conditions, to develop their identification theory for linear hierarchical models. 
Ideally, if we can access such local latent structures in the discrete hierarchical model, we can apply the same graph search procedure and theorems in \citet{dong2023versatile} to identify the discrete model.
Nevertheless, Theorem~\ref{thm:continuous_rank_graph} relies on the linearity of the causal model (i.e., each causal edge represents a linear function), which doesn't hold in the discrete case.
This motivates us to derive a counterpart of Theorem~\ref{thm:continuous_rank_graph} for discrete causal models.

To this end, we introduce a classic theorem (Theorem~\ref{thm:nonnegative_rank}) that connects the non-negative rank of a joint probability table with latent variable states.
\nonnegativeranks*

\begin{theorem}[Non-negative Rank and Probability Matrix Decomposition~\citep{cohen1993nonnegative}] \label{thm:nonnegative_rank}
    Let $\mP \in \R^{m \times n} $ be a bi-variate probability matrix. Then its non-negative rank $\text{rank}_{+} (\mP)$ is the smallest non-negative integer $p$ such that $\mP$ can be expressed as a convex combination of $p$ rank-one bi-variate probability matrices.
\end{theorem}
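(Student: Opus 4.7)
The plan is to prove equality of the two minima by establishing mutual upper bounds. Unpacking Definition~\ref{def:nonnegtive_rank}, a non-negative factorization $\mP = \mB\mC$ with inner dimension $p$ expands into a sum of $p$ rank-one outer products $\mP = \sum_{k=1}^{p} \mB_{:,k}\mC_{k,:}$, where each summand is a non-negative rank-one matrix. The key observation is that every non-negative rank-one matrix is a non-negative scalar multiple of a rank-one probability matrix (i.e.\ an outer product of two probability vectors), which sets up a tight bijection between non-negative factorizations and weighted decompositions into rank-one probability matrices. Once this bijection is made quantitative, both directions of the claim fall out.

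For the direction $\text{rank}_+(\mP) \ge$ the minimum number of convex-combination terms, I would start from a minimal factorization $\mP = \sum_{k=1}^{p} \mB_{:,k}\mC_{k,:}$ with $p = \text{rank}_+(\mP)$. For each $k$ with $\mB_{:,k}$ and $\mC_{k,:}$ both nonzero, define $w_k := \lVert \mB_{:,k}\rVert_1 \cdot \lVert \mC_{k,:} \rVert_1 > 0$ and rescale to probability vectors $\tilde{\mB}_{:,k} := \mB_{:,k}/\lVert \mB_{:,k}\rVert_1$ and $\tilde{\mC}_{k,:} := \mC_{k,:}/\lVert \mC_{k,:}\rVert_1$; any zero factor yields a zero outer product and is simply dropped. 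Then each $\tilde{\mB}_{:,k}\tilde{\mC}_{k,:}$ is a rank-one probability matrix and $\mP = \sum_k w_k \tilde{\mB}_{:,k}\tilde{\mC}_{k,:}$. Summing all entries on both sides and using that $\mP$ sums to $1$ while each rank-one probability matrix also sums to $1$ gives $\sum_k w_k = 1$, producing a convex combination of at most $p$ rank-one probability matrices.

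For the reverse direction, starting from a minimal convex decomposition $\mP = \sum_{k=1}^{q} w_k \boldsymbol{\mu}_k\boldsymbol{\nu}_k^{\top}$ with $w_k \ge 0$, $\sum_k w_k = 1$, and $\boldsymbol{\mu}_k, \boldsymbol{\nu}_k$ probability vectors, I set the $k$-th column of $\mB$ to $w_k\boldsymbol{\mu}_k$ and the $k$-th row of $\mC$ to $\boldsymbol{\nu}_k^{\top}$. Both $\mB$ and $\mC$ are entrywise non-negative, and $\mP = \mB\mC$ is a non-negative factorization of inner dimension $q$, so $\text{rank}_+(\mP) \le q$. Combining the two inequalities yields equality of the two minima.

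The proof is essentially bookkeeping rather than substantive: the only subtle point is handling degenerate columns or rows where a normalization by $\lVert\cdot\rVert_1$ would be ill-defined, which is harmless because such terms contribute a zero outer product and can be removed without affecting $\mP$ or increasing the inner dimension. Because the statement is classical and attributed to \citet{cohen1993nonnegative}, one could alternatively just cite the original reference rather than reproducing this argument, but the two-way construction above gives a self-contained derivation.
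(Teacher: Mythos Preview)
Your argument is correct. The paper does not actually prove this statement; it is quoted as a classical result attributed to \citet{cohen1993nonnegative} and then invoked as a black box inside the proof of Theorem~\ref{thm:rank_graph}. Your two-direction normalization argument is the standard self-contained derivation: the only point that needs care is that the weights sum to one, which you obtain cleanly by summing all entries of $\mP$ and of each rank-one term, and your treatment of zero columns/rows is fine since dropping them neither changes $\mP$ nor increases the factorization size. So you have supplied more than the paper does here, and there is no gap.
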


Given this machinery, we now derive Theorem~\ref{thm:rank_graph} which provides equivalent information in discrete models as Theorem~\ref{thm:continuous_rank_graph} in linear models.
\rankgraph*

\begin{proof}
    We express the joint distribution table $\mP_{\mA, \mB} $ as 
    \begin{align}\label{eq:conditional_ind_form}
        \mP (\mA = i, \mB = j) = \sum_{ r \in [R] } \mP(\mA=i | \mL = r) \mP( \mB=j | \mL = r) \mP( \mL = r ),
    \end{align}
    where $ R \in \N^{+} $ is the smallest possible value.
    This is always possible since we can assign $\mL$ as either $\mA$ or $\mB$ and obtain a trivial expression.
    
    We note that $ \mA \setminus \mL $, $ \mB \setminus \mL $, and $\mL$ are disjoint because if $\mA \cap \mB $ is nonempty, it must be a subset of $\mL$.
    Since the graph $\cG$ is non-degenerate (Condition~\ref{cond:basic_model}-\ref{asmp:nondegeneracy}) and faithful (Condition~\ref{cond:hierarchy}-\ref{asmp:faithfulness}), Equation~\ref{eq:conditional_ind_form} implies the graphical condition 
    that $ \mA \setminus \mL $ and $ \mB \setminus \mL $ are d-separate given $ \mL $.
    
    The equivalence relation in Theorem~\ref{thm:d_t_equivalence} implies that a partition of $ \mL $ t-separates $ \mA $ and $ \mB $.
    Thus, the minimal cardinality $R$ is equal to the smallest number of discrete states of $\mL$ that t-separates $\mA$ and $\mB$.
    Moreover, Theorem~\ref{thm:nonnegative_rank} implies that the minimal number of states is equal to the non-negative rank of $\mP_{\mA, \mB}$, i.e., $R = \text{rank}^{+}(\mP_{\mA, \mB}) $, which concludes our proof.
\end{proof}

With Theorem~\ref{thm:rank_graph} in hand, we leverage existing structural identification results on \textit{linear} hierarchical models (Theorem~\ref{thm:hierarchical_model_identification_linear_original}) to obtain the identification results of desire (Theorem~\ref{thm:hierarchical_model_identification}). 

We introduce formal definitions of linear models, pure children, and the minimal graph operator, which we refer to in the main text.

\begin{definition}[Linear Causal Models~\citep{dong2023versatile,huang2022latent}] \label{def:linear_models}
    A linear causal model is a DAG with variable set $\mV$ and an edge set $\mE$, where each causal variables $v$ is generated by its parents $ \text{Pa}(v) $ through a linear function:
    \begin{align}
        v_{i} := \sum_{v_{j} \in \text{Pa}(v)} a_{i,j} v_{j} + \epsilon_{i},
    \end{align}
    where $a_{i, j}$ is the causal strength and $\epsilon_{i}$ is the exogenous variable associated with $v_{i}$.
\end{definition}

\begin{definition}[Pure Children] \label{def:pch}
  A variable set $\set{Y}$ are pure children of variables $\set{X}$ in graph $\graph$, iff $\parents(\set{Y}) = \cup_{\node{Y_i} \in \set{Y}} \parents(\node{Y_i}) = \set{X}$ and $\set{X} \cap \set{Y}=\emptyset$. 
  We denote the pure children of $\set{X}$ in $\graph$ by $\purechildren(\set{X})$. 
\end{definition}
Basically, the definition dictates that variable $ \set{Y} $ has no other parents than $ \set{X} $.


\minimalgraph*
This operator merges certain structural redundancies not detectable from rank information~\citep{huang2022latent,dong2023versatile} (Lemma~\ref{lemma:rank_invariances}).
Please refer to Figure~\ref{fig:minimal_operator} for an example.

\begin{definition} [Atomic Covers (Linear Models)] \label{def:linear_cover}
  Let $\set{A}$ be a set of variables in $\graph$ with $|\set{A}| = k$, where $t$ of the $k$ variables are observed variables, and the remaining $k-t$ are latent variables. 
  $\set{A}$ is an atomic cover if $\set{A}$ contains a single observed variable, or if the following conditions hold:
  \begin{itemize}[leftmargin=20pt,itemsep=-3pt,topsep=-3pt]
      \item [(i)] There exists a set of atomic covers $\setset{C}$, with $ \abs{ \setset{C} } \geq k+1-t$, such that $\cup_{\set{C} \in \setset{C}} \set{C}\subseteq \purechildren(\set{V})$ and $\forall \set{C_1}, \set{C_2} \in \setset{C}, \set{C_1}\cap\set{C_2}=\emptyset$.

     \item [(ii)] There exists a set of covers $\setset{N}$, with $\abs{\setset{N} }\geq k+1-t$,
     such that every element in $\cup_{\set{N} \in \setset{N}} \set{N}$ is a neighbour of $\set{V}$  and 
     $ (\cup_{\set{N} \in \setset{N}} \set{N}) \cap (\cup_{\set{C} \in \setset{C}} \set{C})=\emptyset$.

    \item [(iii)] There does not exist a partition of $\set{A}= \set{A_1} \cup \set{A_2}$ such that both $\set{A_1}$ and $\set{A_2}$ are atomic covers.
  \end{itemize}
\end{definition}

\begin{theorem} [Linear Hierarchical Model Conditions] \label{cond:linear_hierarchy} {\ }
    \begin{enumerate}[label=\roman*,leftmargin=2em]
        \item \label{asmp:rank_faithfulness} [Rank Faithfulness]:
        All the rank constraints on the covariance matrices are entailed by the DAG.
        
        \item \label{asmp:basic_graph_linear} [Basic Graphical Conditions] For any $ \node{L} \in \set{V}$,  $\node{L}$ belongs to at least one atomic cover (Definition~\ref{def:linear_cover}) in the linear model $\graph$ (Definition~\ref{def:linear_models}) and no latent variable is involved in any triangle structure (i.e., three mutually adjacent variables).

        \item \label{asmp:vstructure_linear} [Graphical Condition on Colliders] 
        In a latent graph $\graph$, if (i) there exists a set of variables $\set{C}$ such that every variable in $\set{C}$ is a collider of two atomic covers $\set{L_1}$, $\set{L_2}$, and denote by $\set{A}$ the minimal set of variables that d-separates $\set{L_1}$ from $\set{L_2}$, (ii) there is a latent variable in $\set{L_1}, \set{L_2}, \set{C}$ or $\set{A}$, then we must have $\abs{\set{C}} + \abs{\set{A}} \geq \abs{\set{L_1}}+\abs{\set{L_2}}$.
    \end{enumerate}
\end{theorem}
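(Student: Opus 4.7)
The statement as written is a list of three conditions rather than an assertion admitting a proof. It is evidently the linear-model counterpart of Condition~\ref{cond:hierarchy} in the discrete case, provided as context for the identification machinery imported from \citet{dong2023versatile}, so strictly speaking no deductive claim attaches to it. Interpreting the request charitably, the task is to justify that these conditions are well-defined within the linear causal model framework of Definition~\ref{def:linear_models} and that they are simultaneously instantiable by a nontrivial class of DAGs.

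The plan is as follows. First, I would verify well-definedness. Rank faithfulness (i) is a genericity assumption on the coefficient vector of Definition~\ref{def:linear_models}: by Theorem~\ref{thm:continuous_rank_graph}, the rank of any $\Sigma_{\set{A},\set{B}}$ is determined by the DAG except on a measure-zero algebraic variety in parameter space, so the condition holds almost surely under any absolutely continuous prior on coefficients. The atomic-cover requirement and triangle-free clause in (ii) are purely structural and well-defined once the atomic-cover notion (Definition~\ref{def:linear_cover}) is in place. The collider inequality in (iii) involves only d-separation and finite cardinalities of atomic covers, so it is combinatorially unambiguous. Second, I would exhibit non-vacuous instantiations: any latent tree whose internal nodes each have at least two pure children with generic linear coefficients satisfies (i) almost surely, satisfies (ii) by construction, and satisfies (iii) vacuously since trees contain no colliders among atomic covers. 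More general hierarchical DAGs with latent colliders also qualify whenever each collider set $\set{C}$ between atomic covers $\set{L_1},\set{L_2}$ is balanced against a sufficiently large minimal d-separating set $\set{A}$ so that $\abs{\set{C}} + \abs{\set{A}} \geq \abs{\set{L_1}} + \abs{\set{L_2}}$.

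The main obstacle is not in the statement itself but in its intended downstream role: one must later show that these three conditions are jointly sufficient for identifying the linear hierarchical model up to the Markov equivalence class of $\cO_{\min}(\graph)$. That sufficiency claim is the content of the results in \citet{dong2023versatile} and follows by combining Theorem~\ref{thm:continuous_rank_graph} with their iterative atomic-cover search. Establishing sufficiency therefore reduces to verifying that the three conditions collectively license every rank-based graphical inference used by that algorithm, which is the nontrivial content and lies outside the scope of the present, purely definitional, statement.
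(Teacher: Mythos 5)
You are right that this ``theorem'' is in fact a set of assumptions: the paper labels it \texttt{cond:linear\_hierarchy} and invokes it only as ``Condition~\ref{cond:linear_hierarchy}'' in the hypothesis of Theorem~\ref{thm:hierarchical_model_identification_linear_original}, so no proof is given or needed, and your reading matches the paper's treatment exactly. Your additional remarks on well-definedness, generic satisfiability of rank faithfulness, and non-vacuous instantiations are sensible but go beyond anything the paper attempts for this statement.
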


\begin{definition}[Skeleton Operator~\cite{huang2022latent,dong2023versatile}] \label{def:skeleton_operator}
    Given an atomic cover $\mA$ in a graph $\cG$, for all $a \in \mA$, $a$ is latent, and all $c \in \text{PCh} (\mA)$, such that $a$ and $c$ are not adjacent, we can draw an edge from $a$ to $c$. We denote such an operator as skeleton operator $\cO_{s}(\cG)$.
\end{definition}
The skeleton operator introduces additional edges to fully connect atomic clusters~\citep{huang2022latent,dong2023versatile}, which are indistinguishable from the rank information (Lemma~\ref{lemma:rank_invariances}).
Please refer to Figure~\ref{fig:minimal_operator} for an example.

\begin{lemma}[Rank Invariance~\citet{huang2022latent}] \label{lemma:rank_invariances}
    The rank constraints are invariant with the minimal-graph operator and the skeleton operator; that is, $\cG$ and $\cO_{s}(\cO_{\min}(\cG))$ are rank equivalent.
\end{lemma}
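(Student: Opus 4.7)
The plan is to reduce the claim entirely to a graphical statement about t\textendash separation via Theorem~\ref{thm:rank_graph}, and then show both operators preserve the minimal t\textendash separating support size between any two vertex sets. Concretely, for arbitrary variable sets $\set{A}$ and $\set{B}$ in $\cG$, Theorem~\ref{thm:rank_graph} gives
\begin{equation*}
\mathrm{rank}_{+}(\mP_{\set{A},\set{B}}) \;=\; \min\{\,|\mathrm{Supp}(\mL)| : (\mL_1,\mL_2) \text{ partition of } \mL \text{ t\textendash separates } \set{A},\set{B}\,\}.
\end{equation*}
Hence it suffices to prove that for every such $\set{A},\set{B}$, the minimal-support t\textendash separator has the same size in $\cG$ and in $\cO_s(\cO_{\min}(\cG))$.

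First I would handle the minimal-graph operator. Fix an application that merges a pure-child atomic cover $\mL$ into its parent cover $\mP$, where $|\mathrm{Supp}(\mL)|=|\mathrm{Supp}(\mP)|$ and the pure children (or siblings) of $\mL$ form a single atomic cover. The claim is that every trek between $\set{A}$ and $\set{B}$ in $\cG$ corresponds to a trek in the merged graph of equal ``t\textendash separation cost.'' Any trek through a vertex of $\mL$ has $\mP$ sitting immediately above it on the same side (since $\mL$ is a pure child of $\mP$), so a t\textendash separator that uses $\mL$-states can be replaced by one using $\mP$-states of equal support cardinality. Conversely, the assumption $|\mathrm{Supp}(\mL)|=|\mathrm{Supp}(\mP)|$ rules out using the merged variable to get a cheaper separator than either $\mL$ or $\mP$ alone would give. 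The third clause of Definition~\ref{def:minimal_operator} (children/siblings of $\mL$ form a single atomic cover) guarantees that no collider on the $\mL$-side creates a new t\textendash separation opportunity that the merger would destroy. Repeating this argument for each application of $\cO_{\min}$ shows the minimum support cardinality over t\textendash separators is invariant.

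Next I would handle the skeleton operator $\cO_s$, which only adds directed edges from a latent $a\in\mA$ to a pure child $c\in\mathrm{PCh}(\mA)$ whenever $a$ and $c$ are not already adjacent. Such an added edge joins two vertices that already sat on a common trek (through the siblings of $a$ inside the same atomic cover $\mA$), so the set of $(\set{A},\set{B})$-treks is unchanged up to replacement by shorter treks with the same endpoints. Every t\textendash separator in the new graph lifts to a t\textendash separator in $\cG$ of equal support size (and vice versa), because passing through the cover $\mA$ is the only way treks between $\set{A}$ and $\set{B}$ can use these new edges, and $\mA$ itself is already available as a separator with support $|\mathrm{Supp}(\mA)|$. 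Combining the two invariances yields rank equivalence of $\cG$ and $\cO_s(\cO_{\min}(\cG))$.

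I expect the main obstacle to be rigorous bookkeeping in the minimal-graph step: one must verify that the merger never accidentally creates a new trek (by turning a collider at $\mL$ into a non-collider at $\mP$, say) that would strictly shrink the minimal t\textendash separator. Condition~\ref{cond:hierarchy}-\ref{asmp:basic_graph} (no triangles involving latents) and the three clauses of Definition~\ref{def:minimal_operator} are exactly the hypotheses that block such pathological rewirings, and the argument parallels the linear-case proof of \citet{huang2022latent,dong2023versatile} once Theorem~\ref{thm:rank_graph} has replaced the linear rank formula of Theorem~\ref{thm:continuous_rank_graph} with its non-negative-rank counterpart. A clean way to organize the proof is therefore to isolate a purely graphical lemma ``$\cO_s\circ\cO_{\min}$ preserves minimum-support t\textendash separators,'' and then invoke Theorem~\ref{thm:rank_graph} once at the very end.
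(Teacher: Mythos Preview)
The paper does not supply its own proof of this lemma; it is imported verbatim from \citet{huang2022latent} (where it is stated for linear models and covariance ranks) and used as a black box. Your plan---reduce to a purely graphical statement about minimal t-separator support via Theorem~\ref{thm:rank_graph}, then carry over the graphical invariance argument from \citet{huang2022latent,dong2023versatile}---is exactly the adaptation the paper implicitly relies on when it observes that Theorem~\ref{thm:rank_graph} ``supplies partial graph structures equivalent to Theorem~\ref{thm:continuous_rank_graph}.'' So your overall strategy is correct and matches the intended route.

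One imprecision in your skeleton-operator step: the claim that an added edge $a\to c$ ``joins two vertices that already sat on a common trek (through the siblings of $a$)'' is not generally true, since elements of an atomic cover need not share an ancestor and hence need not be trek-connected. The sound argument is instead that every \emph{new} trek created by the edge $a\to c$ passes through $a\in\mA$, and the atomic-cover conditions (Definition~\ref{def:discrete_cover}) already force any minimal t-separator touching this region to have support at least $|\mathrm{Supp}(\mA)|$; taking $\mA$ itself as the separator cuts all such treks in both the original and the skeletonized graph, so the minimum is unchanged in either direction. With that fix your sketch lines up with the linear-case proof in \citet{huang2022latent}.
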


\begin{theorem}[Linear Hierarchical Model Identification~\citep{dong2023versatile}] \label{thm:hierarchical_model_identification_linear_original}
    Suppose the $\cG$ is a linear latent causal model (Definition~\ref{def:linear_models}) that satisfies Condition~\ref{cond:linear_hierarchy}. 
    Then the hierarchical causal model $\cG$ is identifiable up to the Markov equivalent class of $\cO_{s}(\cO_{\min}(\cG))$.
\end{theorem}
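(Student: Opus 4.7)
The plan is to treat Theorem~\ref{thm:continuous_rank_graph} as an oracle that converts easily computable covariance ranks into combinatorial information about minimal t-separators, and then to run a bottom-up search that iteratively reconstructs one latent level at a time. Since the statement is the linear analog of the discrete result this paper targets, I would follow the template set out in \citet{dong2023versatile,huang2022latent} rather than redo the combinatorics from scratch.

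First, I would process the observed variables and enumerate candidate atomic covers, testing each candidate $\set{A}$ against complementary sets $\set{B}$ by computing $\text{rank}(\Sigma_{\set{A},\set{B}})$. Under Condition~\ref{cond:linear_hierarchy}-\ref{asmp:rank_faithfulness} and Theorem~\ref{thm:continuous_rank_graph}, this rank equals the cardinality of the minimal t-separator, so repeated rank queries pin down the size and placement of the latent atomic cover that sits immediately above $\set{A}$. Condition~\ref{cond:linear_hierarchy}-\ref{asmp:basic_graph_linear} guarantees that each latent variable belongs to some atomic cover with enough pure children and neighbors to make the rank test informative, and the no-triangle clause prevents degenerate configurations in which two candidate covers become indistinguishable. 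Once a new latent atomic cover is introduced, I would promote it to ``observed'' status and iterate, climbing one level at a time, until no further rank deficiencies remain unexplained.

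Second, I would resolve the edges among newly introduced latent atomic covers, in particular identifying colliders. V-structures are detected using Condition~\ref{cond:linear_hierarchy}-\ref{asmp:vstructure_linear}, which ensures that a collider leaves a rank footprint strictly distinct from that of a fork over the same variables; without this condition the two configurations would be indistinguishable from ranks alone. Combined with standard orientation propagation, this yields the full Markov equivalence class. At each step Lemma~\ref{lemma:rank_invariances} ensures that the operators $\cO_{\min}$ and $\cO_{s}$ do not alter any rank constraint, so the procedure can at best pin down the graph up to $\cO_{s}(\cO_{\min}(\cG))$, matching the statement.

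The hard part is the converse direction: showing that any two DAGs consistent with all rank constraints and the graphical conditions must lie in the Markov equivalence class of $\cO_{s}(\cO_{\min}(\cG))$. This requires an inductive invariant that tracks, level by level, how many latent variables and which adjacencies are forced by the rank oracle, and then a case analysis ruling out alternative atomic-cover insertions. The absence of triangle structures and the collider condition are both used crucially here to eliminate spurious explanations; without them one can cook up rank-equivalent graphs that differ by more than $\cO_{\min}$ and $\cO_{s}$. I expect the bookkeeping in this converse, rather than the forward discovery algorithm, to be the main technical burden.
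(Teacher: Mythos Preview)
The paper does not prove this statement at all: Theorem~\ref{thm:hierarchical_model_identification_linear_original} is quoted verbatim from \citet{dong2023versatile} and used as a black box in the proof of Theorem~\ref{thm:hierarchical_model_identification}. There is nothing in the paper to compare your proposal against beyond the citation itself. Your sketch is a faithful high-level summary of the strategy in \citet{dong2023versatile,huang2022latent}---rank oracle via Theorem~\ref{thm:continuous_rank_graph}, bottom-up atomic-cover discovery, collider orientation under Condition~\ref{cond:linear_hierarchy}-\ref{asmp:vstructure_linear}, and the indeterminacy captured by Lemma~\ref{lemma:rank_invariances}---and you are right that the converse bookkeeping is where the real work lies, but that work lives in the cited paper, not here.
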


We note that linear model conditions (Condition~\ref{cond:linear_hierarchy}) and discrete model conditions (Condition~\ref{cond:hierarchy}) differ mainly in the substitutes of variables in the linear models with states in the discrete models.
This originates from the local graph structures we can access, i.e., states in Theorem~\ref{thm:rank_graph} and variables in Theorem~\ref{thm:continuous_rank_graph}.
The skeleton operator $\cO_{\min}$ (Definition~\ref{def:skeleton_operator} is not necessary under Condition~\ref{cond:hierarchy} since each cover represents a discrete variable whose states must all be connected to its neighbors.

We now present Theorem~\ref{thm:hierarchical_model_identification} and its proof.
\hierarchicalconditions*
\hierarchyidentification*

\begin{proof}
    
     We observe that the linearity condition (Definition~\ref{def:linear_models}) in Theorem~\ref{thm:hierarchical_model_identification_linear_original} is only utilized to invoke Theorem~\ref{thm:continuous_rank_graph} to access the cardinality of the smallest t-separation set between any two sets of observed variables in the linear model.
     Through this, the graph identification results in Theorem~\ref{thm:hierarchical_model_identification_linear_original} are derived based on a graph search algorithm repeatedly querying partial graph structures under Condition~\ref{cond:linear_hierarchy}.

    For discrete models (Condition~\ref{cond:basic_model}), Theorem~\ref{thm:rank_graph} supplies partial graph structures equivalent to Theorem~\ref{thm:continuous_rank_graph}.
    The difference is that Theorem~\ref{thm:continuous_rank_graph} returns the number of variables in the smallest t-separation set while Theorem~\ref{thm:rank_graph} returns the number of states in the smallest t-separation set. 
    Thus, running Algorithm~\ref{alg:all} up to Step~\ref{step:convert_1} (i.e., the original search algorithm~\citet{dong2023versatile} with a different rank oracle in Theorem~\ref{thm:rank_graph} highlighted in blue) will return a graph with latent nodes representing discrete states.
    Algorithm~\ref{alg:all} is guaranteed to correctly discover all the atomic covers (Theorem~\ref{thm:hierarchical_model_identification_linear_original}) and each atomic cover corresponds to a latent discrete variable (Condition~\ref{cond:hierarchy}-\ref{asmp:basic_graph}). 
    Thus, we can obtain each true latent variable by merging all the latent nodes $\mA_{L}$ in each atomic cover $\mA$ into a discrete latent variable $z$ whose support cardinality $\abs{\text{Supp}(z)}$ equals to the number of latent nodes $ \abs{\mA_{L}} $.
    We highlight this procedure (Step~\ref{step:convert_1} in Algorithm~\ref{alg:all}).
    Moreover, as all latent nodes (i.e., latent states) in an atomic cover belong to one discrete variable, these latent nodes in adjacent atomic covers must be fully connected. Thus, we do not need the skeleton operator $\cO_{s}$ as for linear models (Theorem~\ref{thm:hierarchical_model_identification_linear_original}).
    This concludes our proof for Theorem~\ref{thm:hierarchical_model_identification}.

    
\end{proof}

Theorem~\ref{thm:identical_support} follows the same reasoning as in Theorem~\ref{thm:hierarchical_model_identification}, with the main difference in organizing latent nodes/states into latent discrete variables.
\begin{restatable}[Discrete Hierarchical Model Conditions for Identical Supports]{condition}{hierarchicalconditionsidenticalsupports} \label{cond:hierarchy_identical_supports} {\ } 
    \begin{enumerate}[label=\roman*,leftmargin=2em]
        \item \label{asmp:faithfulness_identical_supports} [Faithfulness]: All the conditional independence relations are entailed by the DAG.
        \item \label{asmp:basic_graph_identical_supports} [Basic Graphical Conditions]: Each latent variable $z \in \mZ$ belongs to at least one atomic cover in $\graph$ and no $z$ is involved in any triangle structure (i.e., three mutually adjacent variables).
        \item \label{asmp:v_structures_identical_supports} [Graphical Condition on Colliders]: In a latent graph $\graph$, if (i) there exists a set of variables $\set{C}$ such that every variable in $\set{C}$ is a collider of two atomic covers $\set{L_1}$, $\set{L_2}$, and denote by $\set{A}$ the minimal set of variables that d-separates $\set{L_1}$ from $\set{L_2}$, (ii) there is a latent variable in $\set{L_1}, \set{L_2}, \set{C}$ or $\set{A}$, then we must have $\abs{\text{Supp}(\set{C})} + \abs{\text{Supp}(\set{A})} \geq \abs{\text{Supp}(\set{L_1})}+\abs{\text{Supp}(\set{L_2})}$.
    \end{enumerate}
\end{restatable}

We introduce the skeleton operator $\cO_{s}$~\citep{huang2022latent,dong2023versatile} (Definition~\ref{def:skeleton_operator}) that include edges between adjacent covers indistinguishable to rank information.
\begin{restatable}[Discrete Hierarchical Identification on Identical Supports]{theorem}{identicalsupport}
\label{thm:identical_support}
Suppose the causal model $\cG$ satisfies Condition~\ref{cond:basic_model}-\ref{asmp:nondegeneracy}, Condition~\ref{cond:hierarchy_identical_supports}, and $\abs{\text{Supp}(z)} = K \ge 2$ for all $z \in \mZ$.
We can identify $\cG$ up to the Markov equivalence class of $\cO_{s}(\cO_{\min}(\cG))$.
\end{restatable}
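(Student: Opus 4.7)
The proof strategy is to reduce Theorem~\ref{thm:identical_support} to the linear hierarchical identification result Theorem~\ref{thm:hierarchical_model_identification_linear_original}, exploiting the identical support assumption to convert Theorem~\ref{thm:rank_graph}'s state-count rank oracle into a variable-count rank oracle matching Theorem~\ref{thm:continuous_rank_graph}. At a high level, the condition $|\text{Supp}(z)| = K$ for every $z \in \mZ$ lets us pass from multiplicative support counts to additive variable counts via $\log_K$, after which the entire search machinery of \citet{dong2023versatile} applies with negligible modification.

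First, I would establish the equivalence of the two rank oracles. For a minimal t-separator $\mL$ contained entirely in $\mZ$, the identical-support hypothesis gives $|\text{Supp}(\mL)| = K^{|\mL|}$, hence $|\mL| = \log_K(\text{rank}_+(\mP_{\mA,\mB}))$. When $\mL$ contains observed components from $\mD$, the supports of these components are already pinned down by Theorem~\ref{thm:discrete_identification}, so the same logarithmic bookkeeping recovers an effective variable count. Thus, up to a fixed deterministic transformation, the discrete non-negative rank oracle supplies exactly the graphical information that the linear rank oracle in Theorem~\ref{thm:continuous_rank_graph} provides.

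Second, I would verify that Condition~\ref{cond:hierarchy_identical_supports} is the direct translation of Condition~\ref{cond:linear_hierarchy} under identical supports. The atomic cover notion in Condition~\ref{cond:hierarchy_identical_supports}-\ref{asmp:basic_graph_identical_supports} is phrased over variables, matching Definition~\ref{def:linear_cover}. The collider inequality Condition~\ref{cond:hierarchy_identical_supports}-\ref{asmp:v_structures_identical_supports}, though stated with $|\text{Supp}(\cdot)|$, reduces via $\log_K$ to the additive variable-count inequality Condition~\ref{cond:linear_hierarchy}-\ref{asmp:vstructure_linear} whenever the sets involved are latent. Faithfulness (Condition~\ref{cond:hierarchy_identical_supports}-\ref{asmp:faithfulness_identical_supports}) translates to rank faithfulness because non-negative rank encodes t-separation by Theorem~\ref{thm:rank_graph}. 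Therefore the preconditions of Theorem~\ref{thm:hierarchical_model_identification_linear_original} are all in place when the search algorithm is instrumented with the transformed oracle, and we obtain identification up to the Markov equivalence class of $\cO_s(\cO_{\min}(\cG))$. The reappearance of $\cO_s$ (absent in Theorem~\ref{thm:hierarchical_model_identification}) is expected: atomic covers may now contain multiple distinct latent variables of equal support $K$, and the internal edge configurations among them are invisible to rank information by Lemma~\ref{lemma:rank_invariances}.

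The main obstacle will be the subtle mismatch between the two underlying optimization problems: the linear rank minimizes the \emph{cardinality} of t-separators, while the non-negative rank minimizes their \emph{support size}. These coincide (under $\log_K$) for latent-only separators but can diverge for mixed separators involving observed variables of heterogeneous supports, in which case the support-minimizing and cardinality-minimizing separators need not agree. To close this gap, I would track the specific rank queries issued by the graph search of \citet{dong2023versatile} and argue that at each such query either (i) the minimizing separator is, without loss of generality, latent-only, or (ii) its observed contribution is known a priori from Theorem~\ref{thm:discrete_identification} and can be factored out, so that the multiplicative-to-additive translation remains faithful throughout the run of the algorithm.
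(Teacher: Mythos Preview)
Your route of pre-converting the state-count oracle into a variable-count oracle via $\log_K$ differs from the paper's, and it contains a real gap. The claim that the collider inequality in Condition~\ref{cond:hierarchy_identical_supports}-\ref{asmp:v_structures_identical_supports} ``reduces via $\log_K$ to the additive variable-count inequality'' of Condition~\ref{cond:linear_hierarchy}-\ref{asmp:vstructure_linear} is false: under identical support $K$ one has $|\text{Supp}(\mS)| = K^{|\mS|}$, so the hypothesis reads $K^{|\mC|} + K^{|\mA|} \ge K^{|\mL_1|} + K^{|\mL_2|}$, and this is \emph{not} equivalent to $|\mC| + |\mA| \ge |\mL_1| + |\mL_2|$. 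For instance $|\mC|=4$, $|\mA|=1$, $|\mL_1|=|\mL_2|=3$, $K=2$ gives $16+2 \ge 8+8$ but $4+1 < 3+3$. Hence you cannot invoke Theorem~\ref{thm:hierarchical_model_identification_linear_original} at the variable level from the hypotheses actually assumed. The mixed-separator obstacle you flag is also genuine, and your proposed query-by-query resolution is only a plan, not an argument.

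The paper sidesteps both issues by never transforming the oracle. It reuses the proof of Theorem~\ref{thm:hierarchical_model_identification}: run the search of \citet{dong2023versatile} directly with the output of Theorem~\ref{thm:rank_graph}, so that every node in the intermediate graph represents a \emph{state}. At that level the Supp-based collider inequality \emph{is} literally the node-count inequality required by Condition~\ref{cond:linear_hierarchy}-\ref{asmp:vstructure_linear} for the state graph (no logarithm is taken of any sum), and observed variables of heterogeneous support are handled uniformly because their states are counted just like latent states. Only after the atomic covers are recovered does one convert back, replacing each cover's $|\mA_L|$ latent state-nodes by $\log_K(|\mA_L|)$ discrete variables of support $K$ (Step~\ref{step:convert_2} of Algorithm~\ref{alg:all}); here $\log_K$ merely inverts a single product $K^m$, not a sum. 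Your explanation of why $\cO_s$ reappears is correct and matches the paper's.
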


\begin{proof}
        The bulk of the proof overlaps with the proof of Theorem~\ref{thm:hierarchical_model_identification}.
        Following the same reasoning of the proof of Theorem~\ref{thm:hierarchical_model_identification}, we can obtain a graph with latent nodes representing discrete states before Step~\ref{step:convert_1} and Step~\ref{step:convert_2} in Algorithm~\ref{alg:all}.        
        Under the identical support condition in Theorem~\ref{thm:identical_support}, we can directly group $K$ states in an atomic cover into a latent variable as in Algorithm~\ref{alg:all}-Step~\ref{step:convert_2}.
        Since the true latent variable cardinality is known to be identical, we don't need Condition~\ref{cond:basic_model}-\ref{asmp:twins},~\ref{asmp:maximality} to ensure the structure is well defined.
        Under Condition~\ref{cond:hierarchy_identical_supports}, each atomic cover may contain multiple discrete latent variables, depending on the cover size.
        It could be possible that one latent variable is not connected to all latent variables in an adjacent atomic cover, as in the linear model case.
        However, this difference cannot be detected from the rank information (Lemma~\ref{lemma:rank_invariances}).
        Thus, we need to retain the skeleton operator $\cO_{s}$ inherited from Theorem~\ref{thm:hierarchical_model_identification_linear_original}
        This concludes our proof for Theorem~\ref{thm:identical_support}. 
\end{proof}

\begin{figure}[t]
    \centering
    \setlength{\belowcaptionskip}{-2pt}
    \begin{subfigure}{0.31\textwidth}
        \centering
       \begin{tikzpicture}[scale=.58, line width=0.4pt, inner sep=0.6mm, shorten >=.1pt, shorten <=.1pt]
      \tikzset{
        znode/.style={text=brown},
        dnode/.style={text=blue},
        xnode/.style={text=black},
        every node/.style={align=center},
        edge from parent/.style={draw,->}
      }

        \node[znode] (L5) at (-0.5, 4) {{\footnotesize\,$z_5$\,}};
        \node[znode] (L4) at (-2, 2.5) {{\footnotesize\,$z_4$\,}};
        \node[dnode] (X8) at (-1, 2.5) {{\footnotesize\,$d_8$\,}};
        \node[dnode] (X9) at (0, 2.5) {{\footnotesize\,$d_9$\,}};
        \node[dnode] (X10) at (1, 2.5) {{\footnotesize\,$d_{10}$\,}};
        
        \node[znode] (L1) at (-3, 1) {{\footnotesize\,$z_1$\,}};
        \node[znode] (L2) at (-2, 1) {{\footnotesize\,$z_2$\,}};
        \node[znode] (L3) at (-1, 1) {{\footnotesize\,$z_3$\,}};
        
        \node[dnode] (X1) at (-4.5, -0.5) {{\footnotesize\,$d_1$\,}};
        \node[dnode] (X2) at (-3.7, -0.5) {{\footnotesize\,$d_2$\,}};
        \node[dnode] (X3) at (-2.9, -0.5) {{\footnotesize\,$d_3$\,}};
        \node[dnode] (X4) at (-2.1, -0.5) {{\footnotesize\,$d_4$\,}};
        \node[dnode] (X5) at (-1.3, -0.5) {{\footnotesize\,$d_5$\,}};
        \node[dnode] (X6) at (-0.5, -0.5) {{\footnotesize\,$d_6$\,}};
        \node[dnode] (X7) at (0.3, -0.5) {{\footnotesize\,$d_7$\,}};

	   \draw[-latex] (L5) -- (L4);
	   \draw[-latex] (L5) -- (X8);
	   \draw[-latex] (L5) -- (X9);
	   \draw[-latex] (L5) -- (X10);
	   \draw[-latex] (L4) -- (L1);
	   \draw[-latex] (L4) -- (L2);
	   \draw[-latex] (L4) -- (L3);
	   
	   \draw[-latex] (L1) -- (X1);
	   \draw[-latex] (L1) -- (X2);
	   \draw[-latex] (L1) -- (X3);
	   \draw[-latex] (L1) -- (X4);
	   \draw[-latex] (L1) -- (X5);
	   \draw[-latex] (L1) -- (X6);
	   
	   \draw[-latex] (L2) -- (X1);
	   \draw[-latex] (L2) -- (X2);
	   \draw[-latex] (L2) -- (X3);
	   \draw[-latex] (L2) -- (X4);
	   \draw[-latex] (L2) -- (X5);
	   \draw[-latex] (L2) -- (X6);
	   \draw[-latex] (L2) -- (X7);
	   
	   \draw[-latex] (L3) -- (X2);
	   \draw[-latex] (L3) -- (X3);
	   \draw[-latex] (L3) -- (X4);
	   \draw[-latex] (L3) -- (X5);
	   \draw[-latex] (L3) -- (X6);
	   \draw[-latex] (L3) -- (X7);
    \end{tikzpicture}
        \caption{ $\cG$ }
    \end{subfigure}
    \hfill
    \begin{subfigure}[b]{0.31\textwidth}
    \centering
    \begin{tikzpicture}[scale=.58, line width=0.4pt, inner sep=0.6mm, shorten >=.1pt, shorten <=.1pt]
      \tikzset{
        znode/.style={text=brown},
        dnode/.style={text=blue},
        xnode/.style={text=black},
        every node/.style={align=center},
        edge from parent/.style={draw,->}
      }
\node[znode] (L5) at (-0.5, 4) {{\footnotesize\,$z_5$\,}};
\node[dnode] (X8) at (-1, 2.5) {{\footnotesize\,$d_8$\,}};
\node[dnode] (X9) at (0, 2.5) {{\footnotesize\,$d_9$\,}};
\node[dnode] (X10) at (1, 2.5) {{\footnotesize\,$d_{10}$\,}};

\node[znode] (L1) at (-4, 2.5) {{\footnotesize\,$z_1$\,}};
\node[znode] (L2) at (-3, 2.5) {{\footnotesize\,$z_2$\,}};
\node[znode] (L3) at (-2, 2.5) {{\footnotesize\,$z_3$\,}};

\node[dnode] (X1) at (-4.5, 1) {{\footnotesize\,$d_1$\,}};
\node[dnode] (X2) at (-3.7, 1) {{\footnotesize\,$d_2$\,}};
\node[dnode] (X3) at (-2.9, 1) {{\footnotesize\,$d_3$\,}};
\node[dnode] (X4) at (-2.1, 1) {{\footnotesize\,$d_4$\,}};
\node[dnode] (X5) at (-1.3, 1) {{\footnotesize\,$d_5$\,}};
\node[dnode] (X6) at (-0.5, 1) {{\footnotesize\,$d_6$\,}};
\node[dnode] (X7) at (0.3, 1) {{\footnotesize\,$d_7$\,}};

	   \draw[-latex] (L5) -- (X8);
	   \draw[-latex] (L5) -- (X9);
	   \draw[-latex] (L5) -- (X10);
	   \draw[-latex] (L5) -- (L1);
	   \draw[-latex] (L5) -- (L2);
	   \draw[-latex] (L5) -- (L3);
	   
	   \draw[-latex] (L1) -- (X1);
	   \draw[-latex] (L1) -- (X2);
	   \draw[-latex] (L1) -- (X3);
	   \draw[-latex] (L1) -- (X4);
	   \draw[-latex] (L1) -- (X5);
	   \draw[-latex] (L1) -- (X6);
	   
	   \draw[-latex] (L2) -- (X1);
	   \draw[-latex] (L2) -- (X2);
	   \draw[-latex] (L2) -- (X3);
	   \draw[-latex] (L2) -- (X4);
	   \draw[-latex] (L2) -- (X5);
	   \draw[-latex] (L2) -- (X6);
	   \draw[-latex] (L2) -- (X7);
	   
	   \draw[-latex] (L3) -- (X2);
	   \draw[-latex] (L3) -- (X3);
	   \draw[-latex] (L3) -- (X4);
	   \draw[-latex] (L3) -- (X5);
	   \draw[-latex] (L3) -- (X6);
	   \draw[-latex] (L3) -- (X7);

    \end{tikzpicture}
    \caption{ $\cO_{\min}(\cG)$ }
    \end{subfigure}    
    \hfill
    \begin{subfigure}[b]{0.31\textwidth}
    \centering
    \begin{tikzpicture}[scale=.58, line width=0.4pt, inner sep=0.6mm, shorten >=.1pt, shorten <=.1pt]
      \tikzset{
        znode/.style={text=brown},
        dnode/.style={text=blue},
        xnode/.style={text=black},
        every node/.style={align=center},
        edge from parent/.style={draw,->}
      }

        \node[znode] (L5) at (-0.5, 4) {{\footnotesize\,$z_5$\,}};
\node[dnode] (X8) at (-1, 2.5) {{\footnotesize\,$d_8$\,}};
\node[dnode] (X9) at (0, 2.5) {{\footnotesize\,$d_9$\,}};
\node[dnode] (X10) at (1, 2.5) {{\footnotesize\,$d_{10}$\,}};

\node[znode] (L1) at (-4, 2.5) {{\footnotesize\,$z_1$\,}};
\node[znode] (L2) at (-3, 2.5) {{\footnotesize\,$z_2$\,}};
\node[znode] (L3) at (-2, 2.5) {{\footnotesize\,$z_3$\,}};

\node[dnode] (X1) at (-4.5, 1) {{\footnotesize\,$d_1$\,}};
\node[dnode] (X2) at (-3.7, 1) {{\footnotesize\,$d_2$\,}};
\node[dnode] (X3) at (-2.9, 1) {{\footnotesize\,$d_3$\,}};
\node[dnode] (X4) at (-2.1, 1) {{\footnotesize\,$d_4$\,}};
\node[dnode] (X5) at (-1.3, 1) {{\footnotesize\,$d_5$\,}};
\node[dnode] (X6) at (-0.5, 1) {{\footnotesize\,$d_6$\,}};
\node[dnode] (X7) at (0.3, 1) {{\footnotesize\,$d_7$\,}};

	   \draw[-latex] (L5) -- (X8);
	   \draw[-latex] (L5) -- (X9);
	   \draw[-latex] (L5) -- (X10);
	   \draw[-latex] (L5) -- (L1);
	   \draw[-latex] (L5) -- (L2);
	   \draw[-latex] (L5) -- (L3);
	   
	   \draw[-latex] (L1) -- (X1);
	   \draw[-latex] (L1) -- (X2);
	   \draw[-latex] (L1) -- (X3);
	   \draw[-latex] (L1) -- (X4);
	   \draw[-latex] (L1) -- (X5);
	   \draw[-latex] (L1) -- (X6);
	   \draw[-latex] (L1) -- (X7);
	   
	   \draw[-latex] (L2) -- (X1);
	   \draw[-latex] (L2) -- (X2);
	   \draw[-latex] (L2) -- (X3);
	   \draw[-latex] (L2) -- (X4);
	   \draw[-latex] (L2) -- (X5);
	   \draw[-latex] (L2) -- (X6);
	   \draw[-latex] (L2) -- (X7);
	   
	   \draw[-latex] (L3) -- (X1);
	   \draw[-latex] (L3) -- (X2);
	   \draw[-latex] (L3) -- (X3);
	   \draw[-latex] (L3) -- (X4);
	   \draw[-latex] (L3) -- (X5);
	   \draw[-latex] (L3) -- (X6);
	   \draw[-latex] (L3) -- (X7);
      
    \end{tikzpicture}
    \caption{ $\cO_{s}(\cO_{\min}(\cG))$ }
    \end{subfigure}
    \caption{
        \textbf{
        The discrete graph $\cG$ satisfies conditions in Theorem~\ref{thm:identical_support} (i.e., identical supports).}
        After applying the minimal-graph operator to the graph $\cG$, $z_4$ is merged to its parent $z_5$, and the rank constraints do not change.
        After applying the skeleton operator to the graph in (b), $z_1$ has an edge to $d_7$ and $z_3$ has an edge to $d_1$.
        We adopt this example from \citet{huang2022latent}.
    }
    \label{fig:minimal_operator}
\end{figure}


\begin{figure}[h]
\begin{algorithm}[H]
\small
  \caption{
    \textbf{The overall procedure for Rank-based Discrete Latent Causal Model Discovery.}~\citep{dong2023versatile}
    We denote the latent nodes in an atomic cover $\mA$ as $\mA_{L}$ and all observed nodes in the model $\cG$ as $\mX_{\cG}$.
    We use blue color to highlight our modifications needed for Theorem~\ref{thm:hierarchical_model_identification} and Theorem~\ref{thm:identical_support} respectively.
  }
  \label{alg:all}
  \SetAlgoLined
  \SetKwInOut{Input}{Input}
  \SetKwInOut{Output}{Output}
  \Input{Samples from all $n$ observed variables $\mathbf{X}_{\mathcal{G}}$}
  \Output{Markov equivalence class $\graphp$}
  \SetKwProg{Def}{def}{:}{}
  \Def{LatentVariableCausalDiscovery($\set{X}_{\graph}$)}
  {
    {Phase 1: $\graphp$ = FindCISkeleton($\set{X}_{\graph}$)} (Algorithm~\ref{alg:phase1})\;
    \For{ Each $\setset{Q}$, a group of overlapping maximal cliques, in $\graphp$} 
    {
      Set an empty graph $\graphpp$,      $\set{X}_\setset{Q}=\cup_{\set{Q}\in\setset{Q}}\set{Q}$,
      $\set{N}_\setset{Q}=\{\node{N}: \exists \node{X}\in \set{X}_\setset{Q}~\text{s.t.}~ 
      \node{N},\node{X}~\text{are adjacent in}~\graphp \}$\;
      {Phase 2: $\graphpp$ = FindCausalClusters($\graphpp$, $\set{X}_\setset{Q}\cup\set{N}_\setset{Q}$) (Algorithm~\ref{alg:phase2})\;}
      {Phase 3: $\graphpp$ = RefineCausalClusters($\graphpp$, $\set{X}_\setset{Q}\cup\set{N}_\setset{Q}$) (Algorithm~\ref{alg:phase3})\;}
      Transfer the estimated DAG $\graphpp$ to the Markov equivalence class and update $\graphp$ by $\graphpp$\;
    }
    Orient remaining causal directions that can be inferred from v structures\;
    \ours{Theorem~\ref{thm:hierarchical_model_identification}: replace each cover $\mA$ to a discrete variable $z$ with $\abs{\text{Supp}(z)} := \abs{\mA_{L}}$ \& update $\cG'$} \label{step:convert_1}\;
    \ours{Theorem~\ref{thm:identical_support}: convert each cover $\mA$ to $\log_{K} (\abs{\mA_{L}})$ discrete variables of cardinality $K$ \& update $\cG'$} \label{step:convert_2}\;
    \Return{$\graphp$}
  }
\end{algorithm}
\end{figure}

\begin{algorithm}[h]
    \small
     \caption{
     \textbf{Phase1: FindCISkeleton~\citep{dong2023versatile}} (Stage 1 of PC~\citep{spirtes2001causation}). We denote the joint probability table between two sets $\mA$ and $\mB$ as $\mP_{\mA, \mB}$, the adjacent nodes as $\text{Adj}, $the non-negative rank with $\text{rank}^{+}$, and the collection of d-separation sets as $\text{Sepset}$.
     We use blue color to highlight our modifications.
     }
     \label{alg:phase1}
     \SetAlgoLined
     \SetKwInOut{Input}{Input}
     \SetKwInOut{Output}{Output}
     \Input{Samples from observed variables $\mX_{\cG}$}
     \Output{CI skeleton $\graphp$}
     \SetKwProg{Def}{def}{:}{}
     \Def{\text{Stage1PC}($\set{X}_{\graph}$)}
     {
        Initialize a complete undirected graph $\graphp$ on $\set{X}_{\graph}$\;
        \Repeat{no adjacent $\node{X},\node{Y}$ s.t., $|\text{Adj}_{\graphp}(\node{X})\backslash\{\node{Y}\}|<n$}
        {
          \Repeat{all $\node{X},\node{Y}$ s.t., 
          $|\text{Adj}_{\graphp}(\node{X})\backslash\{\node{Y}\}|\geq n$ and all 
          $\set{S}\subseteq \text{Adj}_{\graphp}(\node{X})\backslash\{\node{Y}\}$, $|\set{S}|=n$, tested.}
          {
          Select an ordered pair $\node{X},\node{Y}$ that are adjacent in $\graphp$, s.t., 
          $|\text{Adj}_{\graphp}(\node{X})\backslash\{\node{Y}\}|\geq n$\;
          Select a subset $\set{S}\subseteq \text{Adj}_{\graphp}(\node{X})\backslash\{\node{Y}\}$
          s.t., $|\set{S}|=n$\;
          If $\ours{\text{rank}^{+}(\mP_{\{\node{X}\}\cup\set{S}, \{\node{Y}\}\cup\set{S}}) }=|\set{S}|$,
          delete the edge between $\node{X}$ and $\node{Y}$ from $\graphp$ 
          and record $\set{S}$ in
          $\text{Sepset}(\node{X},\node{Y})$ and $\text{Sepset}(\node{Y},\node{X})$.\;
          }
          n:=n+1\;
        }
       \Return $\graphp$
     }
   \end{algorithm}

\begin{algorithm}[h]
 \small
  \caption{
    \textbf{Phase2: FindCausalClusters}~\citep{dong2023versatile}.
    We use $ ||\cdot|| $ to denote the number of all elements in a set of sets.
    We use the term "nodes" to refer to dummy variables that represent states rather than causal variables in the intermediate graph. 
    We use blue color to highlight our modifications.
  }
  \label{alg:phase2}
  \SetAlgoLined
  \SetKwInOut{Input}{Input}
  \SetKwInOut{Output}{Output}
  \Input{Samples from $n$ observed variables $\set{X}_{\graph}$}
  \Output{Graph $\graphp$}
  \SetKwProg{Def}{def}{:}{}
  \Def{\text{FindCausalClusters}($\graphp$, $\set{X}_{\graph}$)}
  {
    Active set $\setset{S} \gets \setset{X}_\graph=\{\{\node{X_1}\},...,\{\node{X_n}\}\}$, $k \gets 1$ \tcp*{$\setset{S}$ is a set of covers}\
    \Repeat{$k$ is sufficiently large}
    {
      $\graphp$, $\setset{S}$, $\text{found}$ = Search($\graphp$, $\setset{S}$, $\set{X}_{\graph}$, $k$) \tcp*{Only when nothing can be found }\
      If $\text{found}=1$ then $k\gets 1$ else $k\gets k+1$ \tcp*{udner current $k$ do we add $k$ by 1}
    }
    \Return $\graphp$\;
  }
  \SetKwProg{Def}{def}{:}{}
  \Def{\text{Search}($\graphp$, $\setset{S}$, $\set{X}_{\graph}$, $k$)}
  {
    Rank deficiency set $\setsetset{D}=\{\}$ \tcp*{To store rank deficient combinations}\
    \For{$\setset{T} \in \text{PowerSet}(\setset{S})$ (from $\setset{S}$ to $\emptyset$)} 
    {
      $\setset{S'} \gets (\setset{S} \backslash \setset{T}) \cup (\cup_{\set{T} \in \setset{T}} \purechildrenp(\set{T}))$ \tcp*{\footnotesize Unfold $\setset{S}$ to get $\setset{S'}$}
      \For{$t=k$ to $0$} 
      {
        \Repeat{all $\setset{X}$  exhausted}
        {
          Draw a set of $t$ observed covers $\setset{X} \subset \mathcal{S'}\cap \setset{X}_\graph$\;
          \Repeat{all $\setset{C}$ exhausted}
          {
            Draw a set of covers $\setset{C} \subset \setset{S'}\backslash \setset{X}$, s.t.,
            $||\setset{C}||=k-t+1$
            and get $\setset{N} \gets \setset{S'}\backslash (\setset{X} \cup \setset{C})$\;
            
            \lIf{$\ours{\text{rank}^{+}(\mP_{\setset{C}\cup\setset{X},\setset{N}\cup\setset{X})}} = k$ and  NoCollider($\setset{C}$, $\setset{X}$, $\setset{N})$}
            {
              Add $\setset{C}$ to $\setsetset{D}$
            }
          }
          \If{$\setsetset{D}\neq \emptyset$}
          {
            \For{$\setset{D}_i \in \setsetset{D}$}
            {
              \lIf {$|\parentsp(\setset{D}_i)\cup\mathbf{X}|=k$}
              {
                $\mathbf{P}\gets \parentsp(\setset{D}_i)\cup\mathbf{X}$
              }
              \lElse
              {
                Create new latent \ours{nodes} $\set{L}$, s.t., $\mathbf{P} \gets \mathbf{L}\cup\parentsp(\setset{D}_i)\cup\mathbf{X}$ 
                 and $|\set{L}|=k-|\parentsp(\setset{D}_i)\cup\mathbf{X}|$
              }
              Update $\graphp$ by taking elements of $\setset{D}_i$ as the pure children of $\set{P}$\; 
              \lIf {$\set{P}$ is atomic}
              {
                Update $\setset{S} \gets (\setset{S} \backslash \setset{D}_i) \cup \set{P}$
              }
            }
            \Return $\graphp$, $\setset{S}$, $\text{True}$ \tcp*{Return to search with $k=1$}
          }
        }
      }
    } 
    \Return $\graphp$, $\setset{S}$, $\text{False}$ \tcp*{Return to search with $k\gets k+1$}
  }
 \vspace{-1mm} 
\end{algorithm}
\setlength{\textfloatsep}{7pt}

\begin{algorithm}[h]    
\small
  \caption{\textbf{NoCollider}~\citep{dong2023versatile}. We use blue color to highlight our modifications.}
  \SetAlgoLined
  \SetKwInOut{Input}{Input}
  \SetKwInOut{Output}{Output}
  \Input{$\setset{C}$, $\setset{X}$, $\setset{N}$}
  \Output{Whether  there exists $\set{O}\in\setset{C}$ s.t., $\set{O}$ is a collider of $\setset{C}\backslash\{\set{O}\}$ and $\setset{N}$}
  \SetKwProg{Def}{def}{:}{}
    \label{alg:checkcollider}
  \Def{\text{NoCollider}($\setset{C}$, $\setset{X}$, $\setset{N}$)}
  {
    \For{$c=1$ to $|\setset{C}|-1$} 
    {
      Draw $\setset{C'} \subset \setset{C}$ s.t., $|\setset{C'}|=c$\;
      \Repeat{all $\setset{C'}$ exhausted}
      {
        \lIf{$\ours{\text{rank}^{+}(\mP_{\setset{C'}\cup\setset{X},\setset{N}\cup\setset{X}})} <  ||\setset{C'}\cup\setset{X}||$}
        {
          \Return False
        }
      } 
    }
  \Return True
  }
  \vspace{-1mm}
\end{algorithm}
\setlength{\textfloatsep}{7pt}


\begin{algorithm}[h]
\small
  \caption{Phase3: RefineCausalClusters}
  \label{alg:phase3}
  \SetKwInOut{Input}{Input}
  \SetKwInOut{Output}{Output}
  \Input{Graph $\graph'$}
  \Output{Refined graph $\graph'$}
  \SetKwProg{Def}{def}{:}{}
  \Def{RefineCausalCLusters($\graphp$, $\set{X}_\graph$)}
  {
    \Repeat{No more $\set{V}$ found and all $\set{V}$ exhausted}
    {
      Draw an atomic cover $\set{V}$ from $\graphp$\;
      Delete $\set{V}$, neighbours of $\set{V}$ that are latent, and all relating edges from $\graphp$ to get $\hat{\graph}$\;
      $\graphp =\text{FindCausalClusters}(\hat{\graph},\set{X}_\graph)$\; 
    }
    \Return{$\graphp$}
  }
\end{algorithm}

\section{Synthetic Data Experiments} \label{app:synthetic}

\paragraph{Data-generating processes.} 
For the hierarchical model $\cG$, we randomly sample the parameters for each causal module, i.e., conditional distributions $p( z_{i} | \text{Parents} (z_{i}) )$, according to a Dirichlet distribution over the states of $z_{i}$ with coefficient $1$.
For simplicity, we follow conditions in Theorem~\ref{thm:identical_support} and set the support size of latent variables to $2$.
Like \citet{kivva2021learning}, we build the generating process from $\dd$ to the observed variables $\mathbf{x}$ (i.e., graph $\Gamma$) by a Gaussian mixture model where each state of the discrete subspace corresponds to one component/mode in the mixture model. We truncate the support of each component to improve the invertibility (Condition~\ref{cond:discrete_component_conditions}-\ref{asmp:invertibility}).
The graphs are exhibited in Figure~\ref{fig:synthetic_with_x} and Figure~\ref{fig:synthetic_without_x}.

\paragraph{Metrics.}
We adopt F1 score (i.e., $ \frac{ 2 \text{Precision} \cdot \text{Recall} }{ \text{Precision} + \text{Recall} } $ ) to assess the graph learning results~\citep{dong2023versatile}. 
We compute recall and precision by checking whether the estimated model correctly retrieves edges in the true causal graph.  
Ranging between $0$ to $1$, high F1 scores indicate the search algorithm can recover ground-truth causal graphs.
We repeat each experiment over at least $5$ random seeds.

\paragraph{Implementation details.} 
Our method comprises two stages: 1) learning the bottom-level discrete variable $\dd$ and the bipartite graph $\Gamma$ from the observed variable $\xx$; 2) learning the latent hierarchical model $\cG$ given the bottom-level discrete variable $\dd$ discovered in 1).
For stage 1), we follow the clustering implementation in \citet{kivva2021learning} under the same hyper-parameter setup as in the original implementation.
For stage 2), we apply Algorithm~\ref{alg:all} to learn the hierarchical model $\cG$. 
We opt for Step~\ref{step:convert_2} in Algorithm~\ref{alg:all} because we evaluate graphs with binary latent variables that meet the conditions of Theorem~\ref{thm:identical_support}. 
Following \citet{anandkumar2012learning,mazaheri2023causal}, we perform conventional rank computation rather than non-negative rank computation and find this replacement satisfactory.
We conduct our experiments on a cluster of 64 CPUs.
All experiments can be finished within half an hour. 
The search algorithm implementation is adapted from \citet{dong2023versatile}.

\paragraph{Graphical structures.} 
Table~\ref{tab:synthetic_with_x} and Table~\ref{tab:synthetic_without_x} correspond to  Figure~\ref{fig:synthetic_with_x} and Figure~\ref{fig:synthetic_without_x} respectively.
As mentioned above, the graphs meet the conditions of Theorem~\ref{thm:identical_support} with the latent variable cardinality equal to two (binary variables).

\begin{figure}
\centering
 \includegraphics[width=1.0\columnwidth]{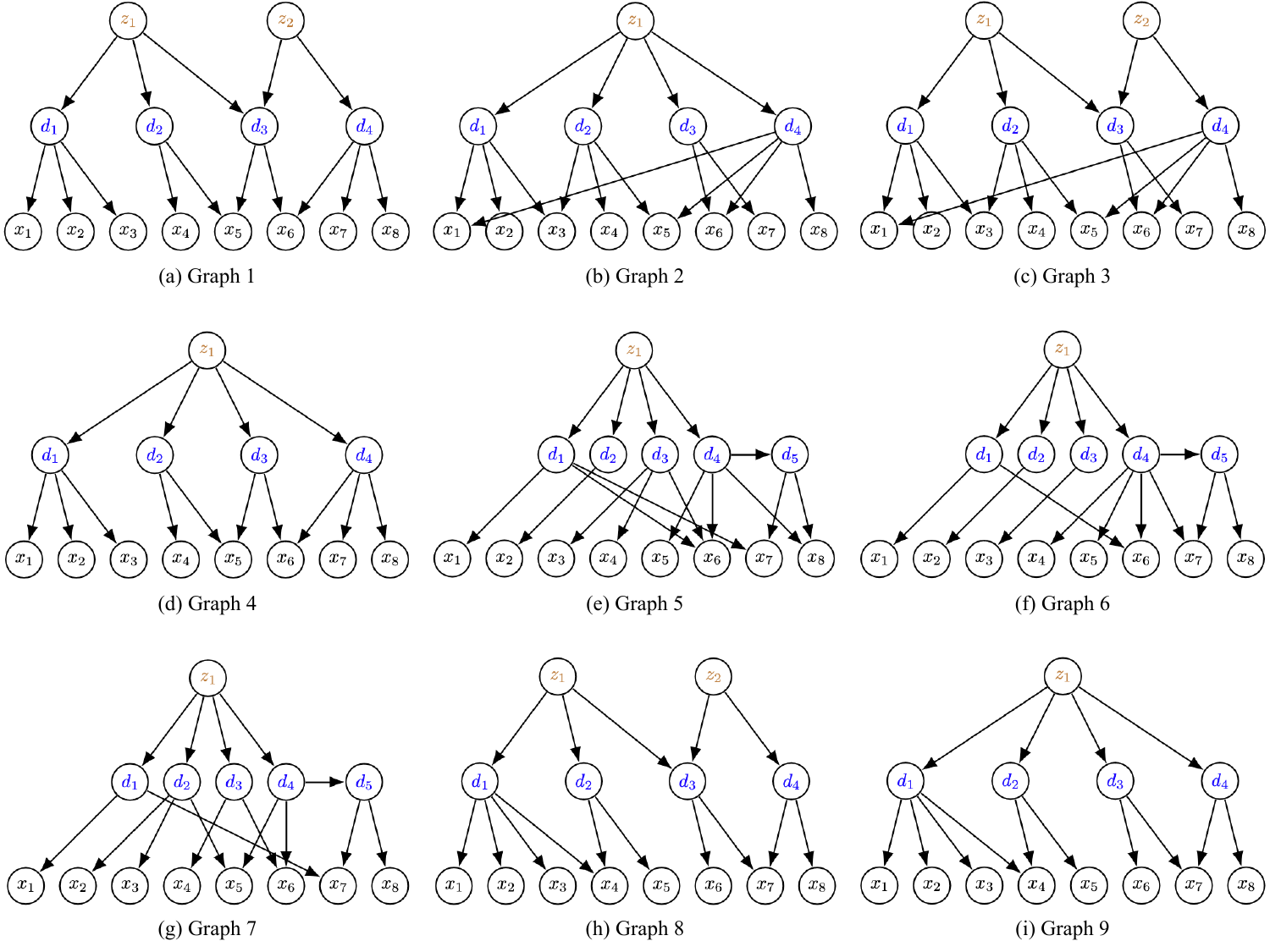}
\caption{
    \textbf{Causal graphs evaluated in Table~\ref{tab:synthetic_with_x}.}
    We denote the observed variables with $x$, the bottom-level latent discrete variables with $d$, and the high-level latent discrete variables with $z$.
}
\label{fig:synthetic_with_x}
\end{figure}

\begin{figure}
\centering
 \includegraphics[width=1.0\columnwidth]{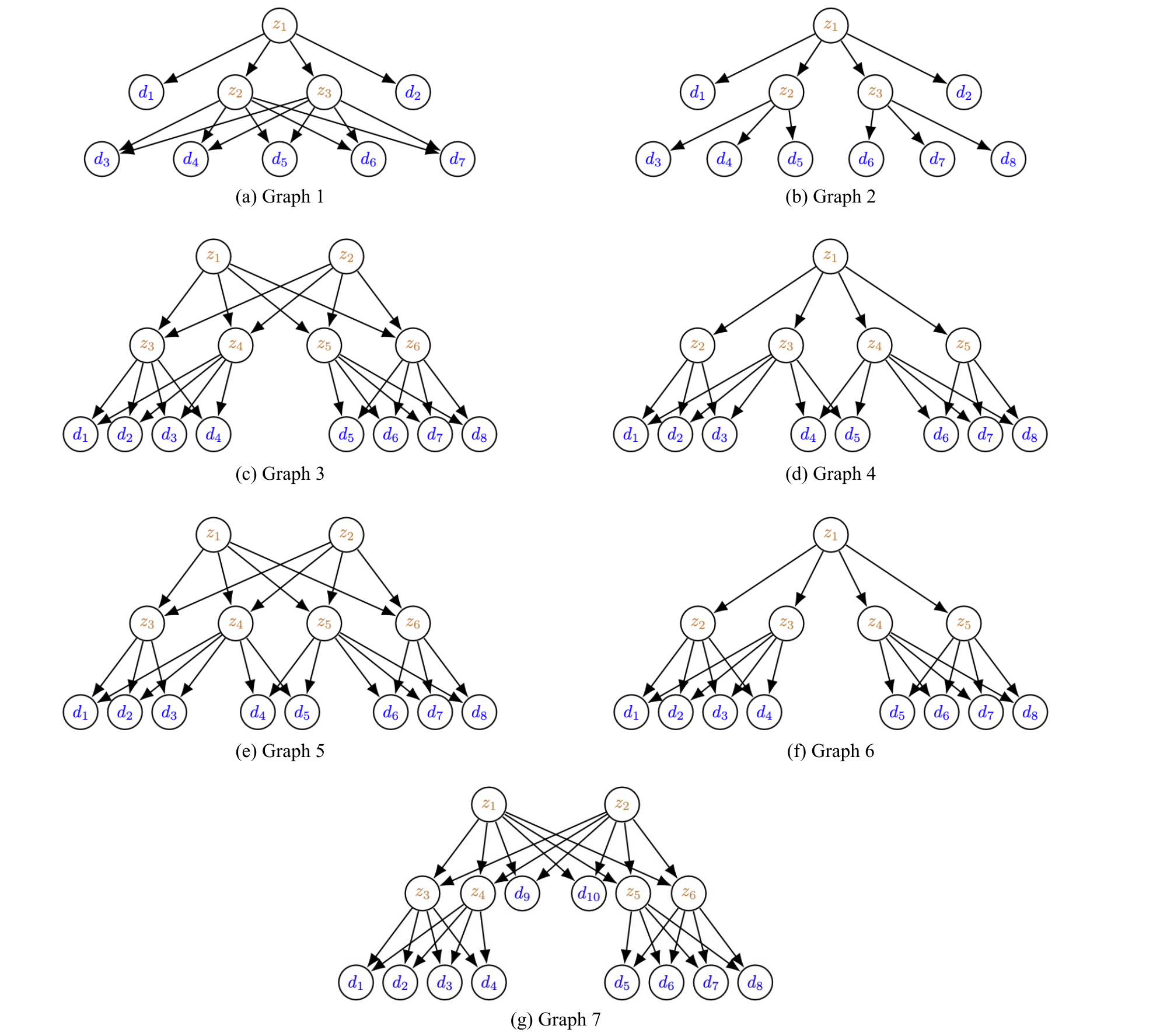}

\caption{
    \textbf{Causal graphs evaluated in Table~\ref{tab:synthetic_without_x}.}
    We denote the bottom-level latent discrete variables with $d$, and high-level latent discrete variables with $z$. Since baselines cannot extract discrete subspaces, we directly feed the algorithms the bottom-level discrete variables and test their structure learning performances.
}
\label{fig:synthetic_without_x}
\end{figure}

\section{Real-world Experiments} \label{app:implementation}

\subsection{Implementation Details}
We employ the pre-trained latent diffusion model~\citep{rombach2021highresolution} SD v1.4 across all our experiments.
The inference process consists of $50$ steps.

For experiments in Section~\ref{subsec:hierarchical_ordering}, we inject concepts by appending keywords to the original prompt.
For instance, we inject the concept pair (``sketch'', ``wide eyes'') in Figure~\ref{fig:hierarchical_orders} as follows.
For the inference steps $0 - 10$, we feed the text prompt ``A picture of a person'', for steps $10-20$, ``a photo of a person, in a sketch style'', and for steps $ 20 - 50 $, ``a photo of a person, in a sketch style, with wide eyes''.
For the reverse injection order (injecting ``wide eyes'' before ``sketch''), we inject the following prompts at the three-step stages:
``A picture of a person'', ``a photo of a person, with wide eyes'', and  ``a photo of a person, with wide eyes, in a sketch style".


For experiments understanding the UNet's latent presentation (Figure~\ref{fig:semantic_unet}), we adopt the open-sourced code of \citet{park2023understanding}.

For the attention sparsity experiment (Figure~\ref{fig:attention_sparisty}), we randomly generate images with the pre-train latent diffusion model and record their attention score across layers.
To compute the relative sparsity, we select the threshold as $ 1 / 4096 $ and compute the proportion of the attention scores over this threshold.
For the attention visualization, we randomly select a head from the last attention module in the UNet architecture.

We follow the implementation of \citet{gandikota2023sliders} to train concept sliders of various ranks.
We adopt their evaluation protocol to obtain CLIP and LPIPS scores over $20$ randomly sampled images for each rank, concept, and scale combination.
We evaluate ranks in $ \{ 2,4, 8 \} $ and scales $ \{ 1, 2, 3, 4, 5 \} $.
The rank selection technique, inspired by \citet{ding2023sparse}, involves multiplying each LoRA's inner dimension with a scalar parameter and imposing $\ell_{0}$ penalty on these scalar parameters.
The weight on the $\ell_{0}$ penalty is selected from $ \{1e-1, 1e-2, 1e-3, 1e-4, 1e-5\} $. 
We repeat each run for at least three random seeds.
The code can be found \href{https://github.com/Lingjing-Kong/sparsity_diffusion_editing.git}{here}.

We conduct all our experiments on 2 Nvidia L40 GPUs.
Each image inference takes the same time as in standard SD v1.4 (i.e., within two minutes).
Each concept slider in Figure~\ref{fig:method_evaluation} takes around half an hour to train.

\subsection{Sparsity in the Hierarchical Model} \label{subsec:attention_sparsity}
To verify the sparse structure condition in Theorem~\ref{thm:hierarchical_model_identification}, we view the attention sparsity in the LD model as an indicator of the connectivity between a specific hierarchical level and the bottom concept level.
Figure~\ref{fig:attention_sparisty} visualizes the attention sparsity of an LD model over diffusion steps and specific attention patterns in the model.
We observe that the sparsity increases as the generative process progresses, which reflects that the connectivity between the hierarchical level ($\zdd{\cS_{t}}$) and the bottom level variable ($\dd$) becomes sparse and more local as we march down the hierarchical structure, which indicates a gradual localization of the concept. 

\begin{figure*}
     \centering
     \begin{subfigure}[b]{0.21\textwidth}
         \centering
         \includegraphics[width=\textwidth]{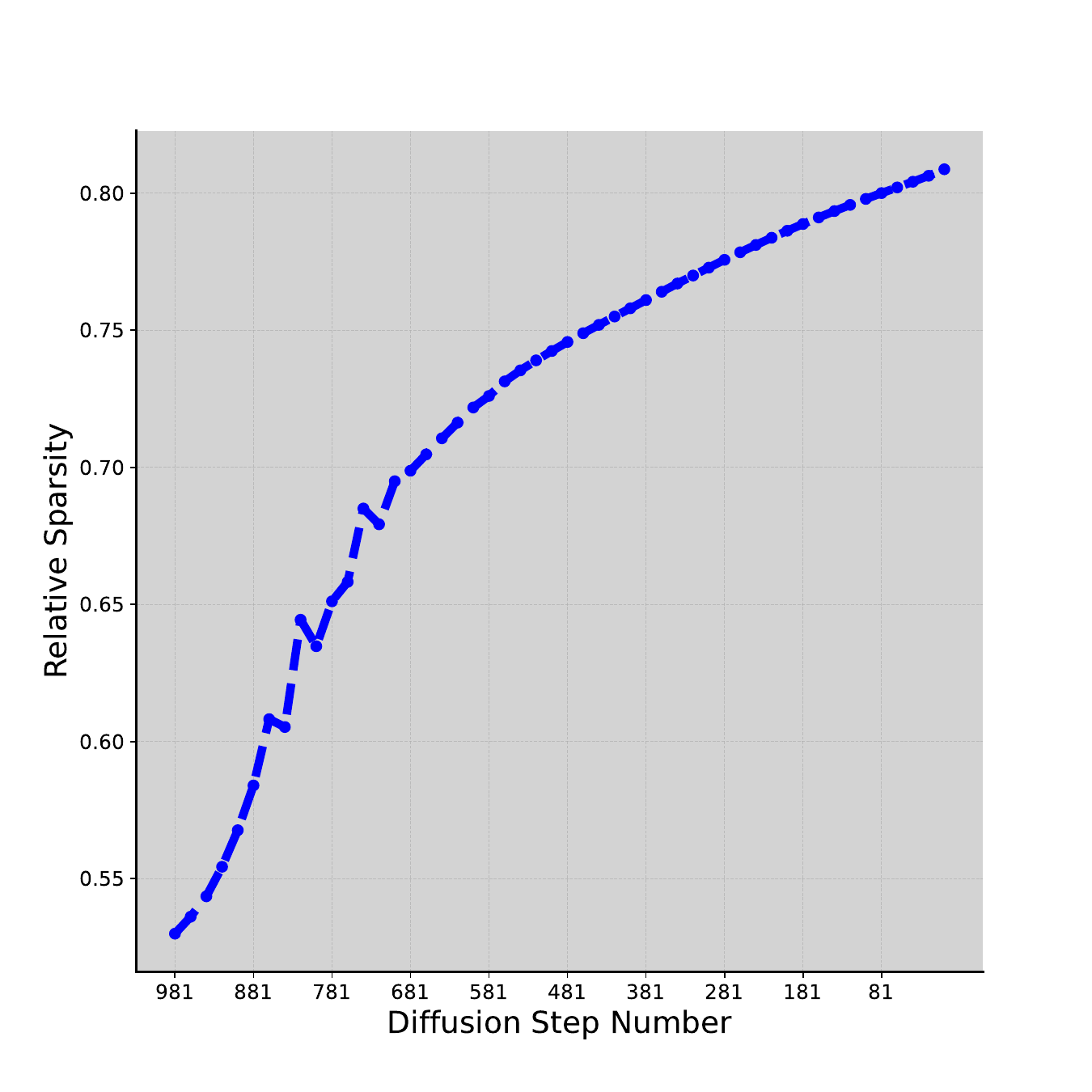}
         \caption{\small Attention sparsity.}
     \end{subfigure}
     \hfill
     \begin{subfigure}[b]{0.25\textwidth}
         \centering
         \includegraphics[width=\textwidth]{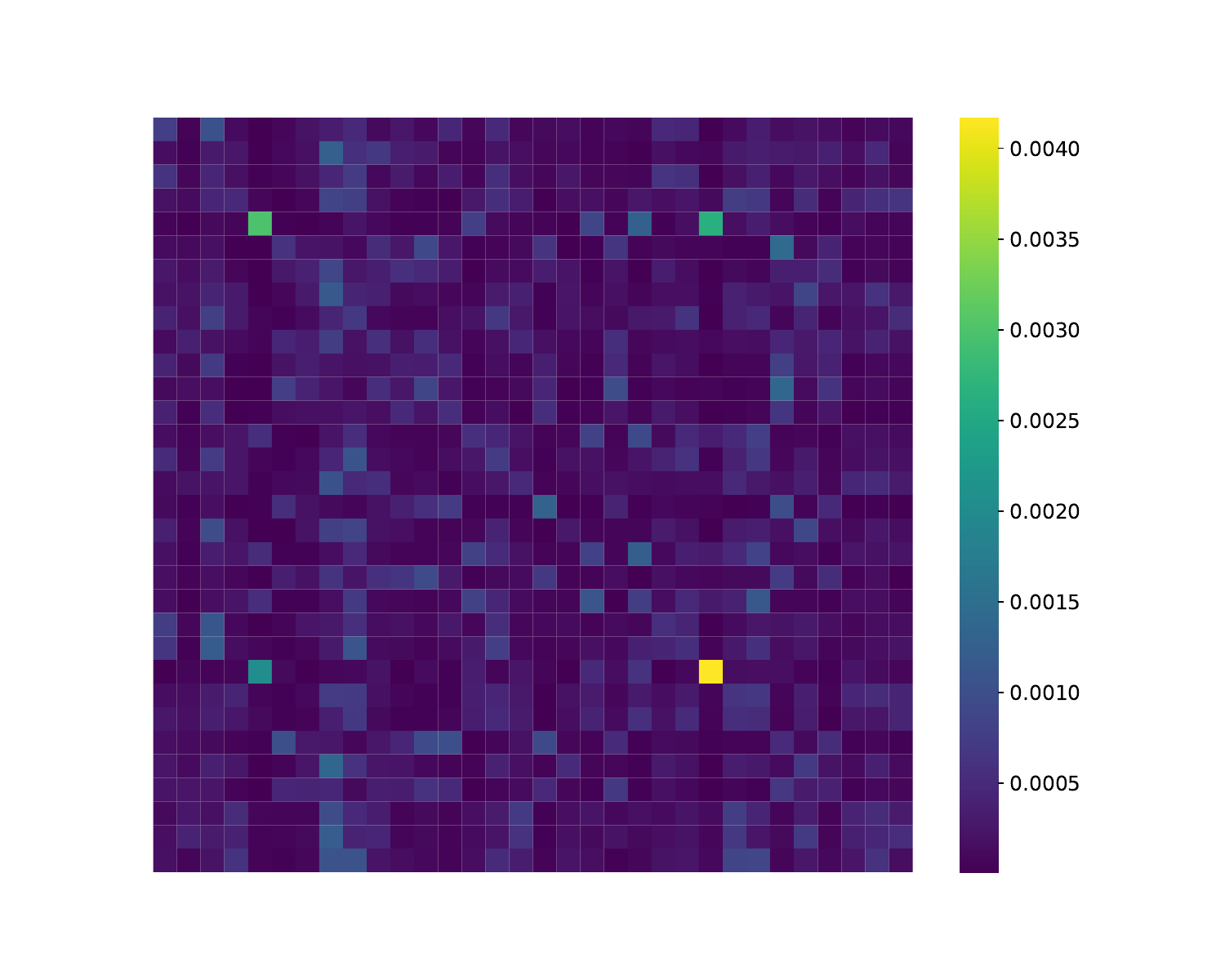}
         \caption{\small Early step (index 1).}
     \end{subfigure}
     \hfill
     \begin{subfigure}[b]{0.25\textwidth}
         \centering
         \includegraphics[width=\textwidth]{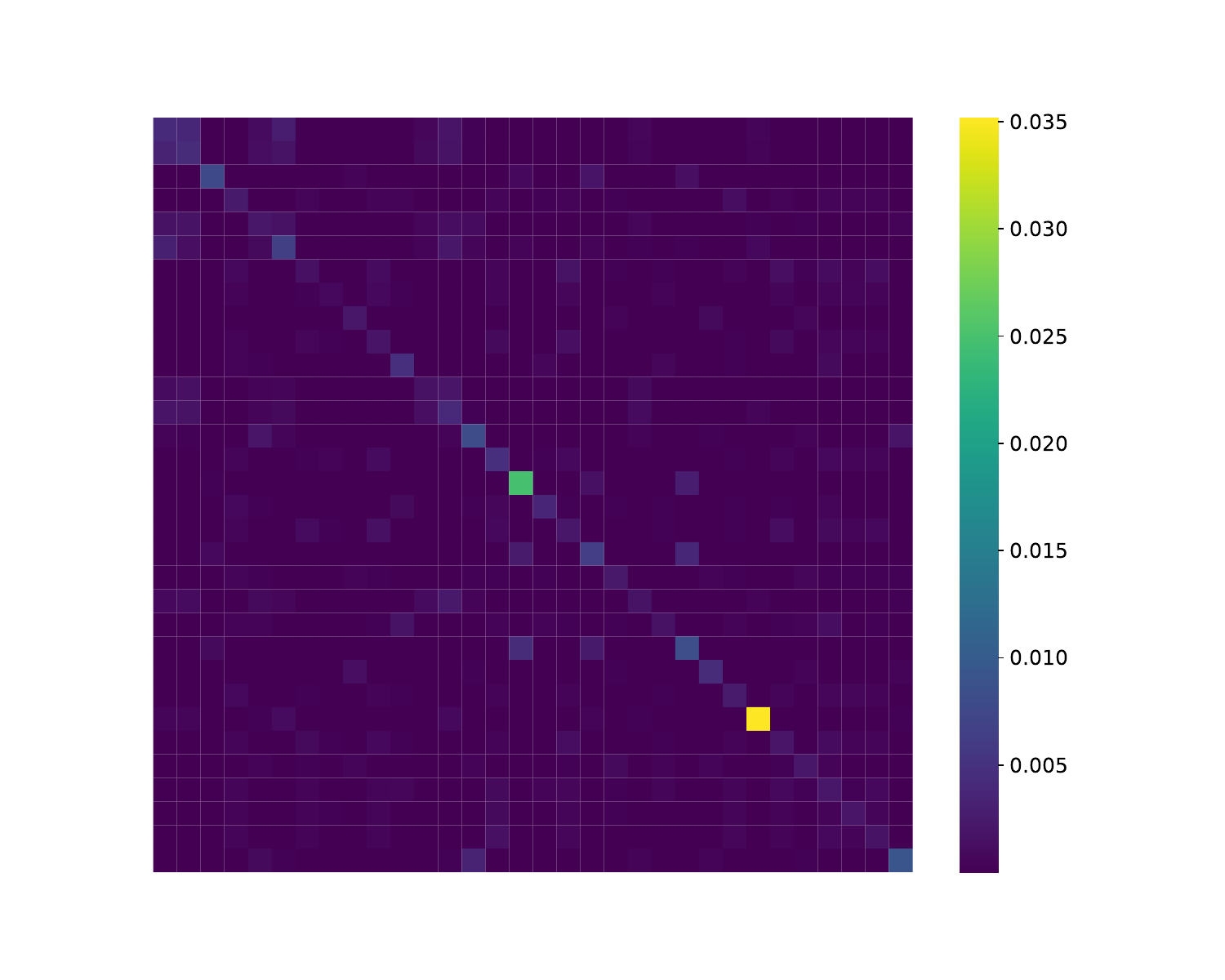}
         \caption{\small Middle step (index 481).}
     \end{subfigure}
     \hfill
     \begin{subfigure}[b]{0.25\textwidth}
         \centering
         \includegraphics[width=\textwidth]{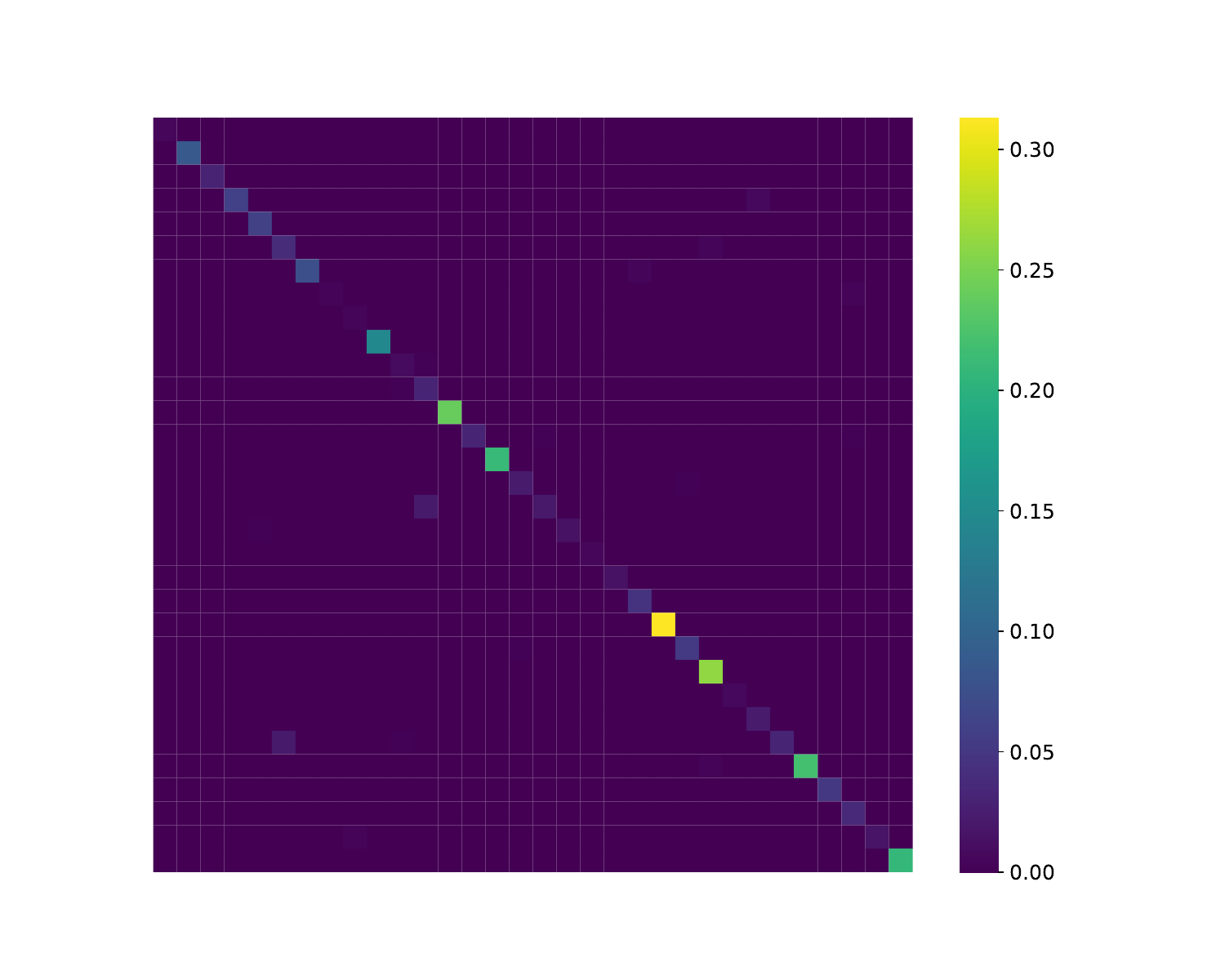}
         \caption{\small Late step (index 981).}
     \end{subfigure}
    \caption{
         \textbf{Sparsity patterns in latent diffusion models' attention.}
        We compute the proportions of the attention scores lower than a fixed threshold over the entire model.
        We can observe that the sparsity increases greatly towards small timesteps, i.e., the lower levels of the hierarchical model, which verifies our theory.
    }
    \label{fig:attention_sparisty}
\end{figure*}

\subsection{Discovering Hierarchical Orders from Diffusion Models} \label{app:causal_order_exps}

We provide further evidence that latent representations at different diffusion steps correspond to different levels of the hierarchical causal model.
We select concept pairs, each with higher-level and lower-level concepts. For example, in (``sketch,'' ``wide eyes''), ``sketch'' is more global, while ``wide eyes'' is more local. We alter the text prompt during diffusion generation for concept injection, appending ``in a sketch style'' to inject ``sketch'' (see Appendix~\ref{app:implementation} for prompts).
In Figure~\ref{fig:hierarchical_orders}, global concepts are successfully injected at early diffusion steps and local ones at late steps (top row). Reversing this order fails, as shown in the bottom row. 
For example, injecting ``sketch'' early and ``wide eyes'' late renders both correctly, but the global concept ``sketch'' is absent under the reverse injection order. 
This supports our theory that concepts are hierarchically organized, with higher-level concepts related to earlier diffusion steps.

\begin{figure*}[t]
    \centering
    \includegraphics[width=\textwidth]{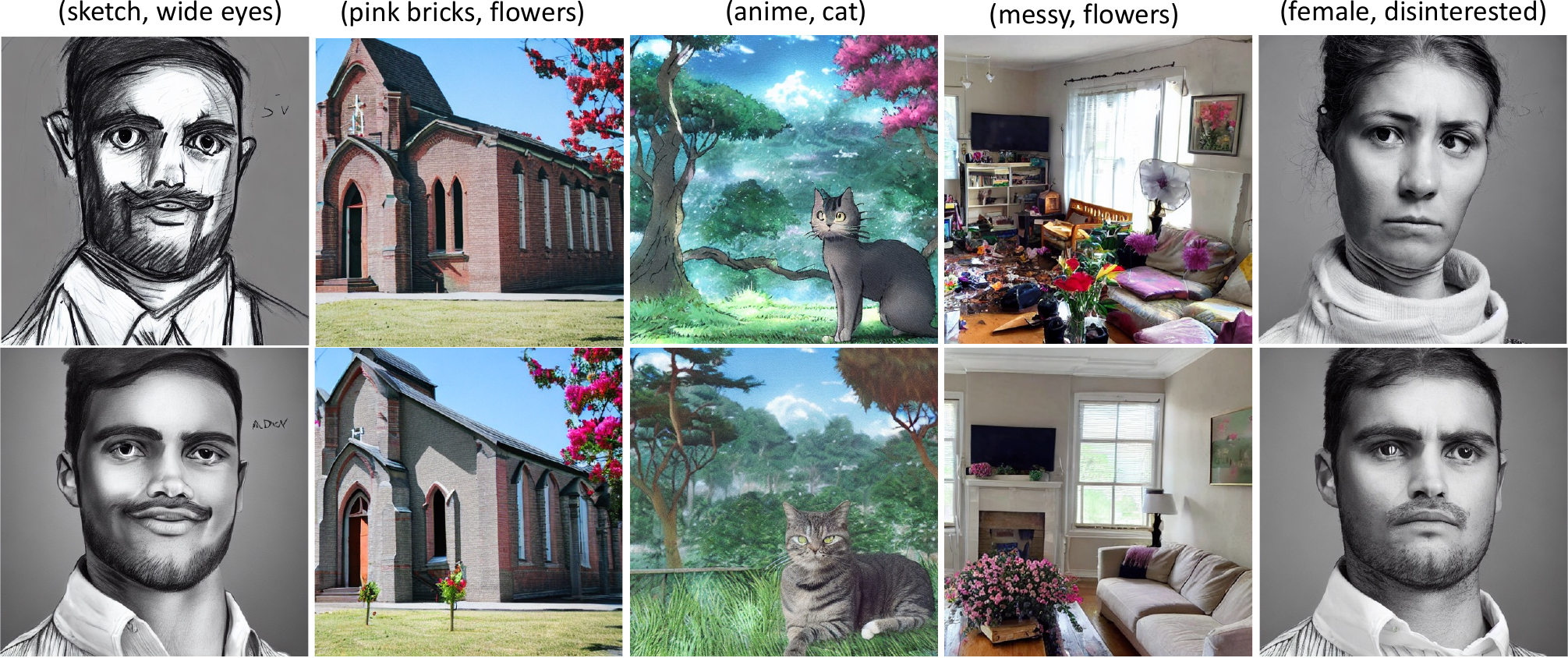}
    \caption{
        \textbf{Hierarchical Concept Ordering.} 
        We inject concepts of distinct abstraction levels into the generating process at different time steps.
        In the top row, the concept injection follows the hierarchical order, which renders injected concepts faithfully.
        The bottom row reverses the hierarchical order and cannot incorporate concepts properly. More examples in Figure~\ref{fig:hierarchical_order_more}.
    }
    \label{fig:hierarchical_orders}
\end{figure*}

\subsection{Causal Sparsity for Concept Extraction} \label{app:sparsity_results}

Figure~\ref{fig:sparse_concepts_visualization} shows that indeed concepts at different abstraction levels have desirable representations at different ranks.
For instance, the concept of bright weather is appropriately conveyed by a rank-$2$ LoRA and higher-rank LoRAs alter the background.
The same observation occurs to other concepts, where inadequate ranks fail to capture the concept faithfully and unnecessary ranks entangle the target concept with other attributes. 

Figure~\ref{fig:method_evaluation} presents the CLIP and LPIPS evaluation for the baseline and our approach, where the CLIP score evaluates the alignment between the image and the target description and the LPIPS score measures the structure change between the edited image and the original image. 
We can observe that under the sparsity constraint, our approach attains the highest CLIP score and the lowest LPIPS score when compared with the baselines of several ranks, indicating a higher level of alignment and a lower level of undesirable entanglement.

\begin{figure}[t]
    \centering
    \includegraphics[width=\columnwidth]{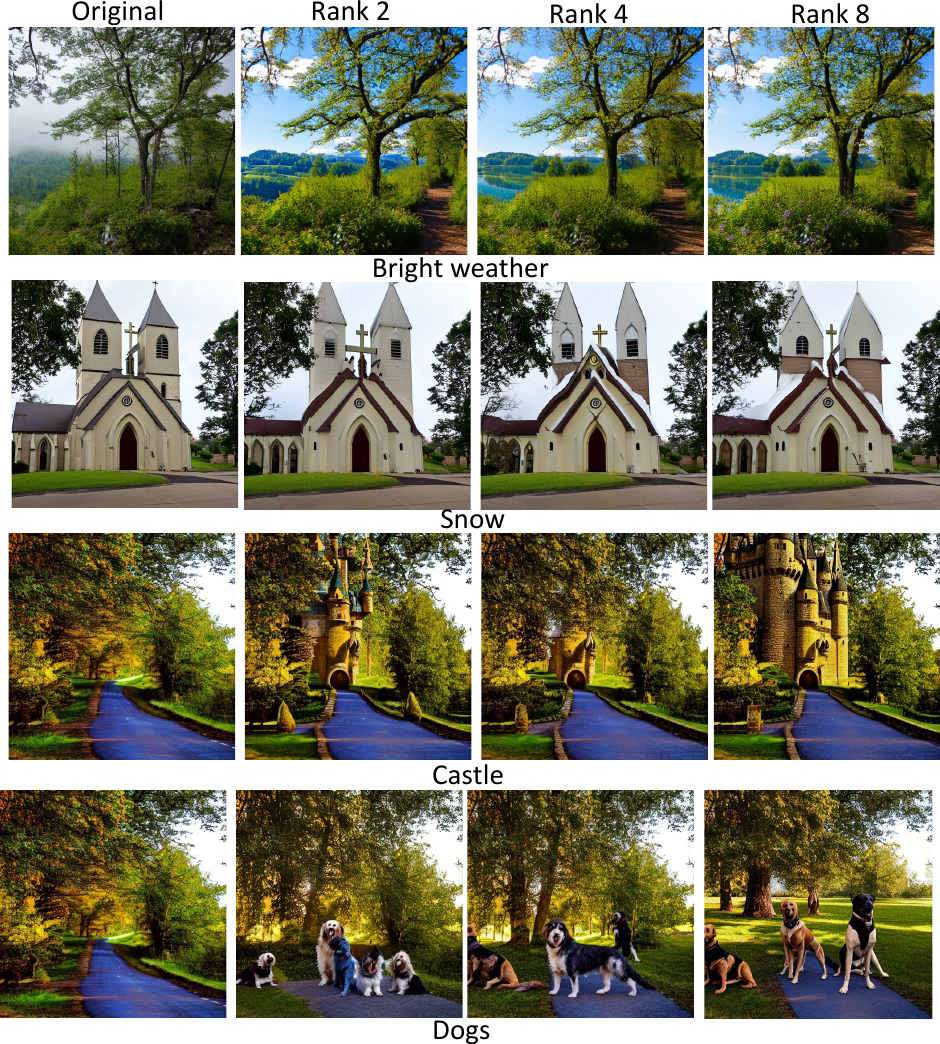}
    \caption{
        \textbf{Concepts have varying levels of sparsity.}
        We show that concepts of various abstraction levels correspond to different sparsity levels.
        For instance, bright weather is appropriately conveyed by a rank-$2$ LoRA and higher-rank LoRAs alter the background.
        Inadequate ranks fail to capture the concept faithfully and unnecessary ranks entangle the target concept with other attributes.
    }
    \label{fig:sparse_concepts_visualization}
\end{figure}

\begin{figure}[t]
     \centering
     \begin{subfigure}[b]{.8\columnwidth}
         \centering
         \includegraphics[width=\textwidth]{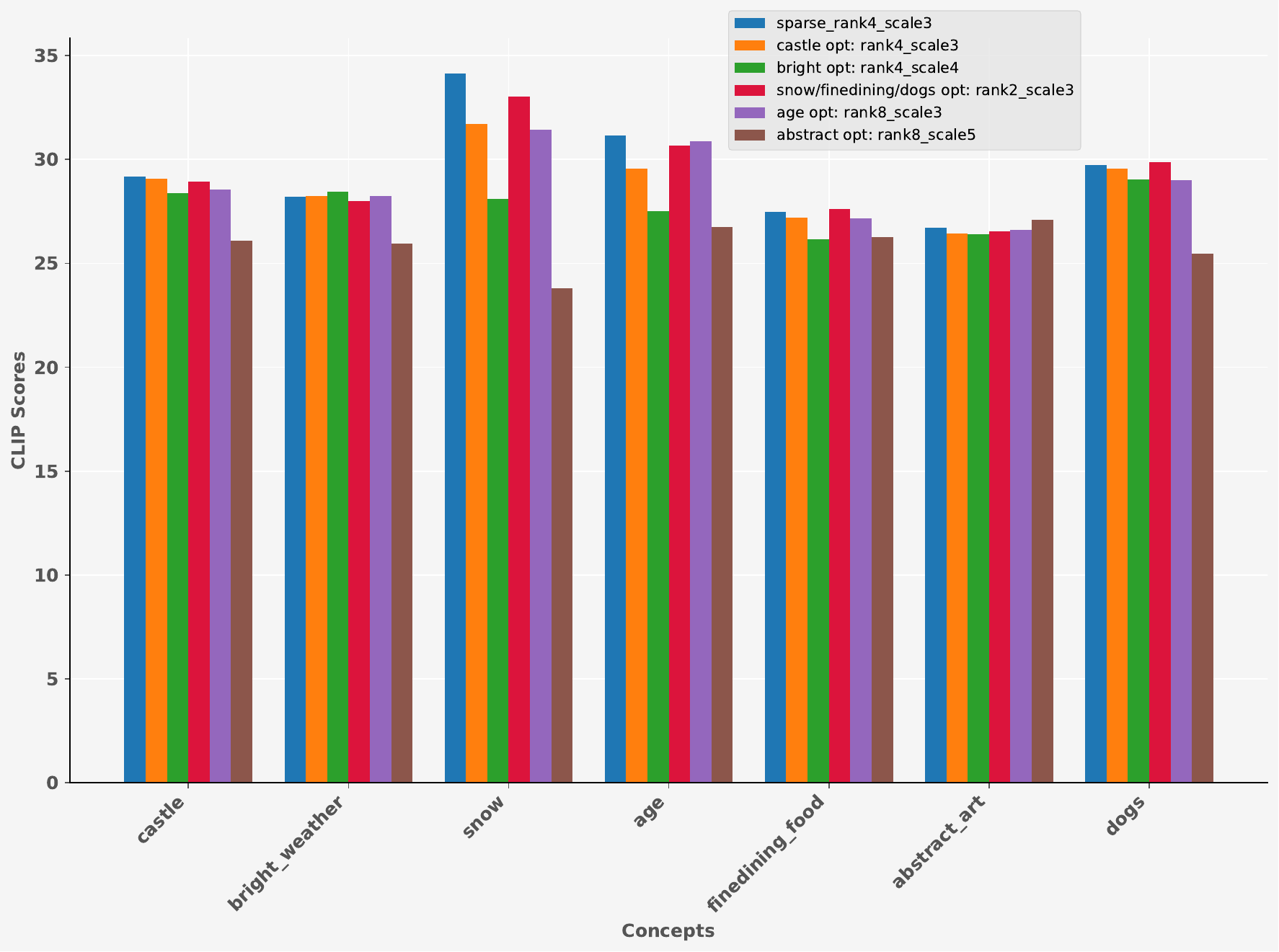}
         \caption{CLIP scores.}
     \end{subfigure}
     \hfill
     \begin{subfigure}[b]{.8\columnwidth}
         \centering
         \includegraphics[width=\textwidth]{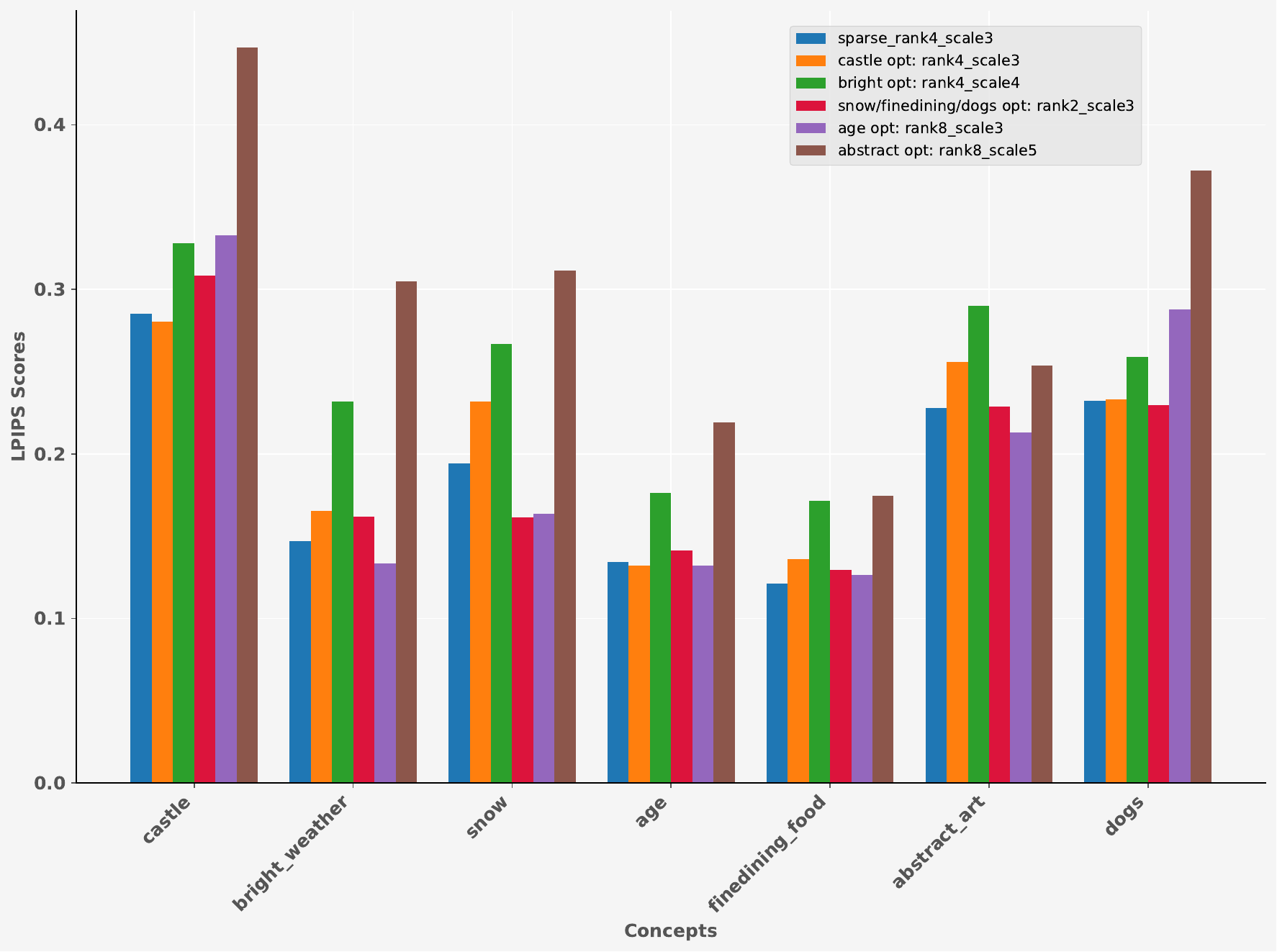}
         \caption{LPIPS scores.}
     \end{subfigure}
    \caption{
        \textbf{CLIP/LPIPS evaluation.}
        We evaluate our approach and baselines at individual rank constraints.
        A high CLIP score is favorable as it indicates semantic alignment.
        A low LPIPS score is more favorable as it indicates minimal excessive changes.
        We compare our method ``sparse'' with the optimal fixed rank setting on each concept. For instance, ``castle opt: rank4\_scale3'' indicates that the optimal setting for the concept ``castle'' is the LoRA of rank $4$ and scale $3$.
        With a adaptive rank selection, our approach outperforms or keeps up with the optimal fixed setting across different concepts.
        We repeat each training over three random seeds.
    }
    \label{fig:method_evaluation}
\end{figure}

\subsection{More Examples}

\begin{figure}[t]
    \centering
    \includegraphics[width=\textwidth]{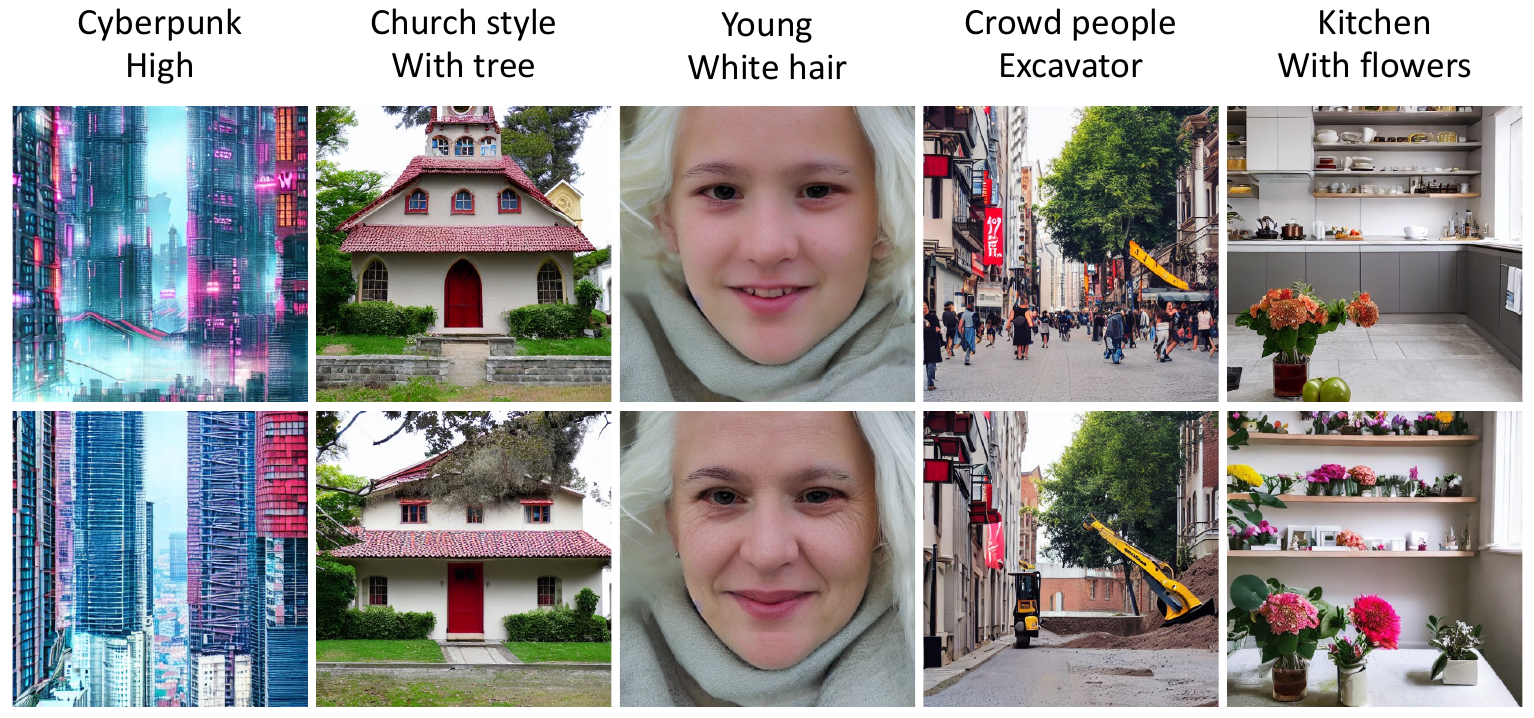}
    \caption{\small \textbf{More examples for Figure~\ref{fig:hierarchical_orders}}.}
    \label{fig:hierarchical_order_more}
\end{figure}

\begin{figure}[t]
    \centering
    \includegraphics[width=\textwidth]{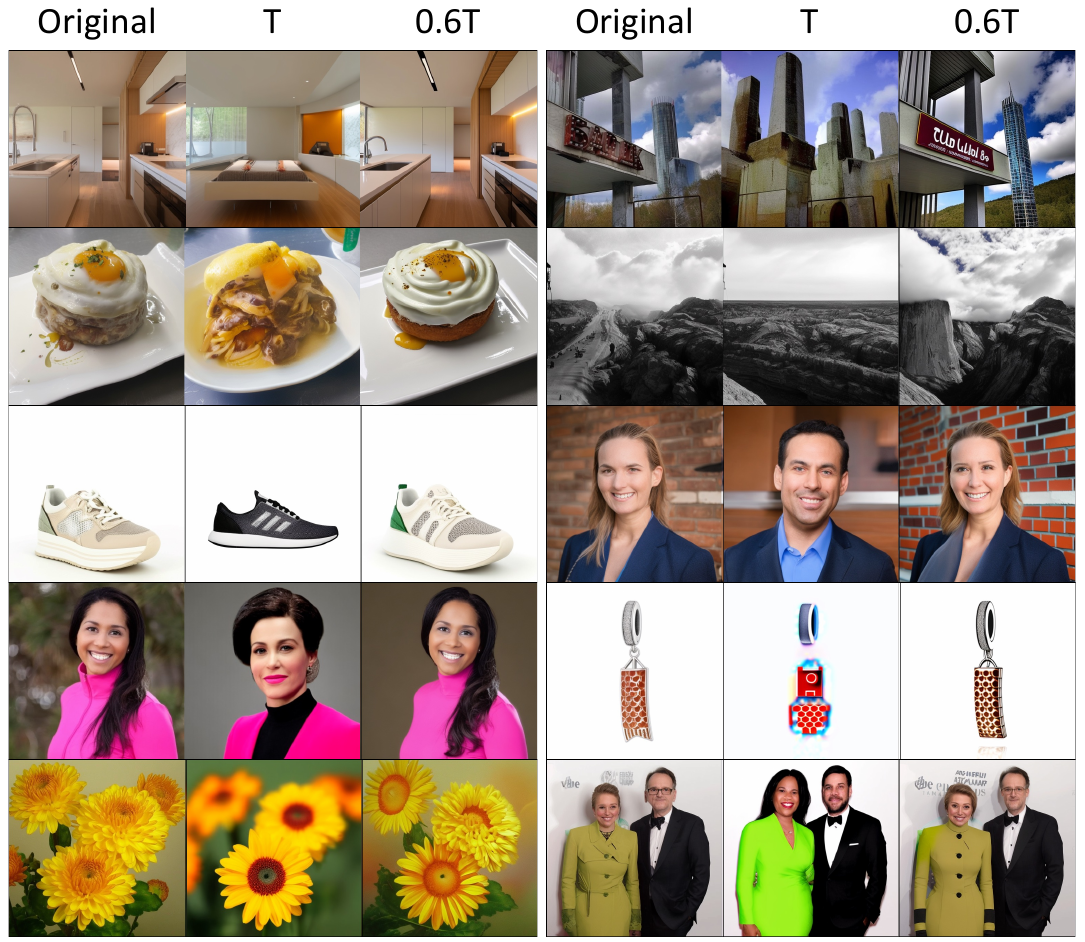}
    \caption{\small \textbf{More examples for Figure~\ref{fig:semantic_unet}}.}
    \label{fig:semantic_unet_more}
\end{figure}



\end{document}